\documentclass[10pt,twocolumn,letterpaper]{article}

\usepackage[pagenumbers]{preprint}

\usepackage{amsfonts}
\usepackage{amsthm}
\usepackage{bm}
\usepackage{cancel}
\usepackage{overpic}
\usepackage{calc}
\usepackage{algorithm}
\usepackage{algpseudocode}
\usepackage{tikz}
\usepackage[export]{adjustbox}
\usepackage{dblfloatfix}
\usepackage{mathtools}
\usepackage{cuted}
\usepackage{booktabs}
\usepackage{xcolor}
\usepackage{colortbl}
\usepackage{multirow}
\usepackage{adjustbox}
\usepackage{float}
\usepackage{thmtools}
\usepackage{thm-restate}
\definecolor{vfirst}{RGB}{162, 221, 196}   %
\definecolor{vsecond}{RGB}{222, 244, 212}  %
\providecommand{\first}[1]{\cellcolor{vfirst}\textbf{#1}}
\providecommand{\second}[1]{\cellcolor{vsecond}#1}

\newlength{\figheight}

\newcommand{\SSplats}{\emph{StochasticSplats}\xspace}
\newcommand{\GPS}{\emph{Gaussian Point Splatting}\xspace}
 
\newcommand{\Lvar}{\mathcal{L}_{\text{var}}}     %
\newcommand{\lvar}{\lambda_{\text{var}}}         %
 
\NewDocumentCommand{\Cspat}{o m}{\hat{C}^{\IfValueT{#1}{#1,}\text{spatial}}_{#2}}
 
\newcommand{\wgt}[2]{w_{#1\rightarrow #2}}       %
\newcommand{\occ}[3]{o^{#1}_{#2\rightarrow #3}}  %
 
\DeclareMathOperator{\lerp}{lerp}                %

\definecolor{linkblue}{rgb}{0.21,0.49,0.74}
\usepackage[pagebackref,breaklinks,colorlinks,allcolors=linkblue]{hyperref}

\title{SteadySplats: Resampling of Low-Variance Gaussians for High-Fidelity Stochastic Rendering}
\author{Felix Windisch*\\
Graz University of Technology\\
{\tt\small felix.windisch@tugraz.at}
\and
Thomas K\"ohler*\\
Graz University of Technology\\
{\tt\small t.koehler@tugraz.at}
\and
Lukas Radl\\
Graz University of Technology\\
{\tt\small lukas.radl@tugraz.at}
\and
Chris Wyman\\
NVIDIA\\
{\tt\small chris.wyman@acm.org}
\and
Georgios Kopanas\\
Google DeepMind\\
{\tt\small gkopanas@google.com}
\and
Bernhard Kerbl\\
University of Copenhagen\\
{\tt\small beke@di.ku.dk}
\and
Markus Steinberger\\
Graz University of Technology\\
{\tt\small steinberger@tugraz.at}
}

\begin{document}

\twocolumn[{%
\renewcommand\twocolumn[1][]{#1}%
\maketitle

}]

{\renewcommand{\thefootnote}{\fnsymbol{footnote}}\footnotetext[1]{Both authors contributed equally to this work.}}

\noindent
\begin{figure}
    \centering
    \includegraphics[width=\linewidth]{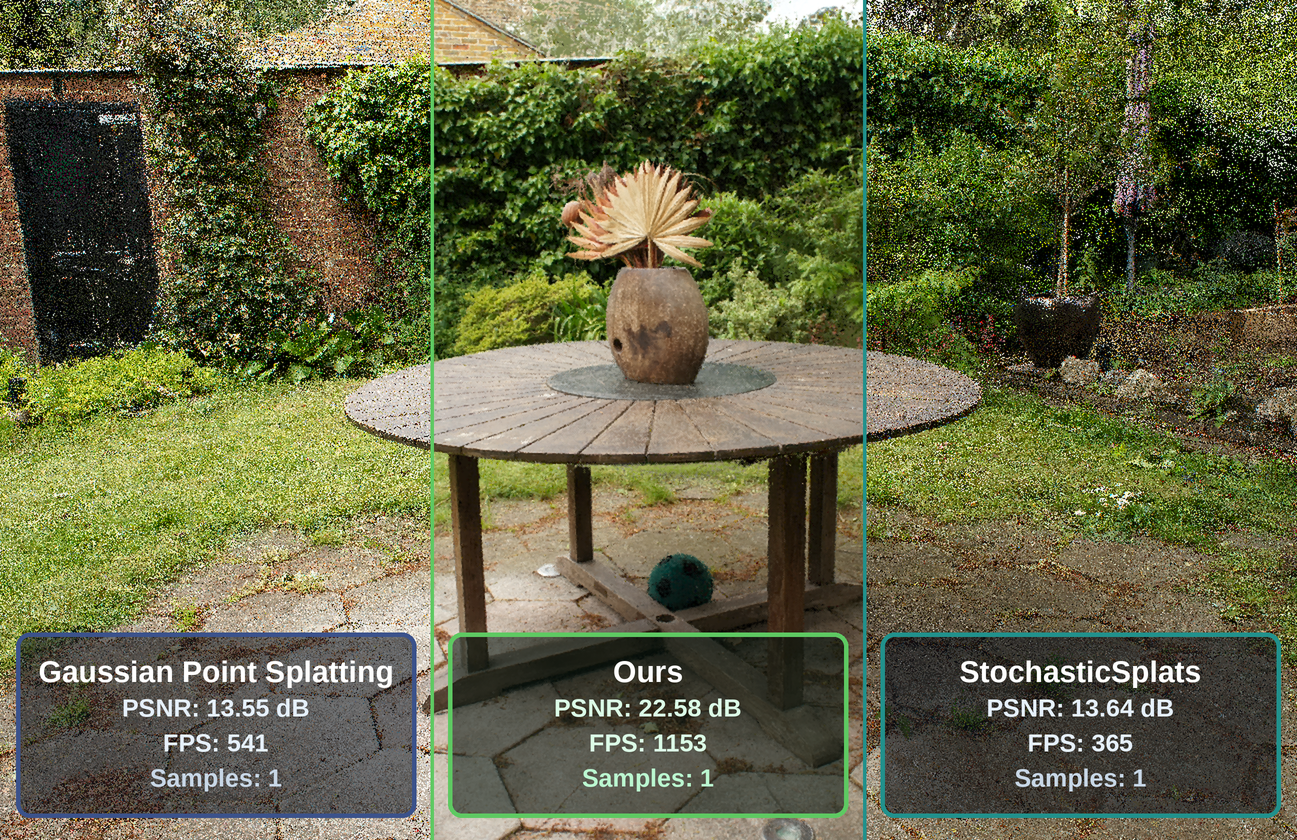}
    \caption{Through optimized training and inference, our stochastic OIT renderer achieves significantly better image quality than prior art with the same number of random samples. Our approach makes single-sample rendering a viable candidate for applications that demand high quality, as well as extremely high frame rates. Measurements were performed using each method's original implementation.}
    \label{fig:teaser}
\end{figure}%

\begin{abstract}
Stochastic order-independent transparency enables efficient and elegant rendering of primitive-based radiance fields like 3D Gaussian Splatting models, but remains impractical due to the inherent visible noise in the output. We propose a principled approach to minimize high-frequency noise, addressing its sources at the representation and image synthesis level. During stochastic rendering, our history-based spatial resampling scheme drastically accelerates image convergence, while temporal importance resampling ensures coherence under camera movement. During training, a color regularizer implicitly reduces the variance along view rays in the 3DGS models. With these properties, our optimized, Vulkan-based renderer effectively mitigates output noise at low and high sample counts, achieving a substantial 13~dB PSNR increase in quality over previous stochastic methods at 1 sample per pixel and quickly converging to sorted 3DGS with an average L1 error of less than $10^{-4}$.
\end{abstract}

\section{Introduction}
\label{sec:intro}

\definecolor{ViridisDark}{HTML}{218f8d}   %
\definecolor{ViridisGreen}{HTML}{5EC962}  %

The rapid evolution of novel view synthesis has been largely driven by the advent of radiance fields and, more recently, primitive-based representations in particular. 
Methods such as 3D Gaussian Splatting (3DGS)~\cite{kerbl20233dgs} have demonstrated exceptional visual fidelity and real-time rendering capabilities by modeling complex scenes as unstructured collections of volumetric primitives. 
Traditionally, fast rendering of these representations relies on alpha-blending sorted primitives using a tile-based rasterizer.

Stochastic order-independent transparency (OIT) presents an elegant, efficient alternative for rendering primitive-based radiance fields. 
By probabilistically sampling visible primitives rather than sorting them, stochastic OIT dramatically simplifies and accelerates the rendering pipeline. 
However, despite its structural advantages, this approach has remained largely impractical for novel view synthesis. 
The stochastic nature of the visibility evaluation inherently introduces high-frequency noise in the resulting images (\cf~\cref{fig:teaser}). 
Across novel view trajectories, this uncorrelated noise manifests itself as severe temporal flickering and ``boiling'' artifacts~\cite{schied2017spatiotemporal}, to which human visual perception is sensitive. 
Mitigating this issue and suppressing these artifacts below visible thresholds requires accumulating hundreds of samples per pixel, which directly negates the performance advantages of sorting-free rendering~\cite{kheradmand2025stochasticsplats}.

In this work, we propose solutions that make stochastic order-independent transparency practical for high-fidelity rendering by aggressively tackling noise at both ends of the pipeline: training and inference. All of our contributions are implemented in a high-performance Vulkan-based renderer:
\begin{itemize}
    \item \textbf{Color Variance Regularization:} A training objective that penalizes high-variance primitive configurations, implicitly conditioning the 3D model to produce less noise during stochastic sampling.

    \item \textbf{History-Based Spatial Resampling:} An efficient, inference-time resampling technique to massively accelerate convergence of image synthesis with minimal bias.

    \item \textbf{Temporal Consistency:} We demonstrate that under mild assumptions, spatio-temporal resampling can be implemented as a recursive exponential moving average filter to efficiently mitigate \emph{temporal} artifacts. Stratifying samples using per-primitive Spatio-Temporal Blue Noise further improves convergence.
\end{itemize}
We demonstrate that our combined approach successfully renders clean, high-fidelity images at low sample counts, achieving higher frame rates than vanilla 3DGS with near-indistinguishable quality.

\section{Related Work}
\label{sec:related_work}

\paragraph{3D Gaussian Splatting and the Sorting Bottleneck.}
3D Gaussian Splatting (3DGS) has emerged as a highly efficient representation for real-time novel view synthesis~\cite{kerbl20233dgs}. Traditionally, it relies on a tile-based rasterizer that strictly sorts semi-transparent primitives based on a per-primitive depth to perform deterministic alpha-compositing. However, this global sorting operation is computationally expensive, scales poorly with increasing scene complexity, especially on low-end hardware, and introduces temporal discontinuities known as ``popping'' artifacts when slight camera movements alter the primitive order. Several methods have attempted to address these limitations. \emph{StopThePop}~\cite{radl2024stopthepop} eliminates popping through computationally intensive per-pixel sorting based on the maximal Gaussian response. Alternatively, exact volumetric intermixing can be achieved by ray tracing of constant density ellipsoids~\cite{mai2024ever} or by reconstructing the transmittance function from analytic moments~\cite{muller2025moment}. Other approaches approximate alpha-blending via proxy-guided weighted blending~\cite{liu2025duplex} or order-independent accumulation with a K-buffer~\cite{hahlbohm2025hybrid} to bypass full radix sorting.
Weighted sum rendering~\cite{hou2025sort, du2026mobile} completely eliminates sorting during training and inference at the cost of reduced image quality.

\paragraph{Order-Independent and Stochastic Transparency.}
To fundamentally circumvent the sorting bottleneck, recent literature has adapted stochastic order-independent transparency (OIT). Originally introduced to 3D graphics by Enderton et al.~\cite{enderton2010stochastic}, stochastic transparency replaces sorted alpha-blending with a randomized Monte Carlo estimator. \emph{StochasticSplats}~\cite{kheradmand2025stochasticsplats} successfully integrates this estimator into 3DGS by keeping each fragment as an opaque sample with probability equal to its alpha value, completely bypassing depth-sorting while accurately blending overlapping Gaussians. Similarly, \emph{Gaussian Point Splatting}~\cite{rijsdijk2026gaussian} treats Gaussians as discrete point clouds, sampling pixel-sized opaque points independently and splatting them via 64-bit atomics to achieve massive parallelism. While these approaches eliminate the need for sorting, they inherently introduce severe high-frequency noise into the rendered images and must rely on expensive sample accumulation or temporal anti-aliasing (TAA)~\cite{karis2014high,yang2020survey}. 
Stochastic approaches can also be applied to ray tracing 3D Gaussians for relightable reconstruction~\cite{xu2026stochastic} and semi-transparency~\cite{sun2025stochastic}. \emph{Stochastic Gaussian Splatting}~\cite{aira2025modeling} moves the stochasticity into the Gaussian parameters themselves in order to estimate reconstruction uncertainty.

\paragraph{Denoising and Resampling.}
Early real-time denoising relied heavily on analytical spatial and temporal filters, most notably Spatiotemporal Variance-Guided Filtering (SVGF)~\cite{schied2017spatiotemporal} to accumulate temporal history while preserving edges. As deep learning matured, denoising networks such as kernel prediction networks~\cite{mildenhall2018burst, vogels2018denoising, fan2021real}, neural bilateral grids~\cite{meng2020real}, and Deep Learning Super Sampling~\cite{nvidia_dlss_developer, nvidia_dlss_rayreconstruction} gained popularity. 
Spatiotemporal Reservoir Resampling (ReSTIR)~\cite{wyman2023gentle} and descendant methods~\cite{zhang2024area, kettunen2023conditional, ouyang2021restir, hedstrom2025restir, zeng2025restir} directly improve the quality of the Monte Carlo integration in a principled manner before the denoising step. By leveraging temporal history and spatial neighbors, the impact of each individual sample is amplified.

\section{Preliminaries}
\label{sec:preliminaries}

In this section, we detail how the stochastic paradigm diverges from traditional 3DGS rendering.

\subsection{3D Gaussian Splatting (3DGS)}
3D Gaussian Splatting represents complex scenes as an unstructured collection of anisotropic 3D Gaussians~\cite{kerbl20233dgs}. Each Gaussian primitive $i$ is defined by a set of optimizable parameters $\{\mu_i, \Sigma_i, \sigma_i, c_i\}$, which encode its mean position ($\mu_i$), 3D covariance matrix ($\Sigma_i$), opacity ($\sigma_i$), and view-dependent color ($c_i$), typically modeled using spherical harmonics. 

To synthesize a novel view, these 3D Gaussians are ``splatted'' to a screen-space 2D Gaussian distribution $G^{2D}$ and depth $z_i$ using an affine approximation for projection~\cite{zwicker2001EWA}. We denote by $\mathcal{G}(x,y)$ the set of Gaussians intersecting the pixel $(x,y)$, by $\alpha_i(x,y) = \sigma_i \cdot G_i^{2D}(x,y)$ the alpha-blending factor of Gaussian $i$, and by $C_{x,y}$ the final pixel radiance, which is classically computed by evaluating the volume rendering equation through deterministic alpha-compositing~\cite{kerbl20233dgs}:
\begin{equation}
    C_{x,y} = \sum_{i \in \mathcal{G}(x,y)} c_i \alpha_i(x,y) \underbrace{\prod_{\mathclap{k \in \mathcal{G}(x,y) \mid z_k < z_i}} (1-\alpha_k(x,y))}_{T_i(x,y)} \, ,
    \label{eq:blending}
\end{equation}
where $z_k < z_i$ denotes all intersecting primitives located in front of primitive $i$ along the ray. In standard 3DGS, primitives are sorted by $z_i$ prior to blending, which simplifies the transmittance $T_i(x,y)$ to $\prod_{k<i} (1-\alpha_k(x,y))$.

While this sorted alpha-blending ensures a stable and noise-free image, the global sorting operation imposes significant computational overhead and memory bandwidth bottlenecks, limiting scalability, especially on low-power devices, and producing temporal popping artifacts~\cite{radl2024stopthepop}.

\subsection{Stochastic and Order-Independent 3DGS}
\label{sec:stochastic}

\begin{figure}[t]
    \centering
    \includegraphics[width=\linewidth]{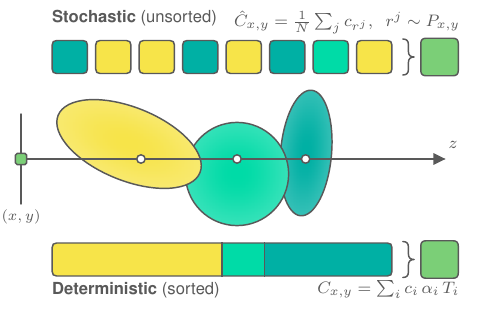}
    \caption{Illustration of deterministic and stochastic rendering of 3D Gaussians.}
    \label{fig:stochastic}
\end{figure}

\emph{StochasticSplats}~\cite{kheradmand2025stochasticsplats} fundamentally alters the rendering pipeline by adapting stochastic order-independent transparency (OIT)~\cite{enderton2010stochastic}. Rather than sorting primitives globally, stochastic transparency replaces deterministic alpha-blending with an unbiased Monte Carlo estimator. The two approaches are contrasted in \cref{fig:stochastic}.

The core idea is to probabilistically assign fully opaque samples to a pixel according to the fragments' alpha-blending factors, evaluating the color without any sorting. The radiance $C_{x,y}$ is estimated by randomly and independently sampling $N$ Gaussian indices $r^1, \dots, r^N$ from a probability mass function $P_{x,y}$:
\begin{equation}
    \hat{C}_{x,y} = \frac{1}{N} \sum_{j=1}^N c_{r^j} \, , \quad \text{with } r^j \sim P_{x,y} \, ,
\end{equation}
\begin{equation*}
    P_{x,y}(i)= \begin{cases}
        \alpha_i(x,y) \, T_i(x,y) & \text{if } i\in \mathcal{G}(x,y) \, , \\
        0 & \text{otherwise,}
    \end{cases}
\end{equation*}
\ie, each Gaussian is sampled with exactly its blending weight from \cref{eq:blending}.
A single draw $r \sim P_{x,y}$ thus yields an unbiased one-sample estimate $c_r$ of the pixel color, $\mathbb{E}_{r \sim P_{x,y}}[c_r] = \sum_{i} P_{x,y}(i) \, c_i = C_{x,y}$.

In practical hardware implementations, $r$ is sampled by leveraging standard, unsorted rasterization with a Z-buffer. For each fragment, a uniform random number $u_i \sim U(0, 1)$ is sampled; the fragment is kept as fully opaque if $u_i < \alpha_i(x,y)$, and discarded otherwise. The Z-buffer then retains the nearest kept fragment, whose index $r$ therefore follows $P_{x,y}$.

\emph{Gaussian Point Splatting} produces samples from the same underlying distribution $P_{x,y}$ using a massively parallelized point-sampling technique. As such, its output is near-identical to \emph{StochasticSplats} except for small biases introduced by approximations. Further details can be found in \cref{GPS}.

\paragraph{Temporal Reuse.}
Because 1-sample stochastic splats produce noisy raw frames, temporal accumulation is essential for stability. Existing methods address this in two distinct ways. 
\emph{StochasticSplats} maintains an unweighted Cumulative Moving Average (CMA),
\begin{equation*}
    \hat{C}^t_{x,y} = \Big(1 - \frac{1}{N^t_{x,y}}\Big) \, \hat{C}^{t-1}_{x',y'} + \frac{1}{N^t_{x,y}} \, c_{r^t} \, ,
\end{equation*}
where $r^t \sim P^t_{x,y}$ is the sample drawn in frame $t$, $(x',y')$ is the reprojection of $(x,y)$ into the previous frame, and $N^t_{x,y} = N^{t-1}_{x',y'} + 1$ counts the accumulated samples, resetting the sample counter $N^t_{x,y} \gets 1$ whenever relative depth error exceeds a threshold $\tau$.

\emph{Gaussian Point Splatting} instead uses a fixed-weight exponential moving average (EMA) with a bounded temporal window, borrowing AABB color clamping from standard temporal anti-aliasing (TAA)~\cite{karis2014high,salvi2016temporal} to suppress ghosting by restricting reprojected history to within $\gamma$ standard deviations of the local color mean.

\section{Method}
\label{sec:method}

The noise produced by stochastic 3DGS methods \cite{kheradmand2025stochasticsplats, rijsdijk2026gaussian} is a direct result of the variance in the Monte Carlo estimator $\hat{C}_{x,y}$, which determines its root-mean-square error:
\begin{equation}
\sqrt{\mathbb{E}\big[\lVert \hat{C}_{x,y} - C_{x,y} \rVert^2\big]} = \sqrt{\operatorname{Var}[\hat{C}_{x,y}]} = \frac{\sqrt{\operatorname{Var}[c_r]}}{\sqrt{N}} \, .
\end{equation}
To reduce this error, either the color variance $\operatorname{Var}[c_r]$ at each pixel must be decreased or the sample count $N$ increased.

\begin{figure}[t]
    \centering
    \includegraphics[width=0.85\linewidth]{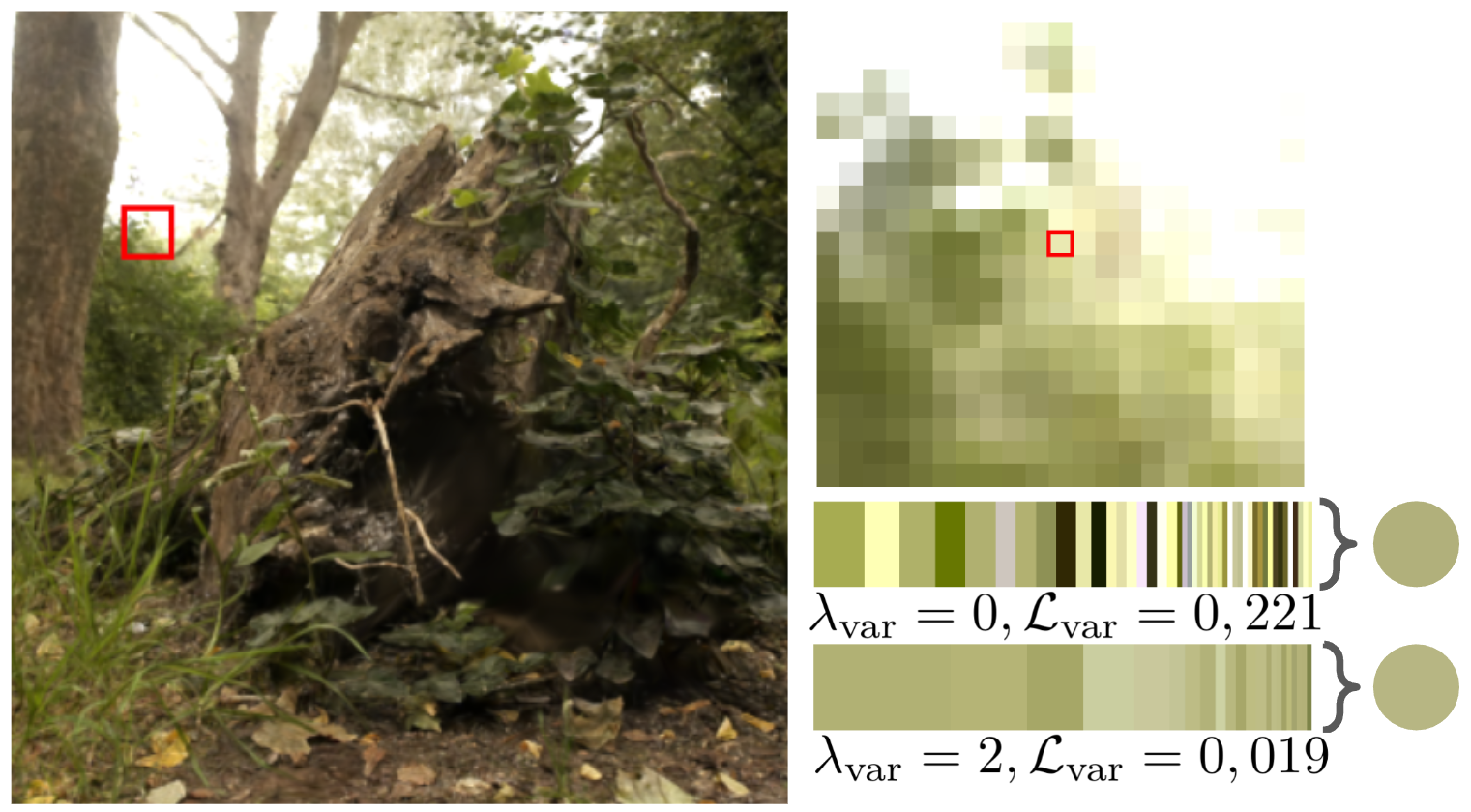}
    \caption{By default, 3DGS optimization blends disparate colors to match the ground truth. With $\lvar > 0$, the blended colors align to reduce color variance.}
    \label{fig:variance_loss}
\end{figure}

\subsection{Color Variance Regularization}
\label{sec:color_var}

Previous stochastic methods for 3DGS \cite{rijsdijk2026gaussian, kheradmand2025stochasticsplats} reduce the error by increasing $N$ through expensive supersampling or artifact-prone temporal accumulation, while ignoring the variance of the sampled colors $\operatorname{Var}[c_r]$, which directly represents the power of the noise produced by the renderer:
\begin{align}
\operatorname{Var}_{r \sim P_{x,y}}[c_r] &= \mathbb{E}_{r \sim P_{x,y}}\!\left[\lVert c_r - \mathbb{E}_{r \sim P_{x,y}}[c_r] \rVert^2\right] \label{eq:Variance}\\ 
&= \mathbb{E}_{r \sim P_{x,y}}\!\left[\lVert c_r - C_{x,y} \rVert^2\right] \\ 
&= \sum_{i \in \mathcal{G}(x,y)} P_{x,y}(i) \cdot \lVert c_i - C_{x,y} \rVert^2 \, .
\end{align}

Our key insight is that standard, deterministic 3DGS training produces significantly higher color variance $\operatorname{Var}[c_r]$ than necessary to represent the scene at high fidelity. To minimize $\operatorname{Var}[c_r]$ during optimization, we add a color variance term $\Lvar$ normalized by the resolution $H \times W$ to the loss function:
\begin{equation}
    \Lvar = \frac{1}{H\cdot W}\sum_{x=1}^{W} \sum_{y=1}^{H} \sum_{i \in \mathcal{G}(x,y)} P_{x,y}(i) \cdot \lVert c_i - C_{x,y} \rVert^2 \, .
\end{equation}

Intuitively, this encourages all Gaussians that splat onto the same pixel to share the same color, instead of matching the ground truth by blending disparate colors. \Cref{fig:variance_loss} illustrates how $\Lvar$ aligns colors splatted onto a pixel. Note that $P_{x,y}(i)$ is already computed for each Gaussian during deterministic 3DGS rasterization and is commonly referred to as the Gaussian weight.

Our complete loss function is:
\begin{equation}
    \mathcal{L} = \lvar \cdot \Lvar + (1-\lambda) \cdot \mathcal{L}_{\ell_1} + \lambda \cdot \mathcal{L}_{\text{D-SSIM}} \,,
\end{equation}
with hyperparameters $\lambda, \lvar$ and photometric losses $\mathcal{L}_{\ell_1}, \mathcal{L}_{\text{D-SSIM}}$. The resulting models significantly reduce variance in the estimator and image noise with no modifications to the rendering procedure. For efficiency, the expectation $C_{x,y}$ can be replaced with the ground truth color $C^{\mathrm{gt}}_{x,y}$ during training, assuming $C_{x,y} \approx C^{\mathrm{gt}}_{x,y}$ for a converged model.

While this training objective has already found application in the context of sparse-view reconstruction~\cite{jena2025sparfels} and mesh extraction~\cite{radl2026confidence, zhang2025radiance}, its significant applications to stochastic rendering have thus far been unexplored.

\begin{figure*}[t]
    \centering
    \settoheight{\figheight}{\includegraphics[width=0.26\textwidth]{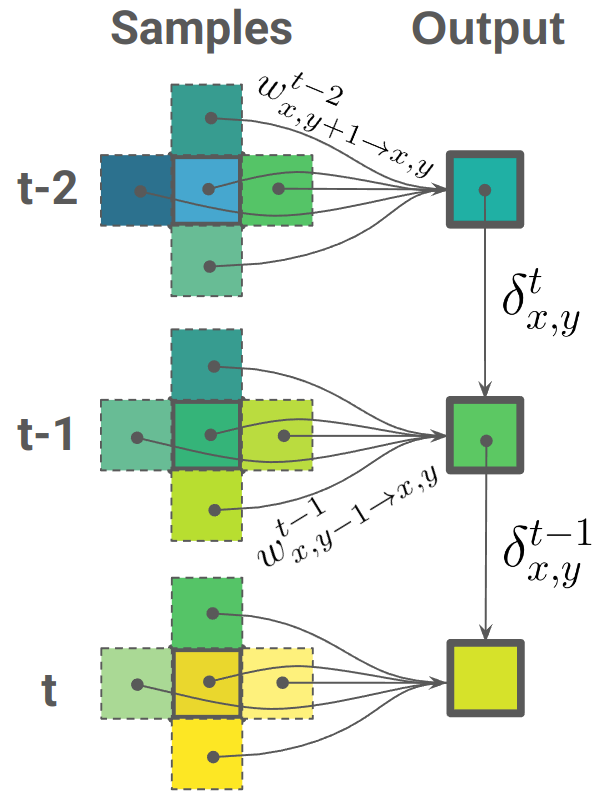}}
    \begin{subfigure}[t]{0.72\textwidth}
        \centering
        \parbox[c][\figheight][c]{\linewidth}{%
            \centering
            \includegraphics[width=0.99\linewidth]{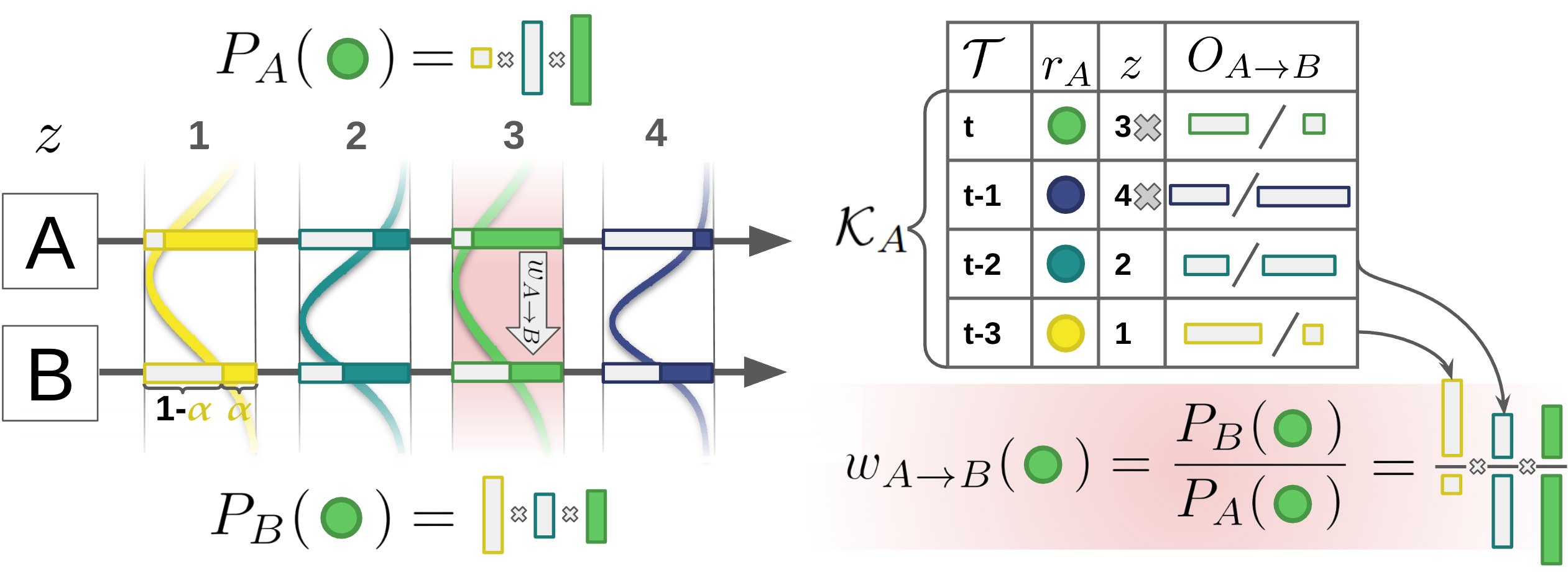}%
        }
        \caption{The resample weight from neighbor $n$ (A) to pixel $(x,y)$ (B) for the current frame is computed using the buffer $\mathcal{K}_n$ ($\mathcal{K}_A$) of previous samples.}
        \label{fig:spatio}
    \end{subfigure}%
    \hfill
    \rule[-0.5\figheight]{0.5pt}{\figheight}%
    \hfill
    \begin{subfigure}[t]{0.26\textwidth}
        \centering
        \parbox[c][\figheight][c]{\linewidth}{%
            \centering
            \includegraphics[width=\linewidth]{Figures/Method/TemporalReuse.png}%
        }
        \caption{Temporal resampling is computed recursively.}
        \label{fig:temp}
    \end{subfigure}
    \caption{Overview of spatial and temporal sample reuse.}
    \label{fig:overview}
\end{figure*}

\subsection{Spatial Resampling}
\label{sec:spatial}

By exploiting the fact that color distributions are similar across neighboring pixels (e.g., $P_{x,y}(i) \approx P_{x+1,y}(i)$), the information gain from each individual sample can be amplified by mixing the colors of neighboring samples. Naive reuse causes blurring artifacts, which can be mitigated by computing resampling weights that minimize the introduced bias.

\paragraph{Neighbor Reuse.}
Let $\mathcal{N}(x,y)$ define the set of neighboring pixels around $(x,y)$ from which samples will be reused. The final rendered color for a single sample is then given by:
\begin{equation}
    \Cspat{x,y} = \frac{\sum_{n \in \mathcal{N}(x,y)} \wgt{n}{x,y} (r_n) \cdot c_{{r}_{n}}}{\sum_{n \in \mathcal{N}(x,y)} \wgt{n}{x,y} (r_n)} \, ,
    \label{eq:SpatialReuse}
\end{equation}
where $r_n \sim P_n$ is the sample chosen at pixel $n$ and $\wgt{n}{x,y}$ is the \emph{spatial resampling weight}. Here and in all following sums over $\mathcal{N}(x,y)$, the pixel's own sample $r \sim P_{x,y}$ is included with weight $\wgt{x,y}{x,y}(r) = 1$; $|\mathcal{N}(x,y)|$ counts only the neighbors. The normalization factor is imposed by \emph{self-normalized importance sampling} (SNIS). \Cref{eq:SpatialReuse} can also be interpreted as a convolution of the 1-sample image with a dynamic kernel defined by $\mathcal{N}$ and $\wgt{n}{x,y}$. If $\wgt{n}{x,y}$ is chosen simply as a Gaussian falloff according to pixel distance, this operation is equivalent to a Gaussian blur, which successfully reduces noise but converges to a biased, blurry result.

To achieve noise reduction while minimizing bias, the expected value of the weighted samples from pixel $n$ must match the expected value of the target distribution at pixel $(x,y)$:
\begin{equation}
\mathbb{E}_{r_n \sim P_n}[\wgt{n}{x,y}(r_n) \cdot c_{{r}_n}] = \mathbb{E}_{r \sim P_{x,y}}[c_{r}].
\end{equation}
By expanding both expectations into their respective sums over the domain $\mathcal{G}$, we obtain:
\begin{equation}
\sum_{i \in \mathcal{G}(n)} P_n(i) \cdot \wgt{n}{x,y}(i) \cdot c_i = \sum_{i \in \mathcal{G}(x,y)} P_{x,y}(i) \cdot c_i \, .
\end{equation}
For this equality to hold for any arbitrary signal $c_i$, the corresponding terms inside the summations must be equal. Equating the coefficients yields:
\begin{equation}
    P_n(i) \cdot \wgt{n}{x,y}(i) = P_{x,y}(i) \, .
    \label{eq:coefficient_equality}
\end{equation}
Solving this directly provides the unbiased resample weights: $\wgt{n}{x,y}(r_n) = \frac{P_{x,y}(r_n)}{P_n(r_n)}$.
If $\wgt{n}{x,y}(r_n) > 1$, then $r_n$ is more likely to be sampled at pixel $(x,y)$ than at pixel $n$, and vice versa. The assumption $\mathcal{G}(x,y) = \mathcal{G}(n)$ used in \cref{eq:coefficient_equality} is justified in \cref{par:tile-based}.

\paragraph{Approximating $\mathbf{\wgt{n}{x,y}}$.}
While $\frac{P_{x,y}(r_n)}{P_{n}(r_n)}$ represents the ideal, unbiased weights, they are difficult to compute in practice due to their dependence on occlusion from closer primitives:
\begin{equation}
    \frac{P_{x,y}(r_n)}{P_{n}(r_n)} = \frac{\alpha_{r_n}(x,y)}{\alpha_{r_n}(n)} \prod_{\substack{k \in \mathcal{G}(x,y) \\ \land \ z_k < z_{r_n}}} \overbrace{\frac{1 - \alpha_k(x,y)}{1 - \alpha_k(n)}}^{\occ{k}{n}{x,y}} \, .
    \label{eq:weight}
\end{equation}
The first term accounts for the change in 2D Gaussian opacity between the two pixels and is evaluated directly by sampling the 2D Gaussian $G^{2D}_{r_n}$ on neighboring pixels. However, the second term $\occ{k}{n}{x,y}$, representing the change in occlusion of $r_n$ between pixels, presumes knowledge of the set of occluding Gaussians $\{k \in \mathcal{G}(x,y) \mid z_k < z_{r_n}\}$, which would require an expensive global sorting step or second sampling pass.

Our key insight is that the expected number of occluding Gaussians on each pixel is very small (cf.~\cref{fig:blending}) if the scene is trained with $\Lvar$. As such, this product is dominated by a small subset of Gaussians that have a high probability of being sampled because they represent the foreground. In turn, it is highly likely that these important occluding Gaussians were sampled recently on the same pixel. In essence, the samples we draw from each pixel act as importance samples for Gaussians that are likely to be dominant occluders. We can therefore approximate the change in occlusion by taking the product not over all Gaussians splatted onto the pixel, but only over those that have been sampled on the same pixel in the last $K$ iterations. 

To this end, for each pixel $(x,y)$, we store the unique values of $\occ{k}{x,y}{n}$ for all neighbors $n$ (computed in an extra rendering pass), the index $k$, and the corresponding depth $z_k$ across the $K$ previous samples. Using the set $\mathcal{K}_n$ stored at the source pixel $n$, we estimate $\wgt{n}{x,y}$ as:
\begin{equation}
\wgt{n}{x,y}(i) = \frac{P_{x,y}(i)}{P_n(i)} \approx \frac{\alpha_{i}(x,y)}{\alpha_{i}(n)} \prod_{\substack{k \in \mathcal{K}_n \\ \land \ z_k < z_i}} \occ{k}{n}{x,y} \, .
\label{eq:K-Reuse}
\end{equation}
Due to the unbounded support of the Gaussian kernel, as $K\rightarrow \infty$, $\{k \in \mathcal{G}(x,y) \mid z_k < z_i\} \subseteq \mathcal{K}_n$ holds with probability approaching one and the equality becomes exact. \Cref{fig:spatio} sketches the computation of this reuse weight.

Note that the stored buffer of past samples is deduplicated: we do not weight the importance of a Gaussian by its sample frequency (which would introduce substantial noise), but rather take the condition that it was sampled at least once as a signal to include its occlusion factor.

\begin{figure*}[!t]
    \centering
    \begin{subfigure}[b]{0.22\textwidth}
        \centering
        \includegraphics[width=0.95\linewidth]{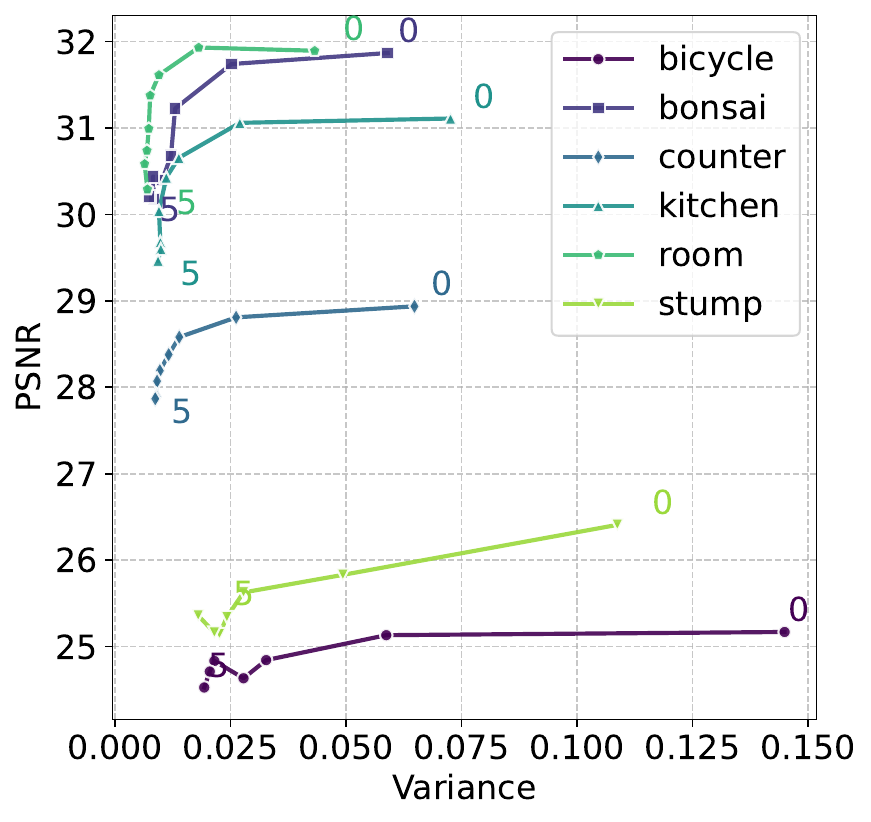}
        \caption{PSNR and variance $\Lvar$ \\ for varying $\lvar \in [0,5]$.}
        \label{fig:VarVSPSNR}
    \end{subfigure}
    \hfill
    \begin{subfigure}[b]{0.75\textwidth}
        \includegraphics[valign=t, width=0.93\linewidth]{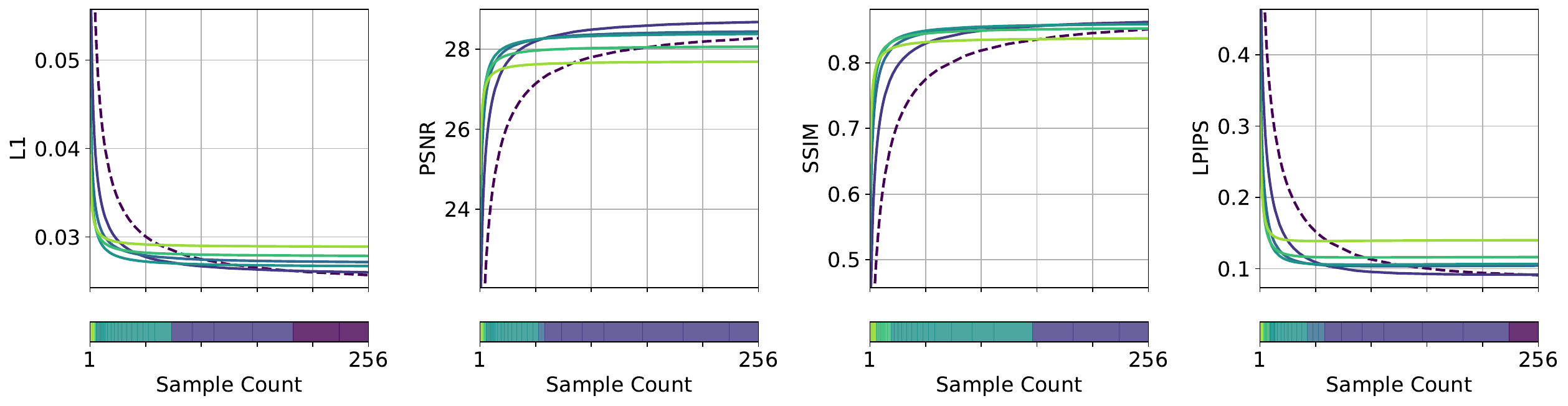}
        \includegraphics[valign=t, width=0.055\linewidth]{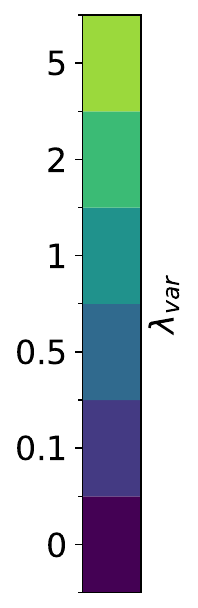}
        \caption{Comparison of average quality metrics across varying stochastic sample counts for models trained with different $\lvar$. The bottom bar denotes the optimal $\lvar$ for each sample count. The baseline ($\lvar=0$) is shown dashed.}
        \label{fig:curves}
    \end{subfigure}
    \caption{Average quality metrics on the Mip-NeRF 360 dataset for varying $\lvar$ with deterministic (\cref{fig:VarVSPSNR}) and stochastic (\cref{fig:curves}) blending.}
    \label{fig:main_variance_eval}
\end{figure*}

\subsection{Spatio-Temporal Resampling}
\label{sec:spatiotemporal_resampling}

Color distributions are coherent not only across spatial neighborhoods, but also temporally along motion trajectories. While evaluating full spatio-temporal resampling weights across multiple frames is intractable for real-time rendering, we show that under mild assumptions, full spatio-temporal resampling reduces to an efficient, closed-form recursive filter.

\paragraph{Temporal Importance Resampling.}
The spatial reuse weights $\wgt{n}{x,y}$ extend to the temporal domain by evaluating the target-to-proposal PDF ratio:
\begin{equation}
\wgt{n}{x,y}^{\tilde{t}\rightarrow t}(r_n^{\tilde{t}}) = \frac{P^t_{x,y}(r_n^{\tilde{t}})}{P^{\tilde{t}}_{n}(r_n^{\tilde{t}})} \, ,
\end{equation}
where $r_n^{\tilde{t}}$ is the Gaussian sample drawn at pixel $n$ on prior frame $\tilde{t} \in \mathcal{T}$. The full spatio-temporal SNIS estimator is then:
\begin{equation}
\hat{C}_{x,y}^{t} = \frac{\sum_{n \in \mathcal{N}(x,y)} \sum_{\tilde{t} \in \mathcal{T}} \wgt{n}{x,y}^{\tilde{t}\rightarrow t}(r_n^{\tilde{t}}) \cdot c_{r_n^{\tilde{t}}}}{\sum_{n \in \mathcal{N}(x,y)} \sum_{\tilde{t} \in \mathcal{T}} \wgt{n}{x,y}^{\tilde{t} \rightarrow t}(r_n^{\tilde{t}})} \, .
\label{eq:full_st_estimator}
\end{equation}
Assuming sample relevance decays by a per-frame factor $\delta_{x,y}^t \in [0, 1]$ shared by all samples in the history, the spatio-temporal weights factorize as:
\begin{equation}
\wgt{n}{x,y}^{\tilde{t} \rightarrow t}(r_n^{\tilde{t}}) = \wgt{n}{x,y}^{\tilde{t}}(r_n^{\tilde{t}}) \prod_{u=\tilde{t}+1}^t \delta_{x,y}^u \, ,
\label{eq:discount}
\end{equation}
This formulation is illustrated in \cref{fig:temp}.

\begin{restatable}{theorem}{emaReduction}
\label{thm:ema_reduction}
Under the assumption of a per-frame temporal discount shared by all history samples (cf.~\cref{eq:discount}), the spatio-temporal reuse estimator
\begin{equation}
\hat{C}_{x,y}^{t} = \frac{\sum_{n \in \mathcal{N}(x,y)} \sum_{\tilde{t} \in \mathcal{T}} \wgt{n}{x,y}^{\tilde{t}\rightarrow t}(r_n^{\tilde{t}}) \cdot c_{r_n^{\tilde{t}}}}{\underbrace{\sum_{n \in \mathcal{N}(x,y)} \sum_{\tilde{t} \in \mathcal{T}} \wgt{n}{x,y}^{\tilde{t} \rightarrow t}(r_n^{\tilde{t}})}_{D^t_{x,y}}}
\end{equation}
can be expressed as an exponential moving average filter:
\begin{equation}
\begin{split}
\hat{C}_{x,y}^{t} = {} & \frac{\delta^t_{x,y} \cdot D_{x,y}^{t-1}}{D_{x,y}^t} \cdot \hat{C}_{x,y}^{t-1} \\
& + \frac{\sum_{n \in \mathcal{N}(x,y)} \wgt{n}{x,y}^t(r_n^t)}{D_{x,y}^t} \cdot \Cspat[t]{x,y} \, .
\end{split}
\end{equation}
\end{restatable}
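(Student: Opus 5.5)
The plan is to split the numerator and the denominator $D^t_{x,y}$ into the contribution of the current frame $\tilde t = t$ and that of all earlier frames $\tilde t < t$. For $\tilde t = t$ the product in \cref{eq:discount} is empty and equals one, so these terms are just the spatial weights $\wgt{n}{x,y}^t(r_n^t)$. For $\tilde t < t$ the product contains the factor $\delta^t_{x,y}$, which we pull out, leaving $\prod_{u=\tilde t+1}^{t-1}\delta^u_{x,y}$. We take $\mathcal{T}=\{0,\dots,t\}$ (all frames up to $t$), so that the history set at $t-1$ is $\mathcal{T}\setminus\{t\}$.

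First, define the numerator
\[
S^t_{x,y}=\sum_{n\in\mathcal{N}(x,y)}\sum_{\tilde t\le t}\wgt{n}{x,y}^{\tilde t\rightarrow t}(r_n^{\tilde t})\, c_{r_n^{\tilde t}},
\]
so that $\hat C^t_{x,y}=S^t_{x,y}/D^t_{x,y}$. Applying the split and factorization above, together with the identity $\wgt{n}{x,y}^{\tilde t\rightarrow t}=\delta^t_{x,y}\,\wgt{n}{x,y}^{\tilde t\rightarrow t-1}$ for $\tilde t\le t-1$, gives the two recursions
\[
D^t_{x,y}=\delta^t_{x,y}D^{t-1}_{x,y}+\sum_{n}\wgt{n}{x,y}^t(r_n^t),\qquad S^t_{x,y}=\delta^t_{x,y}S^{t-1}_{x,y}+\sum_n \wgt{n}{x,y}^t(r_n^t)\,c_{r_n^t}.
\]

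Second, rewrite the two pieces of $S^t_{x,y}$ using earlier quantities. By the definition at $t-1$, $S^{t-1}_{x,y}=D^{t-1}_{x,y}\hat C^{t-1}_{x,y}$. By \cref{eq:SpatialReuse}, $\sum_n \wgt{n}{x,y}^t(r_n^t)\,c_{r_n^t}=\big(\sum_n\wgt{n}{x,y}^t(r_n^t)\big)\Cspat[t]{x,y}$. Substituting both and dividing by $D^t_{x,y}$ yields exactly the stated form. Because the two coefficients sum to one by the $D$ recursion, the result is indeed an EMA.

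The main obstacle is making sure the weights refer to the same pixel in both frames. In the identity $\wgt{n}{x,y}^{\tilde t\rightarrow t}=\delta^t_{x,y}\wgt{n}{x,y}^{\tilde t\rightarrow t-1}$, the factor $\wgt{n}{x,y}^{\tilde t}$ is the target-to-proposal weight at the time of sampling. It must be the same quantity that entered $D^{t-1}_{x,y}$ and $\hat C^{t-1}_{x,y}$, for the same pixel (reprojected if the camera moves) and the same neighbor set. This is precisely what the shared-discount assumption in \cref{eq:discount} provides: the target-dependence of the temporal weight is absorbed into the $\delta$ factors. I would state this explicitly, identify $(x,y)$ at frame $t-1$ with its reprojection, and then the computation above is a one-line induction, with base case $D^0_{x,y}=\sum_n\wgt{n}{x,y}^0(r_n^0)$ and $\hat C^0_{x,y}=\Cspat[0]{x,y}$.
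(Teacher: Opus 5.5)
Your proposal is correct and follows essentially the same route as the paper: define the discounted color mass $S^t_{x,y}$, split off the $\tilde{t}=t$ term, factor $\delta^t_{x,y}$ out of the remaining discount products to obtain parallel recursions for $S$ and $D$, then substitute $S^{t-1}_{x,y}=D^{t-1}_{x,y}\hat{C}^{t-1}_{x,y}$ and the frame-$t$ spatial estimate and divide by $D^t_{x,y}$. The only difference is cosmetic: you frame it as an induction with a base case, whereas the paper treats it as a direct algebraic identity that holds at each $t$ from the definitions, so no base case is actually needed.
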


A detailed proof is provided in \cref{sec:proofs}. \Cref{thm:ema_reduction} reduces spatio-temporal SNIS to a two-state recursion per pixel: the running estimate $\hat{C}^t_{x,y}$ and the accumulated weight mass $D^t_{x,y} = \delta^t_{x,y} D^{t-1}_{x,y} + \sum_n \wgt{n}{x,y}^t(r_n^t)$.

At steady state ($D_{x,y}^t \approx D_{x,y}^{t-1}$), the recursion in \cref{thm:ema_reduction} is equivalent to an EMA filter with history weight $\delta_{x,y}^t \in (0, 1)$, commonly used in TAA~\cite{karis2014high,yang2020survey}. Both prior temporal schemes are special cases: without spatial reuse ($\sum_n \wgt{n}{x,y}^t = 1$), $\delta^t_{x,y} = 1$ yields the CMA of \SSplats (\cref{sec:stochastic}), whose depth reset corresponds to $\delta^t_{x,y} = 0$, and a constant $\delta^t_{x,y} < 1$ yields the fixed-weight EMA of \GPS. We use the steady-state form, so only the running estimate needs to be stored:
\begin{equation}
\hat{C}^t_{x,y} = \delta^t_{x,y} \, \hat{C}^{t-1}_{x,y} + (1 - \delta^t_{x,y}) \, \Cspat[t]{x,y} \, ,
\label{eq:ema}
\end{equation}
with a fixed nominal history weight $\delta^t_{x,y} = \bar{\delta}$ that is reduced only by soft history rejection (\cref{eq:soft_alpha}).

\paragraph{Moving Camera.}
\label{par:camera_motion}
\Cref{thm:ema_reduction} is stated for a fixed pixel. With a moving camera, we apply \cref{eq:ema} along the motion trajectory and read the history at the reprojection $(x',y')$ of $(x,y)$ into the previous frame, \ie, $\hat{C}^{t-1}_{x,y}$ becomes $\hat{C}^{t-1}_{x',y'}$ (\cref{sec:appendix_temporal}). This is exact only if the reprojection is exact and the surface remains visible; otherwise, the fetched history is an approximation, and the history weight $\delta^t_{x,y}$ controls how much of it is kept. The occlusion buffer of \cref{sec:spatial} is kept in screen space and not reprojected, so under motion it may contain occluders of nearby surface points.

\paragraph{Soft History Rejection.}
Prior temporal approaches for stochastic splatting present a clear trade-off. Pure depth-tested CMA maintains an unbounded sample history that causes persistent ghosting whenever depth differences are small, while its hard binary resets trigger visible popping on foliage and fine geometry. In contrast, standard TAA color clamping directly clamps $\hat{C}^{t-1}_{x',y'}$ to a local neighborhood bounding box, which clips valid Monte Carlo variance tails, introduces bias, and interferes with spatial reuse distributions~\cite{stachowiak2015stochastic,schied2018asvgf,yang2020survey}.

To avoid both issues, we replace color clipping and binary depth resets with history weight modulation~\cite{stachowiak2015stochastic,yang2020survey}. We leave the historical color $\hat{C}^{t-1}_{x',y'}$ unclipped, and instead change the history weight $\delta_{x,y}^t$ based on the $L_\infty$ distance $m$ of the reprojected history $\hat{C}^{t-1}_{x',y'}$ from the mean of the current samples $c_{r_q}$ in a $3\times 3$ neighborhood (in YCoCg space), normalized by velocity-adaptive confidence half-extents (\cref{sec:appendix_temporal}):
\begin{equation}
\delta_{x,y}^t = \bar{\delta} \cdot \min\left(1,\, \frac{1}{m}\right) \, ,
\label{eq:soft_alpha}
\end{equation}
where $\bar{\delta}$ is the fixed nominal history weight. For outlier samples ($m > 1$), this smoothly shifts history retention toward the new spatial estimate ($\delta_{x,y}^t \to 0$) without distorting valid color distributions. Implementation details and our velocity-adaptive confidence bounds are described in \cref{sec:appendix_temporal}.

\paragraph{Per-Primitive Spatio-Temporal Blue Noise.}
Instead of independent white noise, we sample the fragment visibility thresholds $u_i$ (\cref{sec:stochastic}) from a precomputed Spatio-Temporal Blue Noise (STBN) volume~\cite{wolfe2022stbn}. To decorrelate overlapping fragments at every pixel, each primitive applies an $R_2$ low-discrepancy spatial offset~\cite{roberts2018quasirandom}, combined with a golden-ratio Cranley-Patterson rotation~\cite{cranley1976randomization} to prevent temporal repetition. This per-primitive stratification preserves blue-noise spectral properties and estimator unbiasedness, resulting in improved quantitative metrics compared to white noise and visibly smoother images (\cf~\cref{fig:uniform_vs_stbn}).

\section{Evaluation}
\label{sec:experiments}

For the evaluation, we use the \SSplats method of drawing samples and render each sample individually (i.e., no batching) for minimal latency.

\begin{table*}[t]
\centering
\small
\setlength{\tabcolsep}{7pt}
\caption{\textbf{Quantitative comparison on the camera path benchmark} (21 trajectories, 3,152 frames across 7 \textit{Mip-NeRF 360} scenes). We ablate progressive pipeline components against the 1-sample \textit{StochasticSplats} baseline ($\lambda_\text{var}=0$), temporal reuse (CMA, $\tau=0.20$), our variance loss ($\lambda_\text{var}=1$) with spatial reuse, and our full spatio-temporal model (\textit{Ours + Temporal}, $\bar{\delta}=0.7$). All variants are evaluated against sorted $\alpha$-blending ground truth.}
\label{tab:main_pipeline_comparison}
\begin{tabular}{lcccccc}
\toprule
 & & & & & \multicolumn{2}{c}{\textbf{PSNR (dB)} $\uparrow$} \\
\cmidrule(lr){6-7}
\textbf{Method} & \textbf{Avg. PSNR (dB)} $\uparrow$ & \textbf{SSIM} $\uparrow$ & \textbf{LPIPS} $\downarrow$ & \textbf{tPSNR (dB)} $\uparrow$ & \textbf{Indoor} & \textbf{Outdoor} \\
\midrule
StochasticSplats \cite{kheradmand2025stochasticsplats}          & 18.336 & 0.293 & 0.667 & 15.235 & 19.562 & 16.701 \\
StochasticSplats + Temporal \cite{kheradmand2025stochasticsplats} & 21.954 & 0.552 & 0.524 & 19.450 & 23.313 & 20.141 \\
Ours                                                             & \second{31.437} & \second{0.845} & \second{0.345} & \second{28.410} & \second{32.978} & \second{29.381} \\
\textbf{Ours + Temporal}                                         & \first{35.085} & \first{0.937} & \first{0.209} & \first{32.449} & \first{37.161} & \first{32.318} \\
\bottomrule
\end{tabular}
\end{table*}

\paragraph{Variance-Quality Trade-off.}
While $\mathcal{L}_{\text{var}}$ effectively mitigates noise, it also dilutes the training signal of the $\ell_1$ and SSIM photometric losses, resulting in a compromise between noise and visual fidelity. To examine this trade-off, we train the scenes from the Mip-NeRF 360 dataset \cite{barron2022mip} for 15k iterations using original 3DGS training and then another 15k iterations from the same checkpoint with varying values of $\lambda_{\text{var}}$. The results in \cref{fig:VarVSPSNR} show that color variance as defined in \cref{eq:Variance} can be decreased significantly with minimal impact on converged image quality. On the other hand, there appears to be a hard lower limit around $\mathrm{Var} \approx 0.01$ on color variance for each scene, beyond which severe quality loss occurs. 

The quality of stochastic rendering for these models with varying sample count is reported in \cref{fig:curves}. The results show that with increasing sample count, the optimal value of $\lambda_{\text{var}}$ decreases. The reason for this is that error is dominated by noise at low sample counts and by bias at high sample counts. A small amount of variance regularization leads to almost universal gains, significantly outperforming baseline trained models across all image metrics for at least the first 200 samples. For the following evaluations, we use models trained with $\lambda_{\text{var}}=1$ as a compromise between fast and correct convergence.

\paragraph{Spatial Reuse.}
To validate the effectiveness of the spatial reuse component, we evaluate quality metrics over all test poses of the Mip-NeRF 360 scenes on models trained with $\lambda_{\text{var}} = 1$ and rendered stochastically with up to 100 samples. As baseline methods, we use unmodified \SSplats outputs and various fixed $3\times 3$ normalized blur kernels and compare against our kernel with dynamic reuse weights for the four direct pixel neighbors with varying buffer sizes $K$. \Cref{fig:ChoiceOfK} shows the resulting difference in quality metrics compared to deterministic $\alpha$-blending. For low sample counts, spatial reuse and blurring achieve an immediate and significant quality improvement over \SSplats. However, the convergence of the naive blur kernels quickly plateaus at a blurry and biased result, while our method continues to increase in quality, outperforming all baseline methods for 100 samples even with a buffer size of only $K=4$.

\begin{figure}[t]
    \centering
    \includegraphics[width=0.99\linewidth]{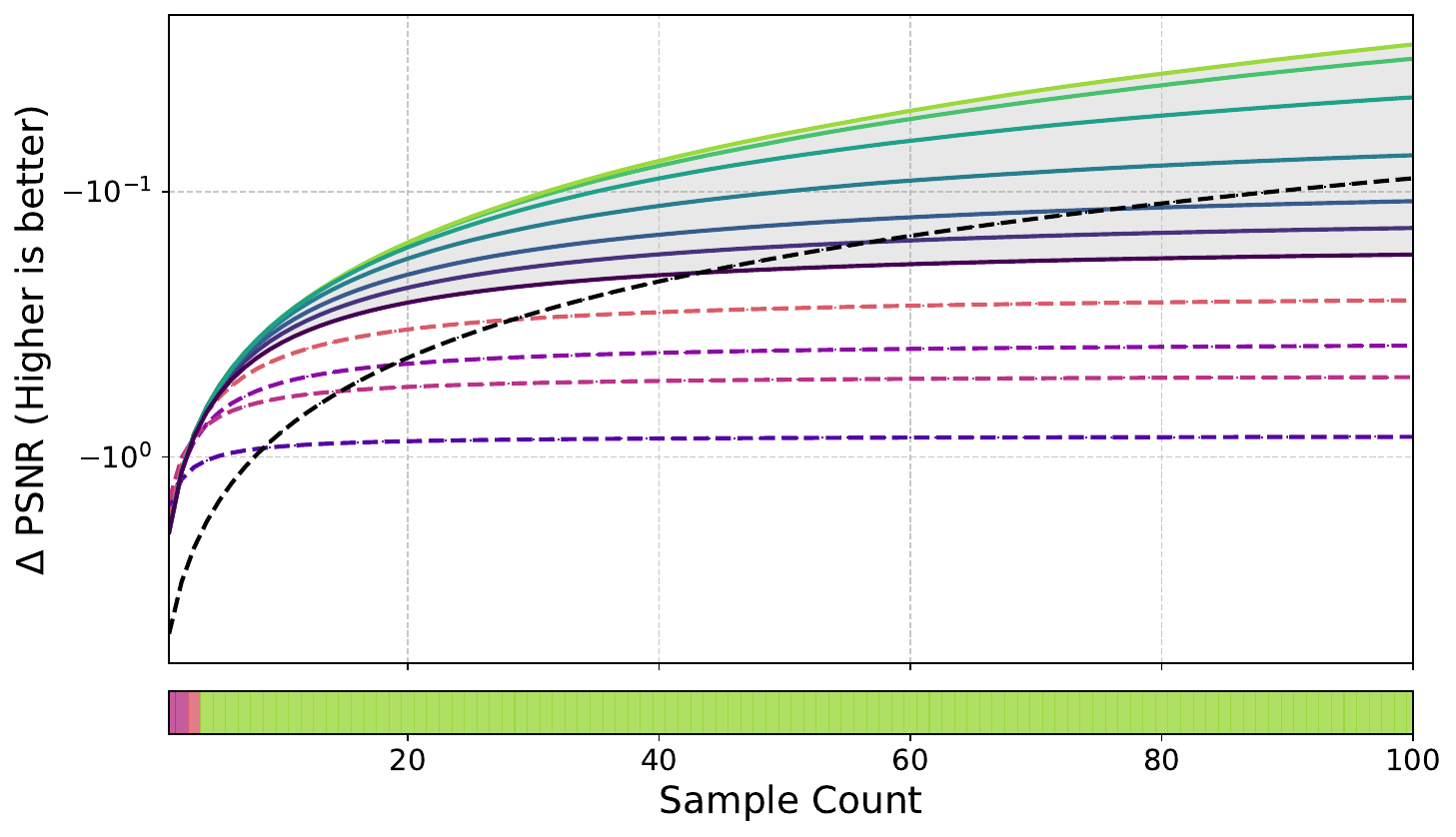}
    \includegraphics[width=0.99\linewidth]{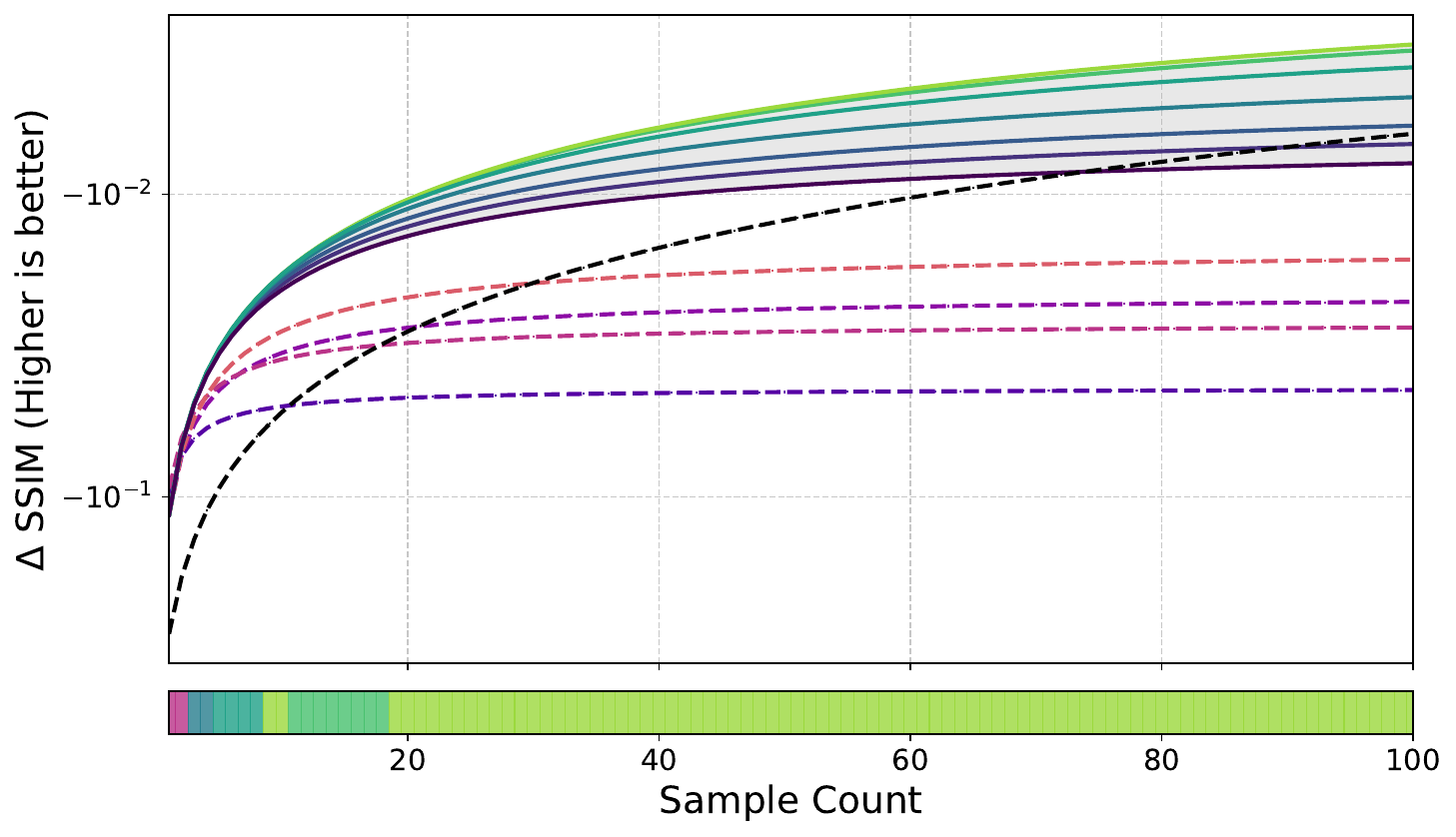}
    \includegraphics[width=0.99\linewidth]{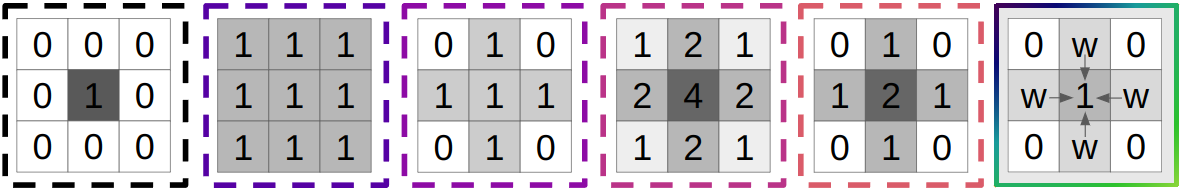}
    \includegraphics[trim={0 0 0 0.75cm},clip,width=0.99\linewidth]{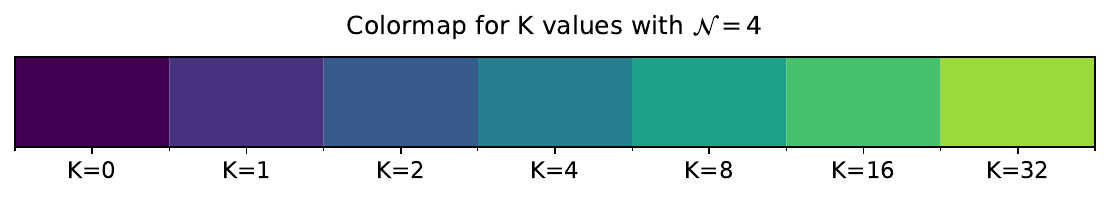}
    \caption{Average residual of test quality metrics on the Mip-NeRF 360 dataset trained with $\lambda_{\text{var}}=1$, comparing static blur kernels against spatial reuse weights with varying $K$. Log-scale is used to visualize small differences.}
    \label{fig:ChoiceOfK}
\end{figure}

\paragraph{Performance.}
While the quality gains for integrating spatial reuse are significant, they also come at a small cost in performance: \cref{fig:PerfKBuffer} compares the performance overhead of spatial reuse against basic stochastic and deterministic blending, implemented in the same optimized Vulkan renderer for fair comparison and run on an RTX 5090. The results show that even for relatively large buffer sizes ($K < 14$), our method achieves higher frame rates than deterministic $\alpha$-blending. The overhead of spatial reuse increases slowly until $K=14$, at which point the shared memory is filled up.

\begin{figure}[t]
    \centering
    \includegraphics[width=0.99\linewidth]{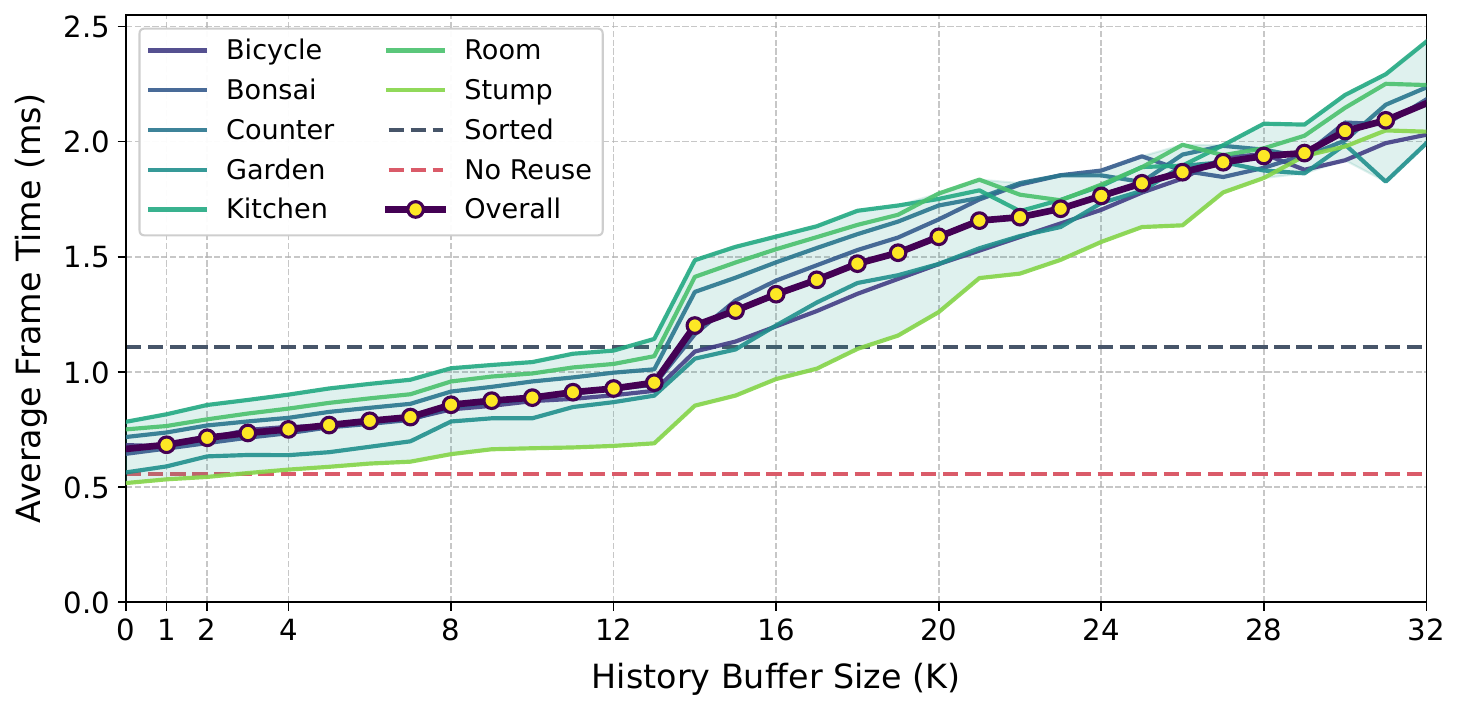}
    \caption{Average frame time for rendering all test views of the Mip-NeRF 360 dataset. We average the frame time for 200 frames per view.}
    \label{fig:PerfKBuffer}
\end{figure}

In order to evaluate the performance-quality trade-off for spatial sample reuse, we introduce the \emph{excess quality factor}:
An excess quality factor of $Q$ means that enabling spatial resampling achieves the same quality as accumulating $Q \cdot N$ samples in the time it takes to draw $N$ samples (with $M < N$ samples due to the resampling overhead). As such, any $Q > 1$ is a net-gain in quality at identical convergence time. \Cref{fig:FreeLunch} shows the excess quality factor achieved for various values of buffer size $K$ and sample count. The largest benefit is observed for low sample counts, with up to $Q=3.5$. The quality factor for small buffers starts high, but drops more quickly, making larger buffer sizes more optimal for longer convergence windows.
\begin{figure}[t]
    \centering
    \includegraphics[width=0.48\linewidth]{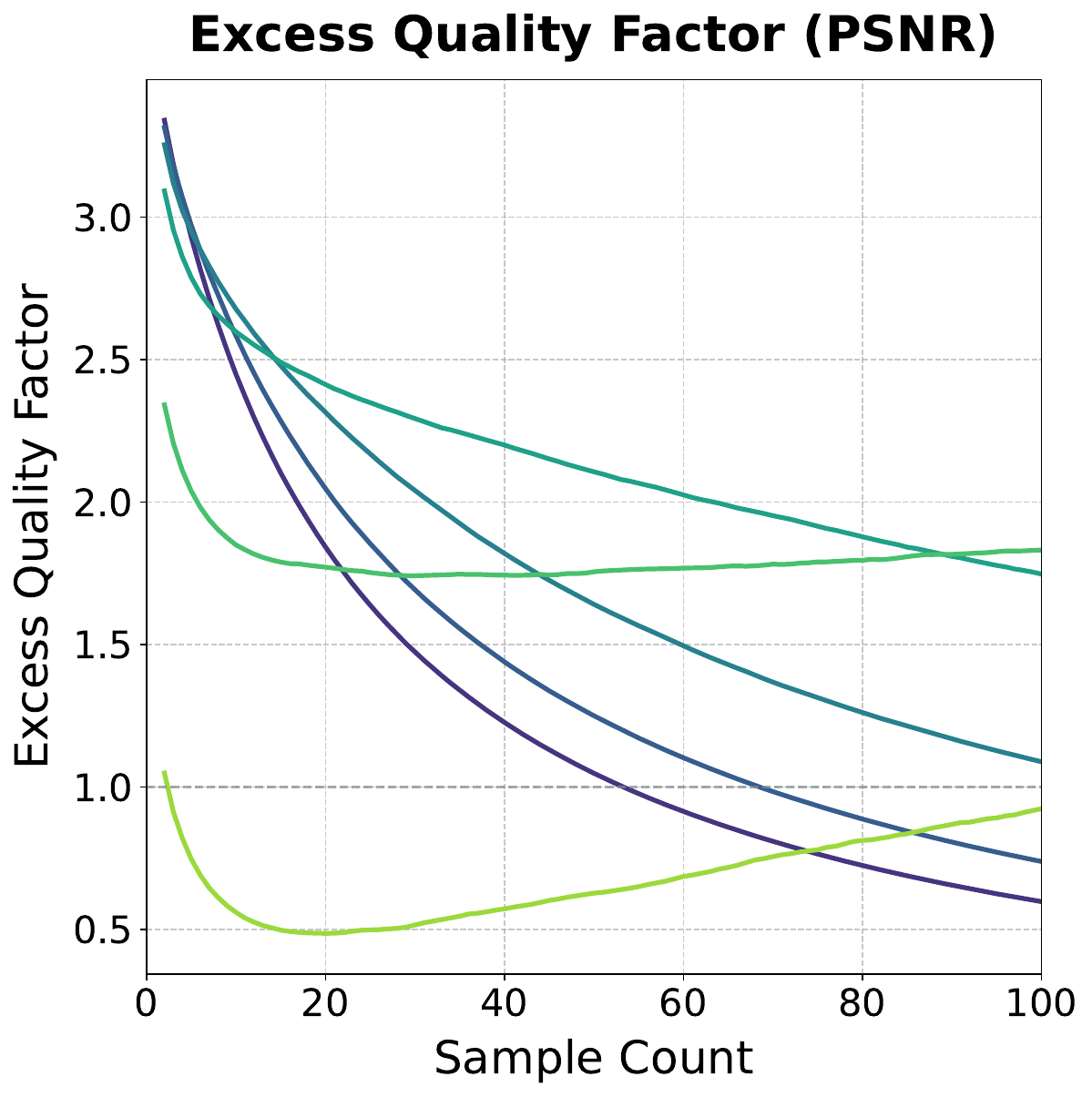}
    \includegraphics[width=0.49\linewidth]{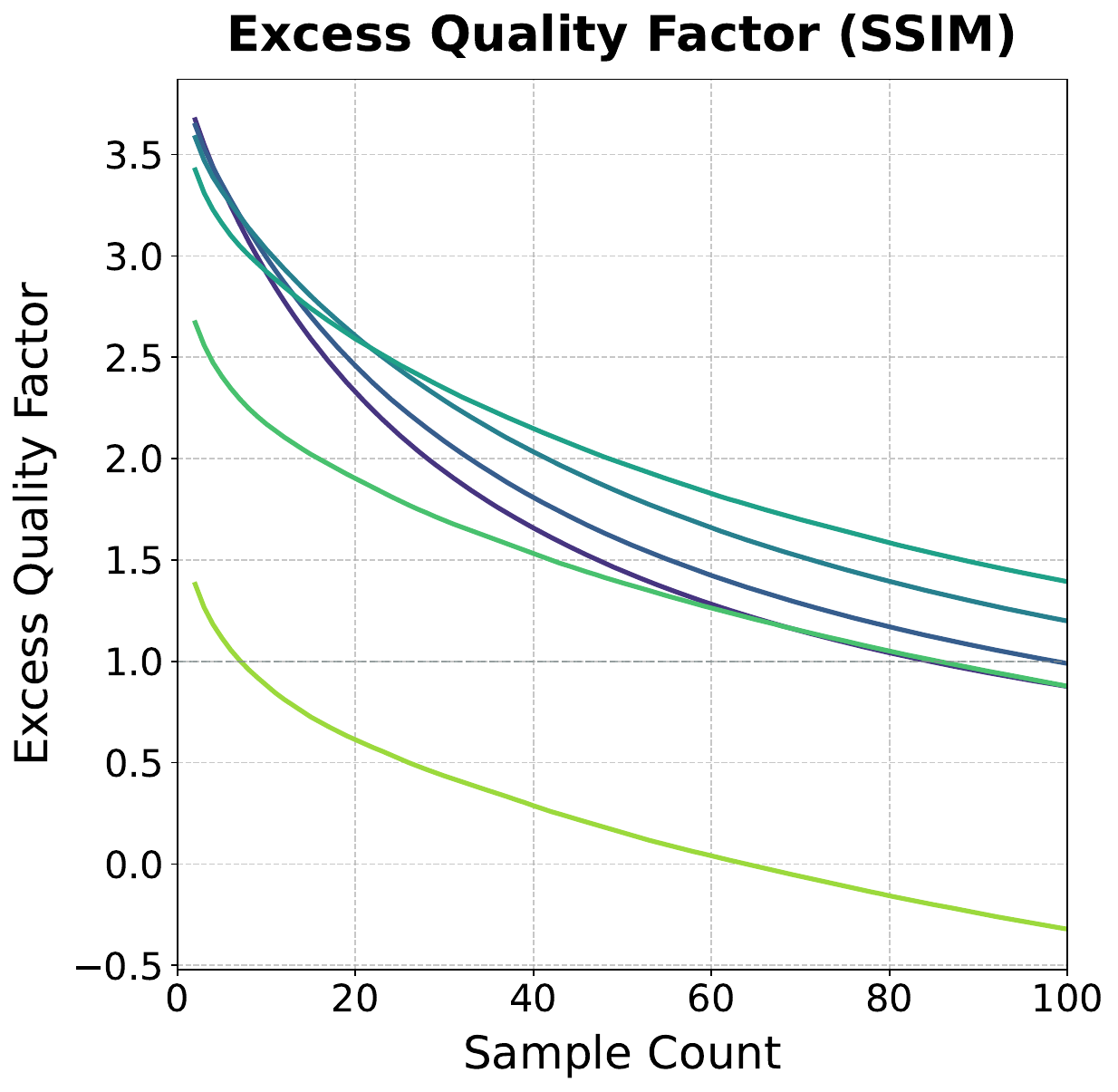}
    \includegraphics[width=0.85\linewidth]{Figures/Reuse_Quality/averaged_metrics_plot_colormap.pdf}
    \caption{Excess Quality Factor on the test views of the Mip-NeRF 360 dataset for PSNR and SSIM quality metrics.}
    \label{fig:FreeLunch}
\end{figure}

\paragraph{STBN.}
\Cref{fig:STBN} compares stochastic rendering quality with and without STBN converging on a static camera. The original RNG used in \SSplats was used as a baseline. The results show that enabling STBN gives a small, but not insignificant boost in quality after roughly three samples that persists for a long time during convergence.

\begin{figure}[t]
    \centering
    \includegraphics[width=0.99\linewidth]{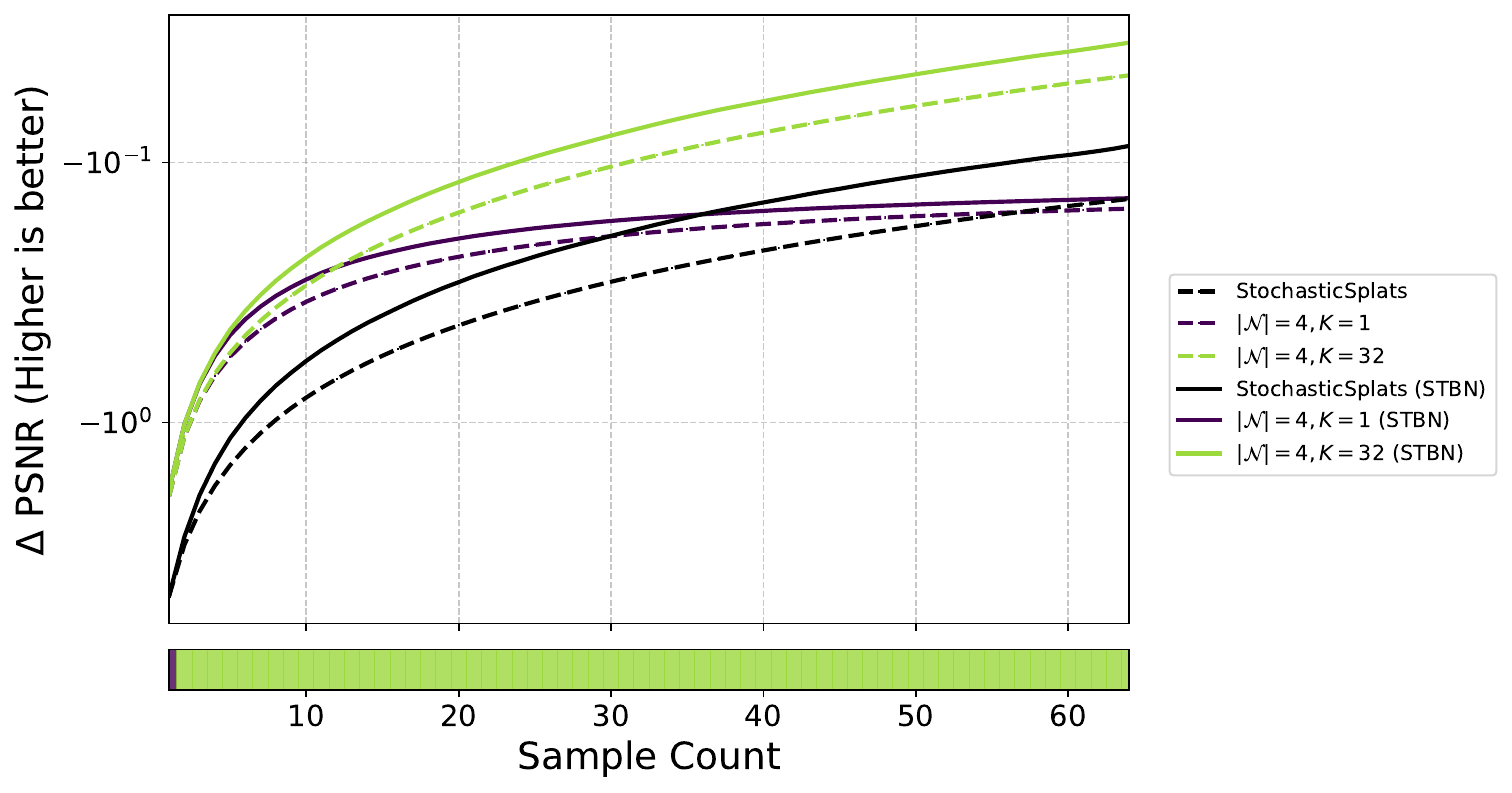}
    \caption{Average residual of test PSNR on the Mip-NeRF 360 dataset trained with $\lambda_{\text{var}}=1$ rendered with and without STBN compared against sorted rendering. Log-scale is used to visualize small differences.}
    \label{fig:STBN}
\end{figure}

\begin{figure*}[!t]
    \centering
    \begin{subfigure}[t]{0.31\linewidth}
        \includegraphics[width=\linewidth]{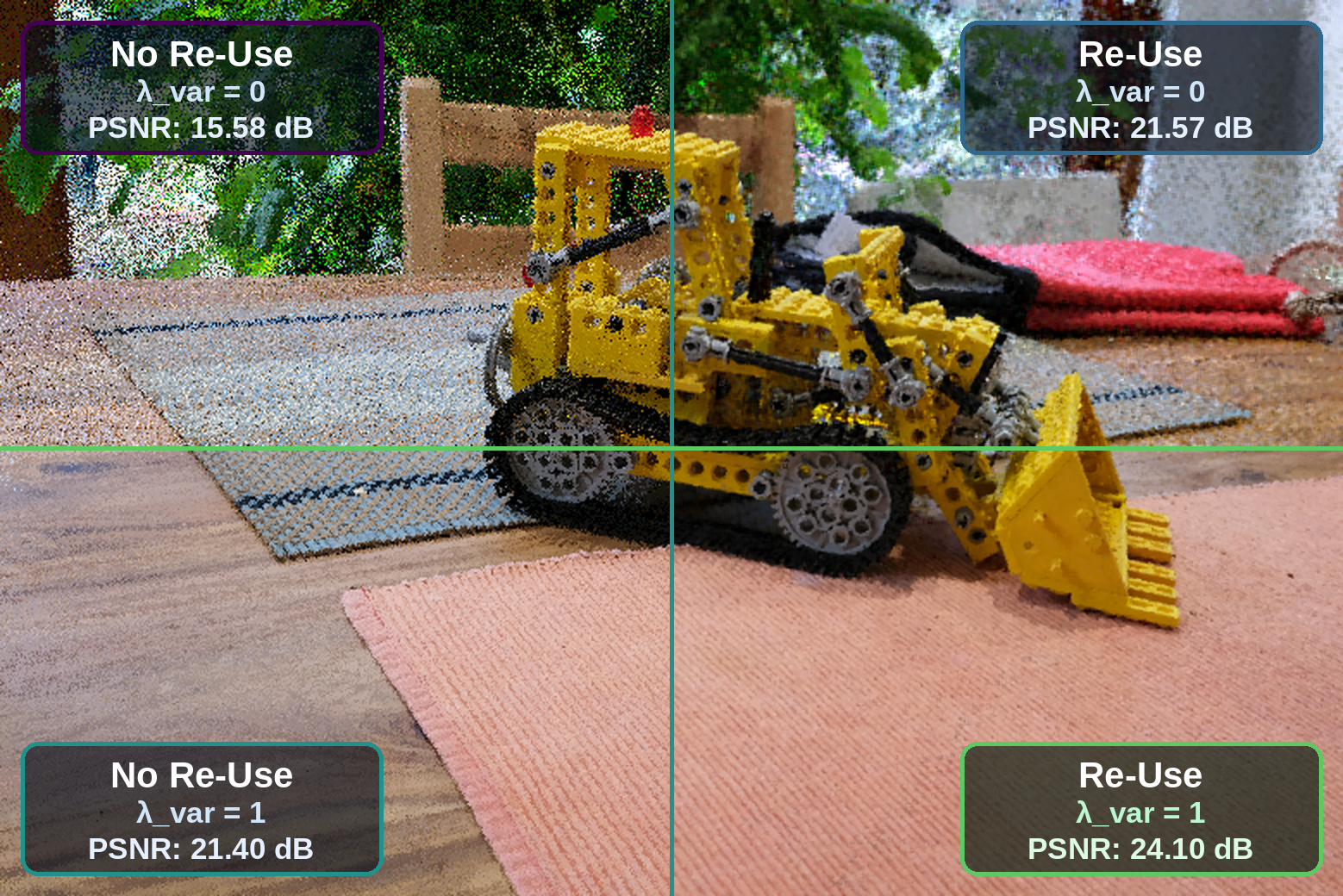}
        \caption{Visual ablation of variance regularization and spatial reuse on 1 sample per pixel.}
    \end{subfigure}
    \hfill
    \begin{subfigure}[t]{0.32\linewidth}
        \includegraphics[width=\linewidth]{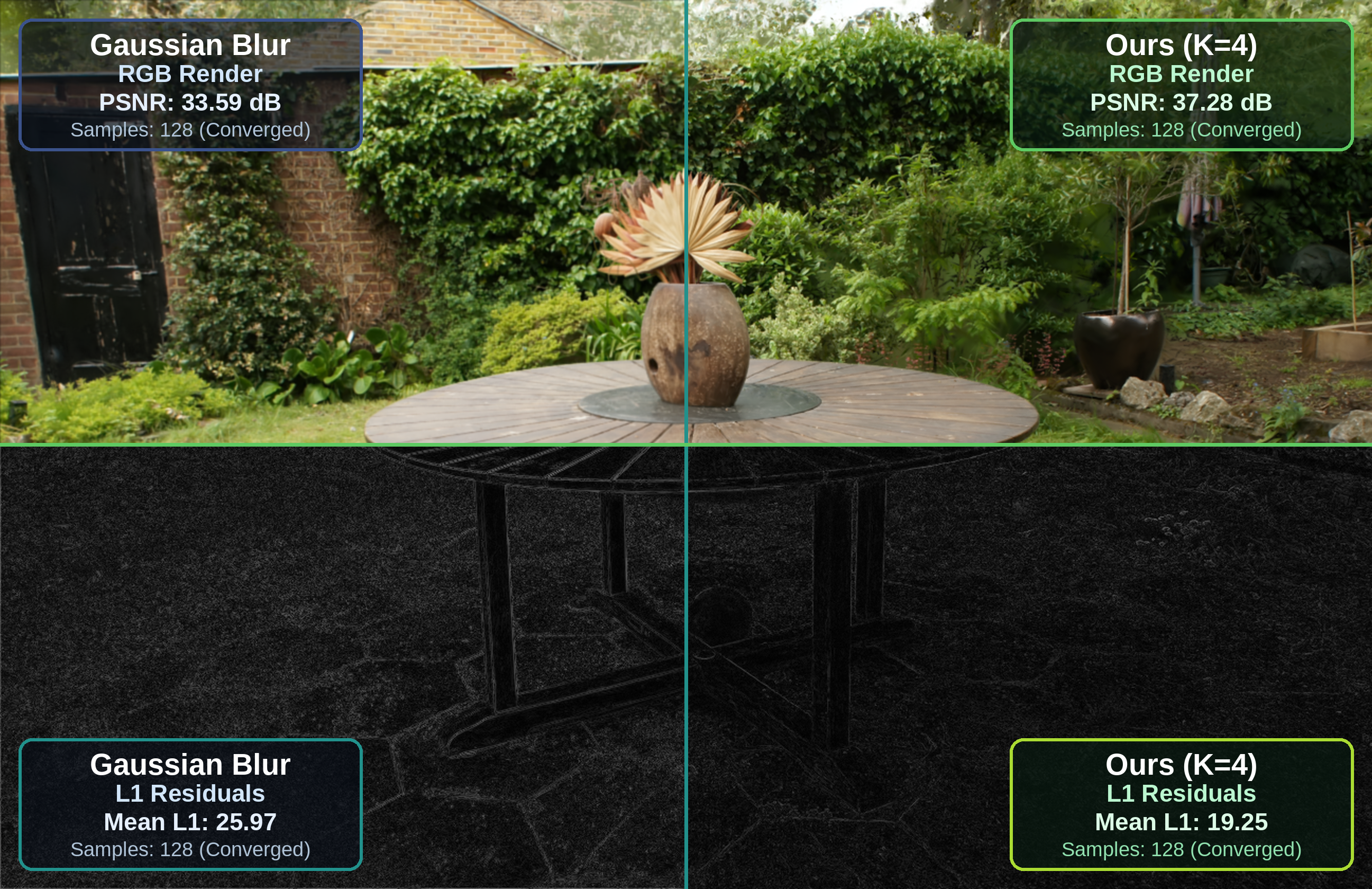}
        \caption{Convergence to sorted 3DGS (128 samples per pixel). Left: Gaussian cross kernel. Right: Spatial reuse with $K=4, |\mathcal{N}|=4$. Bottom: $\ell_1$ error.}
    \end{subfigure}
    \hfill
    \begin{subfigure}[t]{0.31\linewidth}
        \includegraphics[width=\linewidth]{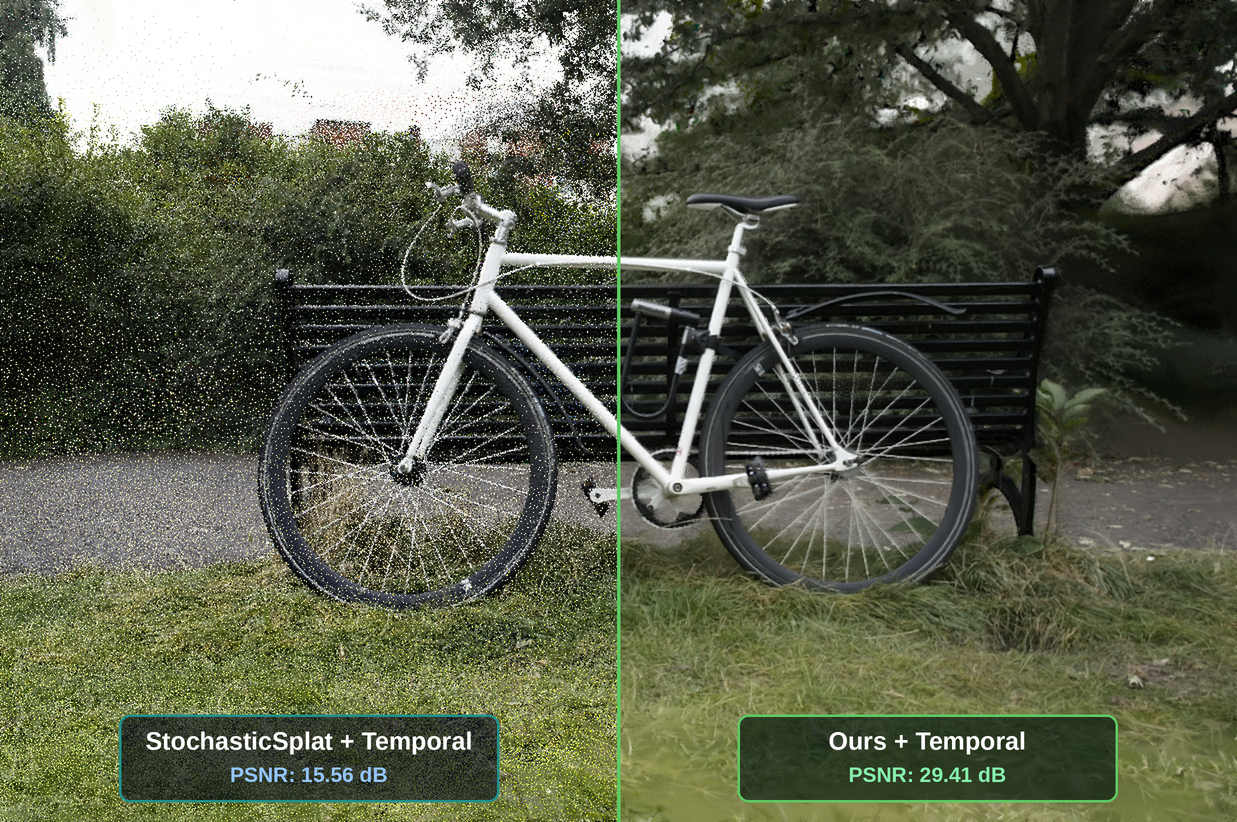}
        \caption{Comparison between \SSplats and Ours with temporal smoothing shows significant noise reduction even under camera motion. }
    \end{subfigure}
    \caption{Visual Ablations.}
    \label{fig:VisualAblation}
\end{figure*}

\begin{table*}[!t]
\centering
\small
\setlength{\tabcolsep}{7pt}
\caption{\textbf{Component ablation on camera path benchmark.} We ablate $\mathcal{L}_{\text{var}}$, Spatial Reuse, and Temporal Reuse combinations across 21 camera trajectories on the Mip-NeRF 360 dataset (3 per scene) against sorted rendering. For Spatial Reuse we set $K=4$, and Temporal Reuse evaluates EMA filtering with our soft history rejection at fixed $\bar{\delta} = 0.70$. STBN was used for all evaluations.}
\label{tab:three_factor_ablation}
\begin{tabular}{ccc cccc}
\toprule
$\bm{\mathcal{L}}_{\textbf{var}}$ & \textbf{Spatial Reuse} & \textbf{Temporal Reuse} & \textbf{tPSNR (dB)} $\uparrow$ & \textbf{SSIM} $\uparrow$ & \textbf{LPIPS} $\downarrow$ & \textbf{Avg. PSNR (dB)} $\uparrow$ \\
\midrule
---          & ---          & ---          & 15.235 & 0.293 & 0.667 & 18.336 \\
---          & \checkmark   & ---          & 21.720 & 0.586 & 0.498 & 24.774 \\
---          & ---          & \checkmark   & 22.204 & 0.619 & 0.522 & 25.227 \\
---          & \checkmark   & \checkmark   & 27.188 & 0.817 & 0.365 & 29.946 \\
\midrule
\checkmark   & ---          & ---          & 22.423 & 0.613 & 0.525 & 25.561 \\
\checkmark   & \checkmark   & ---          & 28.410 & 0.845 & \second{0.345} & 31.437 \\
\checkmark   & ---          & \checkmark   & \second{28.980} & \second{0.863} & 0.357 & \second{31.953} \\
\checkmark   & \checkmark   & \checkmark   & \first{32.449} & \first{0.937} & \first{0.209} & \first{35.085} \\
\bottomrule
\end{tabular}
\end{table*}

\subsection{Camera Motion}

Since temporal filtering cannot be evaluated on isolated static views, we evaluate all methods on continuous camera sequences across 7 scenes of the Mip-NeRF 360 dataset. Following standard practice, we evaluate on the training view resolution, where indoor scenes are downsampled by $2\times$ and outdoor scenes by $4\times$. For each scene, we generate 3 continuous camera paths (21 trajectories, totaling 3,152 frames) by smoothly interpolating keyframe poses from the dataset at 60~FPS. Ground truth sequences are rendered along the exact same trajectories using deterministic sorted 3DGS. We report standard reconstruction metrics (Avg. PSNR, SSIM, and LPIPS) to measure visual quality, as well as temporal PSNR (tPSNR) on inter-frame differences to measure temporal stability and flicker during motion. As summarized in \cref{tab:main_pipeline_comparison}, our pipeline outperforms original \SSplats by a large margin in the temporal domain, yielding significantly higher temporal stability and reduced flicker under continuous camera motion. The significant noise reduction of our other components leads to significant gains in quality under motion even without temporal smoothing.

\paragraph{Component Ablation.}
To evaluate the individual contribution, we conduct a full ablation over our three core components: the variance regularization loss $\mathcal{L}_{\text{var}}$, spatial reuse, and temporal reuse. As reported in \cref{tab:three_factor_ablation}, each component in isolation provides a substantial gain over the raw 1-sample baseline. Enabling all three components achieves the best performance across all metrics. This demonstrates that the three components work synergistically: 
Enabling variance regularization drops the expected number of occluders for each sample (cf.~\cref{fig:occluders}), which in turn increases the accuracy of our reuse weight approximation through historical samples. Further, it reduces noise not only in RGB, but also in depth (cf.~\cref{fig:depth}), which strengthens the accuracy of temporal sample reprojection. Both the variance regularizer and spatial reuse reduce output variance, enabling the dynamic blending weight (cf.~\cref{eq:soft_alpha}) to more effectively distinguish between variance caused by disocclusions or by sample noise, thereby reducing ghosting artifacts. 

We refer to the supplementary material for extensive parameter ablations and per-scene metric breakdowns.

\Cref{fig:VisualAblation} shows visual ablation of all our components.

\section{Discussion}
\label{sec:discussion}

In this work, we have demonstrated that the severe high-frequency noise inherently produced by stochastic order-independent transparency (OIT) rendering methods can be effectively suppressed during both training and inference time with minimal bias. By introducing color variance regularization during optimization and a history-based spatial resampling scheme during inference, our method achieves clean, high-fidelity rendering from just a single primitive sample per pixel.

\paragraph{Different Primitives.}
Our method is not limited strictly to 3D Gaussian Splatting. The principles of our approach could be used on any semi-transparent rasterized primitive that typically requires computationally expensive depth-sorting, extending the benefits of efficient stochastic rendering to a broader range of representations.

\subsection{Limitations}
Despite the massive acceleration in convergence and reduction in noise, our approach introduces a few notable trade-offs:
\begin{itemize}
    \item \textbf{Quality Metric Trade-off:} Our color variance loss ($\mathcal{L}_{\text{var}}$) effectively mitigates noise but also dilutes the training signal of the L1 and SSIM photometric losses, resulting in a slight decrease in peak quality metrics for fully converged models.
    \item \textbf{Overhead of Reuse:} While spatial resampling avoids the exhaustive global sorting of primitives, populating and reading from the K-buffer to evaluate neighboring samples takes additional memory and computational time during inference.
    \item \textbf{Loss of Strict Unbiasedness:} Standard stochastic transparency is unbiased. However, to maintain high performance, our spatial reuse implementation approximates the exact transmittance using only a limited set of $K$ previous samples. Because of this approximation, the spatial resampling is no longer strictly unbiased, exchanging a small degree of mathematical correctness for significantly accelerated convergence.
\end{itemize}

\subsection{Future Work}
Future research should explore dynamic or adaptive sizing of both the spatial neighborhood $\mathcal{N}(x,y)$ and the K-buffer capacity. By intelligently allocating K-buffer size based on local scene complexity or depth variance, it may be possible to reduce the memory overhead and time required for sample reuse without sacrificing noise reduction. Additionally, investigating adaptive weighting for the variance loss could help minimize the trade-off in photometric quality metrics, allowing for optimal visual fidelity across varying primitive distributions. A fast approximation of the full spatio-temporal SNIS estimator could further improve the temporal quality as well.

\section*{Acknowledgements}
We thank Alex Evans for his valuable help in the early stages of the project.

\clearpage

{
    \small
    \bibliographystyle{ieeenat_fullname}
    \bibliography{main}
}

\clearpage
\setcounter{page}{1}
\renewcommand{\thepage}{S\arabic{page}}
\maketitlesupplementary

\noindent In this supplementary document, we provide full algorithm details, additional proofs, and extended ablation results.

\section{Pseudocode}

\Cref{alg:Stochastic} details the sampling procedure introduced in \SSplats to sample $r \sim P_{x,y}$.

\begin{algorithm}[h]
\caption{\SSplats Sampling \cite{kheradmand2025stochasticsplats}}
\label{alg:Stochastic}
\begin{algorithmic}[1] 
\State $r \gets \varnothing, z_r \gets \infty$ \Comment{$\varnothing$: background, $c_\varnothing = c_{\text{background}}$}
\For{each Gaussian $i \in \mathcal{G}(x,y)$ in unsorted order}
\State $u_i \sim U(0, 1)$
\If{$u_i < \alpha_i(x,y)$ \textbf{and} $z_i < z_r$}
    \State $r \gets i$
    \State $z_r \gets z_i$
\EndIf
\EndFor
\State \Return $r, z_r$
\end{algorithmic}
\end{algorithm}

If no fragment survives, \cref{alg:Stochastic} returns the background ($r = \varnothing$). This happens with probability $\prod_{k \in \mathcal{G}(x,y)} (1-\alpha_k(x,y))$, the mass that is missing from $P_{x,y}$; the color variance in \cref{eq:Variance} and $\Lvar$ assume full coverage, where this mass vanishes.

\Cref{alg:KBuffer} gives a full overview of how one frame is rendered with spatial resampling. It is written in scatter form: each pixel $(x,y)$ weights its own sample $r$ for every neighbor $n$ with $\wgt{x,y}{n}(r)$, using its own buffer $\mathcal{K}_{x,y}$, and accumulates the result at $n$. Up to the order of summation, this equals the gather form of \cref{eq:SpatialReuse}, in which the sample of each neighbor $n$ is weighted by $\wgt{n}{x,y}(r_n)$ using $\mathcal{K}_n$, plus the pixel's own sample with weight one.

\begin{algorithm}[h]
\caption{Spatial Resampling}
\label{alg:KBuffer}
\begin{algorithmic}[1]
\State $r, z_r \gets$ \cref{alg:Stochastic} \Comment{$r \sim P_{x,y}$ and its depth}
\State $\text{unique} \gets \textbf{true}$
\For{$n \in \mathcal{N}(x,y)$}
    \State $\wgt{x,y}{n} \gets \frac{\alpha_r(n)}{\alpha_r(x,y)}$
\EndFor
\For{each $k \in \mathcal{K}_{x,y}$} \Comment{Stored: $k$, $z_k$, $\occ{k}{x,y}{n}$}
    \If{$k == r$}
        \State $\text{unique} \gets \textbf{false}$
    \EndIf
    \If{$z_k < z_r$}
        \For{$n \in \mathcal{N}(x,y)$}
            \State $\wgt{x,y}{n} \gets \wgt{x,y}{n} \cdot \occ{k}{x,y}{n}$
        \EndFor
    \EndIf
\EndFor
\If{$\text{unique}$}
    \State insert $r, z_r$ into $\mathcal{K}_{x,y}$, evicting the oldest entry if $|\mathcal{K}_{x,y}| = K$
    \For{$n \in \mathcal{N}(x,y)$}
        \State $\occ{r}{x,y}{n} \gets \frac{1-\alpha_r(n)}{1-\alpha_r(x,y)}$
    \EndFor
\EndIf
\State $S_{x,y} \gets c_r$
\State $W_{x,y} \gets 1$
\State $\textsc{Sync}()$ \Comment{Wait for all writes to finish}
\For{$n \in \mathcal{N}(x,y)$}
    \State $S_{n} \gets S_{n} + \wgt{x,y}{n} \cdot c_r$
    \State $W_{n} \gets W_n + \wgt{x,y}{n}$
\EndFor
\State $\textsc{Sync}()$ \Comment{Wait for all writes to finish}
\State \Return $\Cspat{x,y} = \frac{S_{x,y}}{W_{x,y}}$
\end{algorithmic}
\end{algorithm}

\section{Temporal Recurrence and Soft Rejection Implementation}
\label{sec:appendix_temporal}

This section provides implementation details for the temporal recurrence pass of \cref{sec:spatiotemporal_resampling}, executed immediately after spatial resampling (\cref{sec:spatial}) in the same compute pipeline.

\paragraph{Reprojection and History Fetch.}
For each pixel $p = (x,y)$ with depth $z_p$ of its sampled fragment, we reconstruct its view-space position and reproject it into the previous frame's screen space using the prior view-projection matrix:
\begin{equation}
p' = (x',y') = \pi_{t-1}\big(\pi_t^{-1}(p, z_p)\big) \, .
\end{equation}
The screen-space motion magnitude is $v = \|p - p'\|_2$. 
If $p'$ lies outside the viewport or within two pixels of its boundary, temporal accumulation is bypassed and the spatial estimate $\Cspat[t]{p}$ is output directly. 
Otherwise, we retrieve the historical radiance $\hat{C}^{t-1}_{p'} = \textsc{Sample}(I_{t-1}, p')$ using Catmull-Rom bicubic sampling to minimize interpolation blur.

\paragraph{Neighborhood Moments in Color Space.}
To estimate the local radiance distribution, we evaluate the first and second moments of the current samples $c_{r_q}$ over a $3 \times 3$ window $\mathcal{N}_3(p)$ centered at $p$:
\begin{align}
\bar{C}_p &= \frac{1}{9} \sum_{q \in \mathcal{N}_3(p)} c_{r_q} \, , \\ 
s_p &= \sqrt{\max\Big(0,\, \frac{1}{9} \sum_{q \in \mathcal{N}_3(p)} c_{r_q}^2 - \bar{C}_p^2\Big)} \, .
\end{align}
Moments are computed channel-wise in YCoCg space to decouple luminance variance from chromaticity. 
The confidence half-extents are:
\begin{equation}
e = \max(\gamma s_p, e_{\min}) \, ,
\end{equation}
where $e_{\min} = (0.01, 0.005, 0.005)$ prevents degenerate zero-volume bounds on flat regions.

\paragraph{Velocity-Adaptive Confidence Scaling ($\gamma$).}
Stationary regions require wider confidence bounds to preserve accumulated history and avoid premature rejection, whereas moving regions require tighter bounds to flush stale history across disocclusions. 
Following established real-time practice~\cite{yang2020survey}, we dynamically scale $\gamma$ with screen velocity $v$:
\begin{equation}
\gamma = \lerp\left(\gamma_{\mathrm{static}}, \gamma_{\mathrm{mot}}, \min\left(1,\, \frac{v}{v_0}\right)\right) \, ,
\end{equation}
where $\lerp(a, b, \theta) = (1-\theta)\,a + \theta\, b$, $\gamma_{\mathrm{static}} = 4.0$ covers $99.99\%$ of a normal distribution on stationary surfaces, $\gamma_{\mathrm{mot}} = 1.0$ tightens the rejection envelope under movement, and $v_0 = 4.0$\,px is the velocity threshold.

\paragraph{Statistical Distance and Soft History Blending.}
Given the discrepancy vector $d = \hat{C}^{t-1}_{p'} - \bar{C}_p$, we compute its $L_\infty$ distance from the neighborhood mean, normalized by the confidence half-extents so that $m = 1$ on the boundary of the confidence box:
\begin{equation}
m = \max\left(\left|\frac{d_{\mathrm{Y}}}{e_{\mathrm{Y}}}\right|,\, \left|\frac{d_{\mathrm{Co}}}{e_{\mathrm{Co}}}\right|,\, \left|\frac{d_{\mathrm{Cg}}}{e_{\mathrm{Cg}}}\right|\right) \, .
\end{equation}
When $m > 1$, the historical sample is an outlier. Rather than projecting $\hat{C}^{t-1}_{p'}$ onto the box boundary along $d / m$ (which biases color ratios), we compute a continuous retention factor $\rho = 1 / m \in (0, 1)$ and attenuate the history weight as in \cref{eq:soft_alpha}:
\begin{equation}
\delta_{p}^t = 
\begin{cases}
\bar{\delta} & \text{if } m \le 1 \, , \\
\rho \, \bar{\delta} & \text{if } m > 1 \, ,
\end{cases}
\end{equation}
with nominal history weight $\bar{\delta} = 0.70$, \ie, the new spatial estimate receives a nominal weight of $1 - \bar{\delta} = 0.30$. 
The final temporal radiance is obtained by blending the unclipped historical color directly, which is \cref{eq:ema} with the history read at $p'$ (\cref{par:camera_motion}):
\begin{equation}
\hat{C}_p^t = \delta_{p}^t \, \hat{C}^{t-1}_{p'} + (1 - \delta_{p}^t) \, \Cspat[t]{p} \, .
\end{equation}
\Cref{alg:temporal_soft_rejection} summarizes the complete procedure.

\begin{algorithm}[t]
\caption{Temporal Resampling with Soft History Rejection}
\label{alg:temporal_soft_rejection}
\begin{algorithmic}[1]
\small
\Require Spatial estimate $\Cspat[t]{p}$, current samples $c_{r_q}$, depth $z_p$, history texture $I_{t-1}$, camera projections $\pi_t, \pi_{t-1}$, nominal history weight $\bar{\delta}$, parameters $\gamma_{\mathrm{static}}, \gamma_{\mathrm{mot}}, v_0, e_{\min}$
\Ensure Filtered radiance $\hat{C}^t_p$
\Statex \textbf{// 1. Backward Reprojection}
\State $p' \gets \pi_{t-1}\big(\pi_t^{-1}(p, z_p)\big)$
\State \textbf{if} $p'$ outside viewport \textbf{then return} $\Cspat[t]{p}$
\State $v \gets \|p - p'\|_2$ \Comment{Velocity magnitude in pixels}
\State $\hat{C}^{t-1}_{p'} \gets \textsc{Sample}(I_{t-1}, p')$
\Statex \textbf{// 2. Neighborhood Moments (channel-wise)}
\State $\bar{C}_p \gets \frac{1}{9} \sum_{q \in \mathcal{N}_3(p)} c_{r_q}$
\State $s_p \gets \sqrt{\max\Big(0,\, \frac{1}{9} \sum_{q \in \mathcal{N}_3(p)} c_{r_q}^2 - \bar{C}_p^2\Big)}$
\Statex \textbf{// 3. Velocity-Adaptive Confidence Bounds}
\State $\gamma \gets \lerp\big(\gamma_{\mathrm{static}},\, \gamma_{\mathrm{mot}},\, \min(1.0,\, v / v_0)\big)$
\State $e \gets \max(\gamma s_p,\, e_{\min})$
\Statex \textbf{// 4. Normalized Discrepancy \& Soft Rejection}
\State $d \gets \hat{C}^{t-1}_{p'} - \bar{C}_p$
\State $m \gets \max\big(|d_{\mathrm{Y}} / e_{\mathrm{Y}}|,\, |d_{\mathrm{Co}} / e_{\mathrm{Co}}|,\, |d_{\mathrm{Cg}} / e_{\mathrm{Cg}}|\big)$
\If{$m > 1.0$}
    \State $\rho \gets 1.0 / m$ \Comment{Retention factor}
    \State $\delta_{p}^t \gets \rho\, \bar{\delta}$ \Comment{Attenuate history weight}
\Else
    \State $\delta_{p}^t \gets \bar{\delta}$ \Comment{Inlier sample}
\EndIf
\Statex \textbf{// 5. Temporal Blend}
\State $\hat{C}^t_p \gets \delta_{p}^t\, \hat{C}^{t-1}_{p'} + (1 - \delta_{p}^t)\, \Cspat[t]{p}$
\State \Return $\hat{C}^t_p$
\end{algorithmic}
\end{algorithm}

\section{Additional Experiments}

\subsection{Effects of \texorpdfstring{$\Lvar$}{L\_var}}
In this section, we examine the impact of $\Lvar$ on the trained Gaussian model.

\begin{itemize}
    \item \textbf{Opacity:} The results from \cref{fig:opacity} show that $\Lvar$ naturally acts as a binarization penalty on opacity, pushing Gaussians towards either full opacity or full transparency.
    \begin{figure}[htbp]
        \centering
        \includegraphics[width=0.9\linewidth]{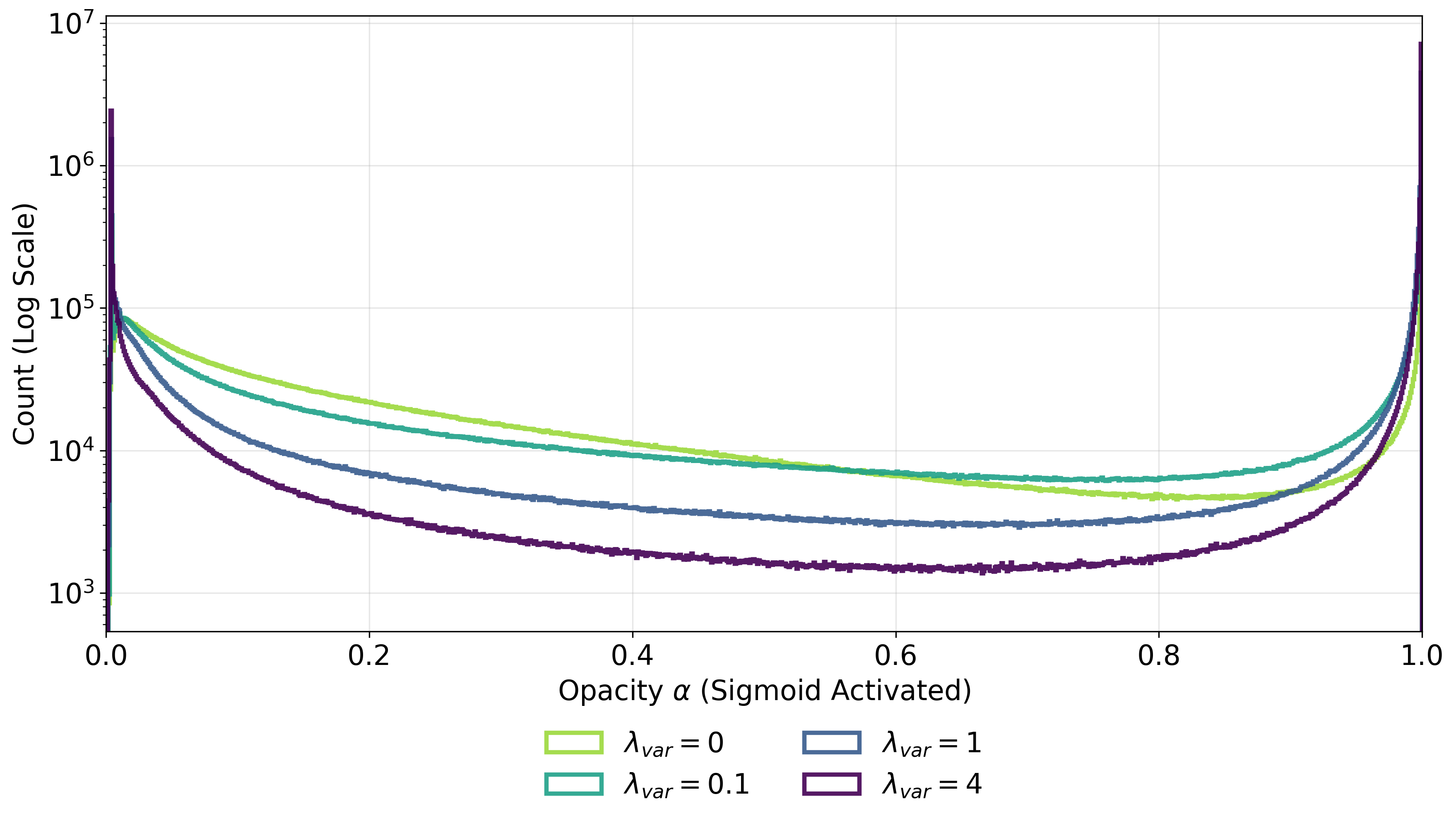}
        \caption{Log-scale histogram of primitive counts with respect to activated opacity over all Mip-NeRF 360 scenes trained with various values of $\lvar$.}
        \label{fig:opacity}
    \end{figure}
    
    \item \textbf{Scale:} Increasing the effect of $\lvar$ shifts the size distribution shown in \cref{fig:scales} towards smaller Gaussians. At the same time, we observe an increase in very large Gaussians for high values of $\lvar$.
    \begin{figure}[htbp]
        \centering
        \includegraphics[width=0.9\linewidth]{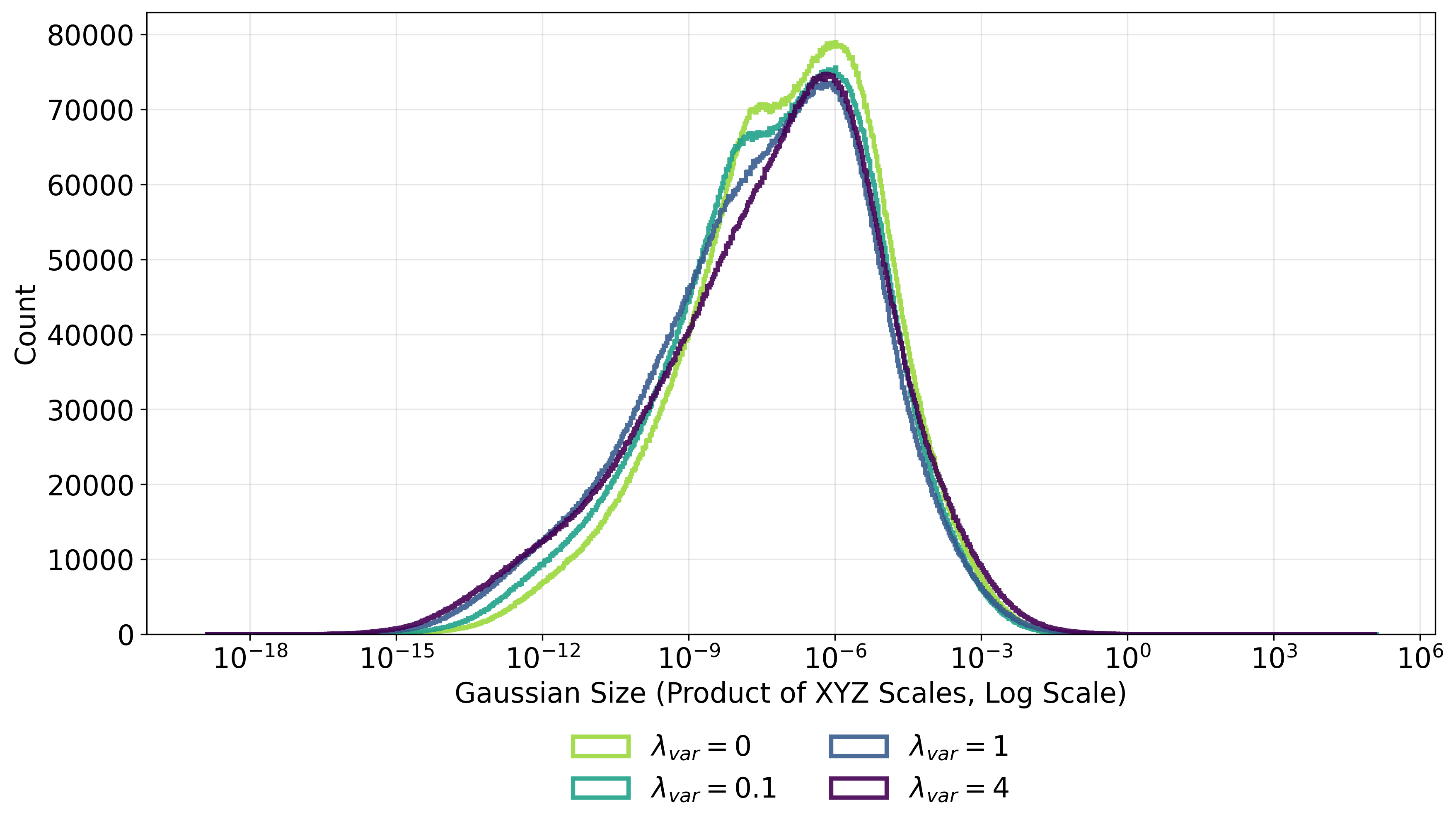}
        \caption{Histogram of primitive counts with respect to Gaussian size (log-scale) over all Mip-NeRF 360 scenes trained with various values of $\lvar$.}
        \label{fig:scales}
    \end{figure}

    \item \textbf{Image Quality:} High values of $\lvar$ restrict the optimizer's ability to use blending for approximating certain view-dependent effects and compensating for inconsistencies in the input images. \Cref{fig:five_images} shows a comparison of rendered images on models trained with varying strength of $\Lvar$. For small values of $\lvar$, the difference is barely noticeable, while for high values a loss of fine details and general blurriness can be observed.

    \begin{figure*}[t]
        \centering
        \begin{subfigure}[b]{0.19\textwidth}
            \centering
            \includegraphics[width=\textwidth]{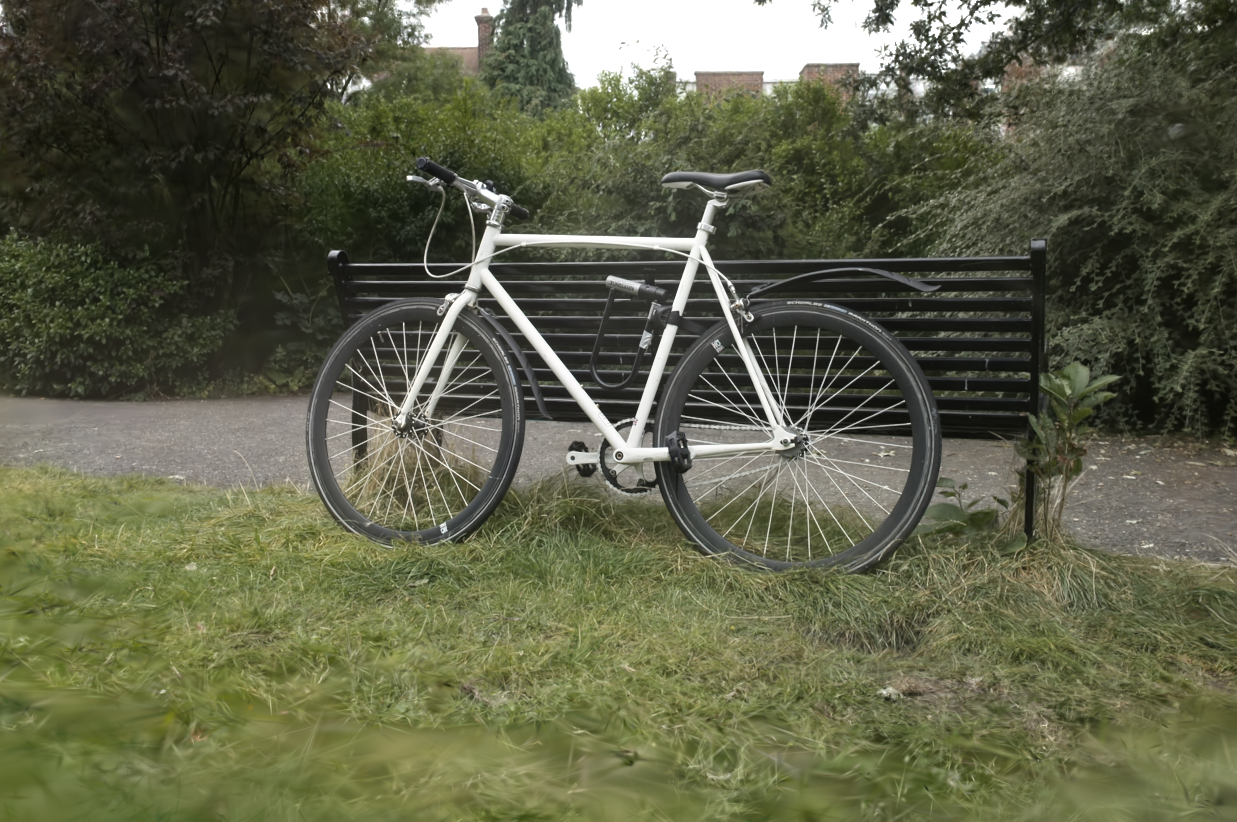}
            \caption{$\lvar = 0$}
            \label{fig:var_bicycle_0}
        \end{subfigure}
        \hfill
        \begin{subfigure}[b]{0.19\textwidth}
            \centering
            \includegraphics[width=\textwidth]{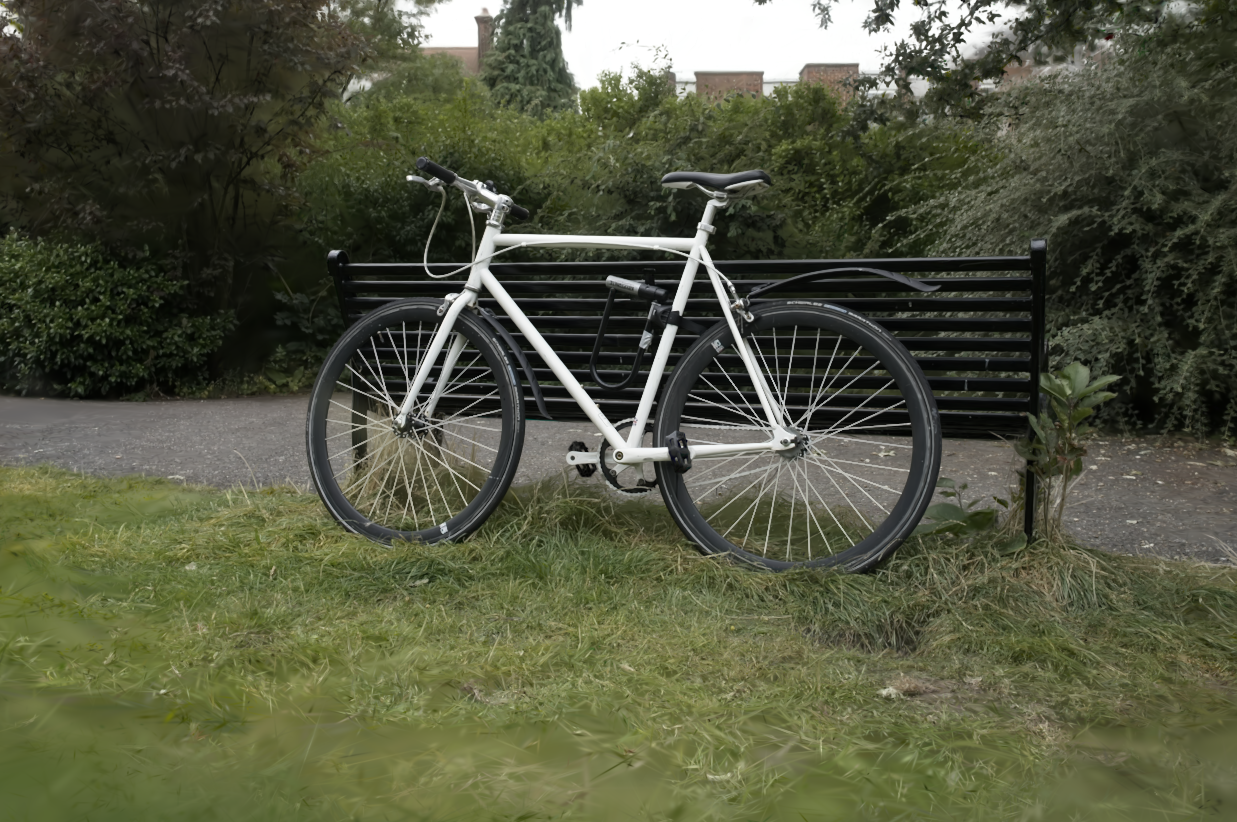}
            \caption{$\lvar = 1$}
            \label{fig:var_bicycle_1}
        \end{subfigure}
        \hfill
        \begin{subfigure}[b]{0.19\textwidth}
            \centering
            \includegraphics[width=\textwidth]{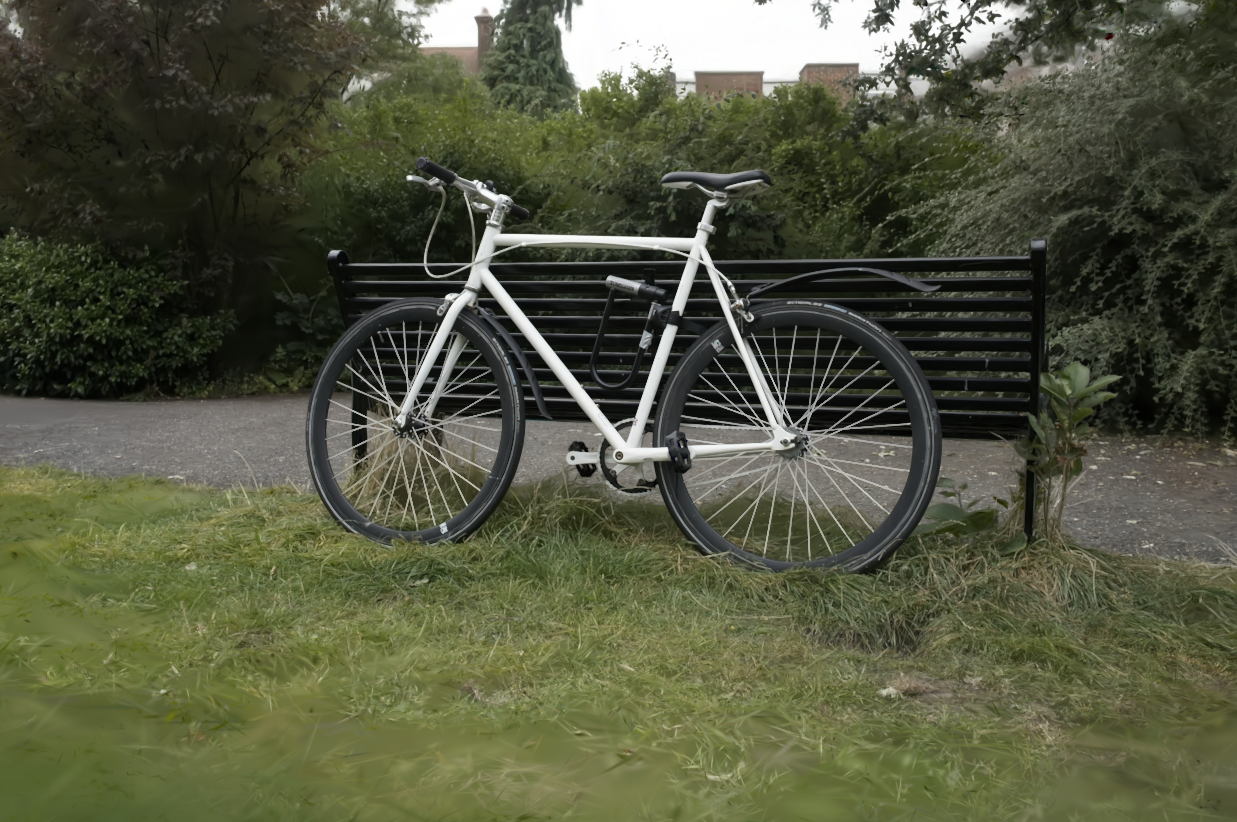}
            \caption{$\lvar = 2$}
            \label{fig:var_bicycle_2}
        \end{subfigure}
        \hfill
        \begin{subfigure}[b]{0.19\textwidth}
            \centering
            \includegraphics[width=\textwidth]{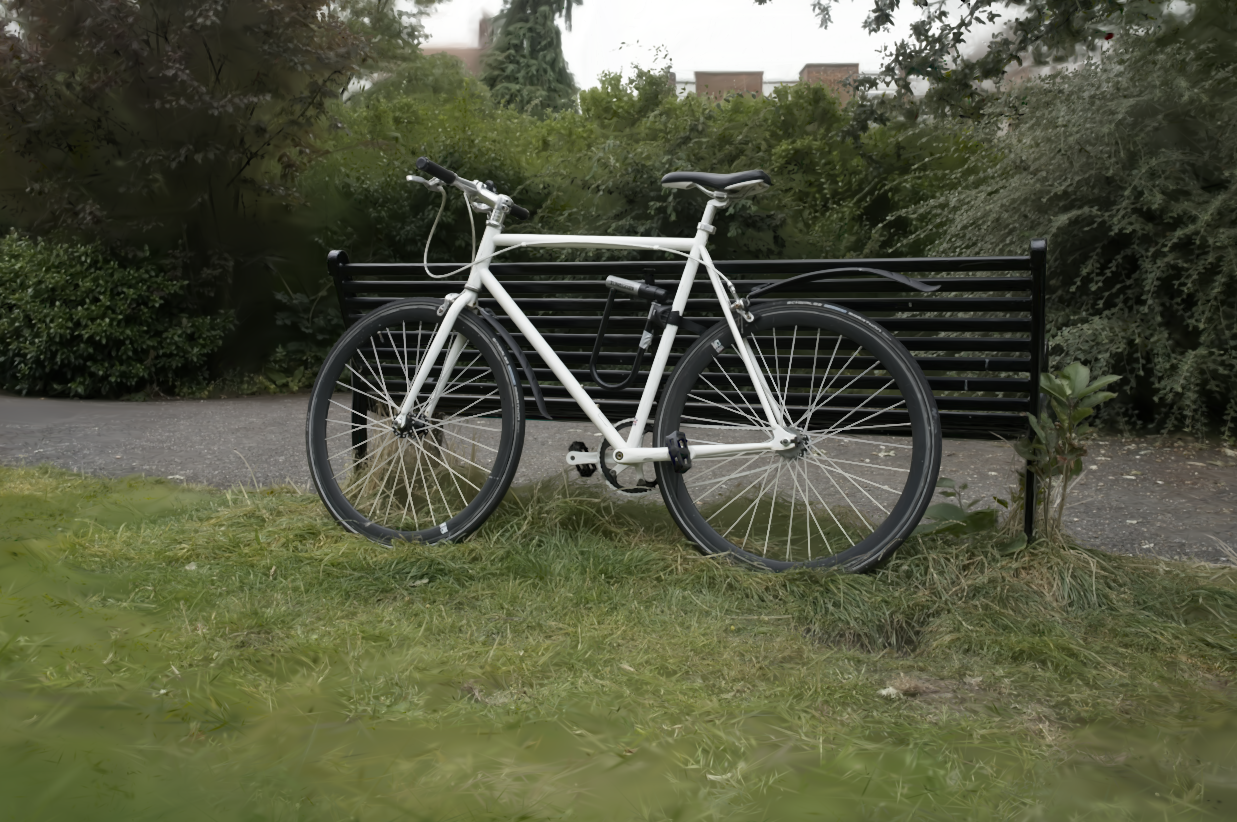}
            \caption{$\lvar = 3$}
            \label{fig:var_bicycle_3}
        \end{subfigure}
        \hfill
        \begin{subfigure}[b]{0.19\textwidth}
            \centering
            \includegraphics[width=\textwidth]{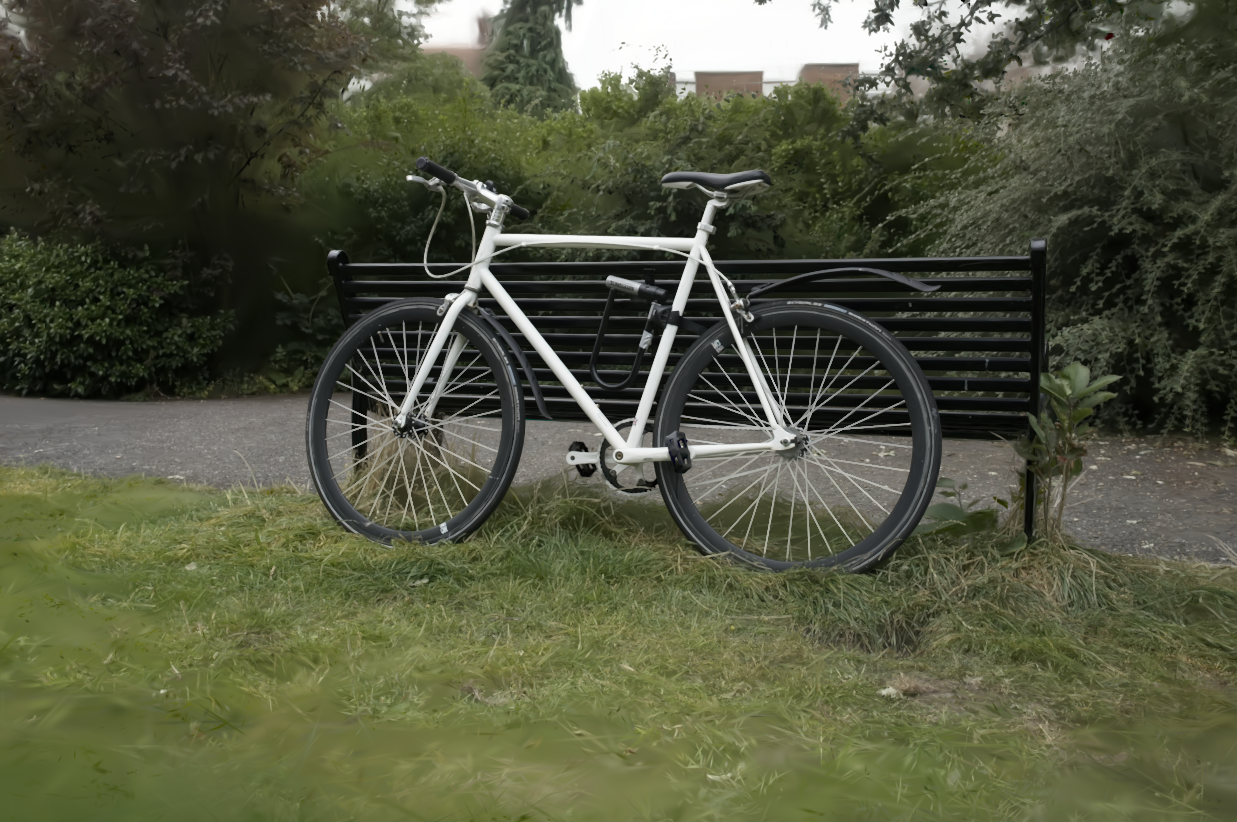}
            \caption{$\lvar = 4$}
            \label{fig:var_bicycle_4}
        \end{subfigure}
        \caption{Test view of the bicycle scene, rendered deterministically with models trained with varying $\lvar$.}
        \label{fig:five_images}
    \end{figure*}

    \item \textbf{Color:} As shown in \cref{fig:colors}, $\Lvar$ suppresses outliers in RGB space. In particular, it reduces the number of bright primitives, which are often floater artifacts.

    \item \textbf{Blending:} We render test images using vanilla 3DGS for all scenes of the Mip-NeRF 360 dataset and store for each pixel the number of Gaussians that are blended before early exit, the expected transmittance, and the expected number of occluders on \SSplats samples. The results in \cref{fig:blending} show that increasing $\lvar$ leads to more Gaussians being blended for each pixel, but also simultaneously decreases the expected number of occluders and expected transmittance. This indicates that the sampling distribution $P_{x,y}$ is shifted towards the camera. Whereas standard 3DGS often represents surfaces as a set of opaque Gaussians and a set of semi-transparent ``fuzz'' on top to add details, the variance loss shifts the radiance mass forward along the ray and thereby greatly reduces the number and opacity of occluders for importance-sampled Gaussians.

    \item \textbf{Depth:} The variance regularizer not only reduces variance in color, but also in depth, as illustrated in \cref{fig:depth}. For more discussion of the geometric effects of $\Lvar$, we refer to Radl et al.~\cite{radl2026confidence}.
    \begin{figure}[h]
        \centering
        \includegraphics[width=\linewidth]{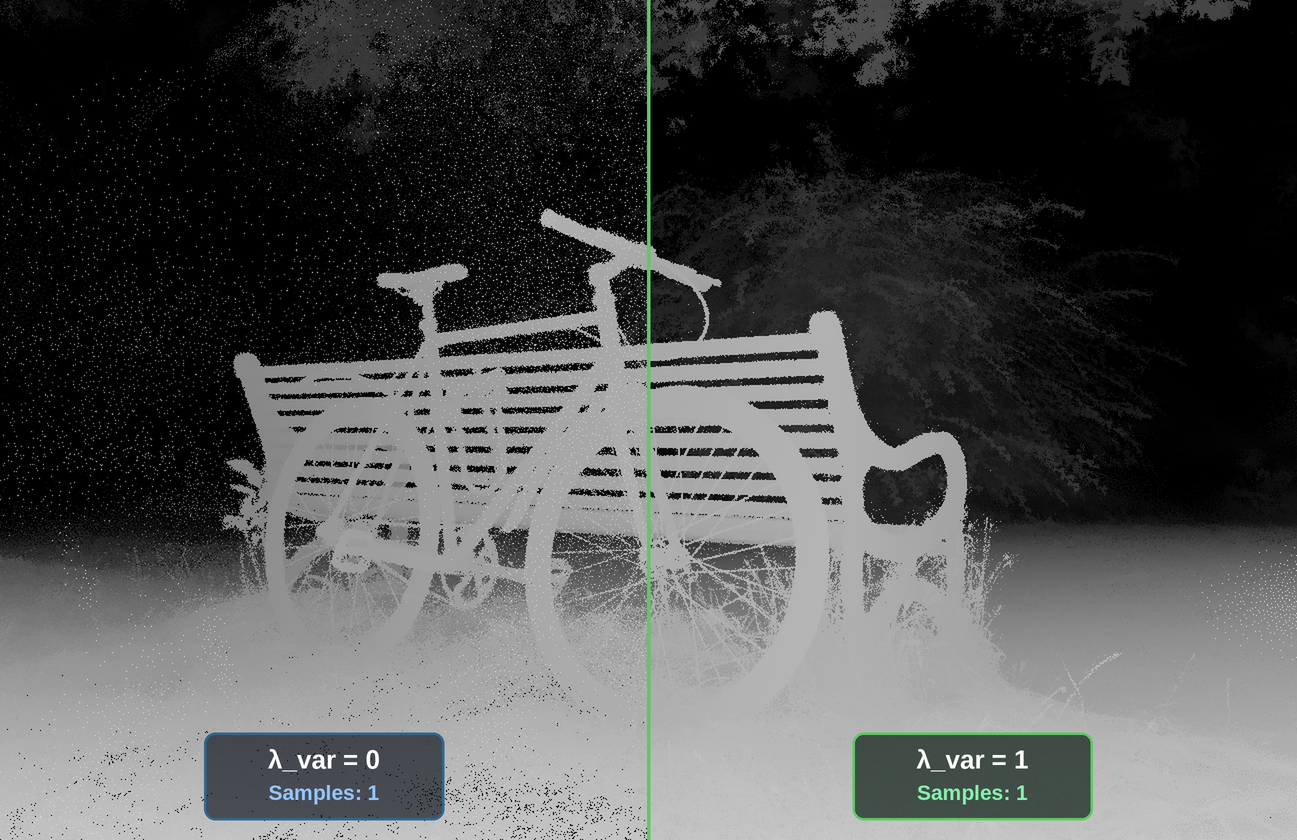}
        \caption{Increasing $\lvar$ also reduces variance in stochastically rendered depth. Images were rendered at 1 sample per pixel.}
        \label{fig:depth}
    \end{figure}
\end{itemize}

\begin{figure}[h]
    \centering
    \textbf{Bicycle}\par\medskip
    \begin{subfigure}[b]{0.48\linewidth}
        \centering
        \includegraphics[trim=32cm 2cm 32cm 10cm, clip, width=\textwidth]{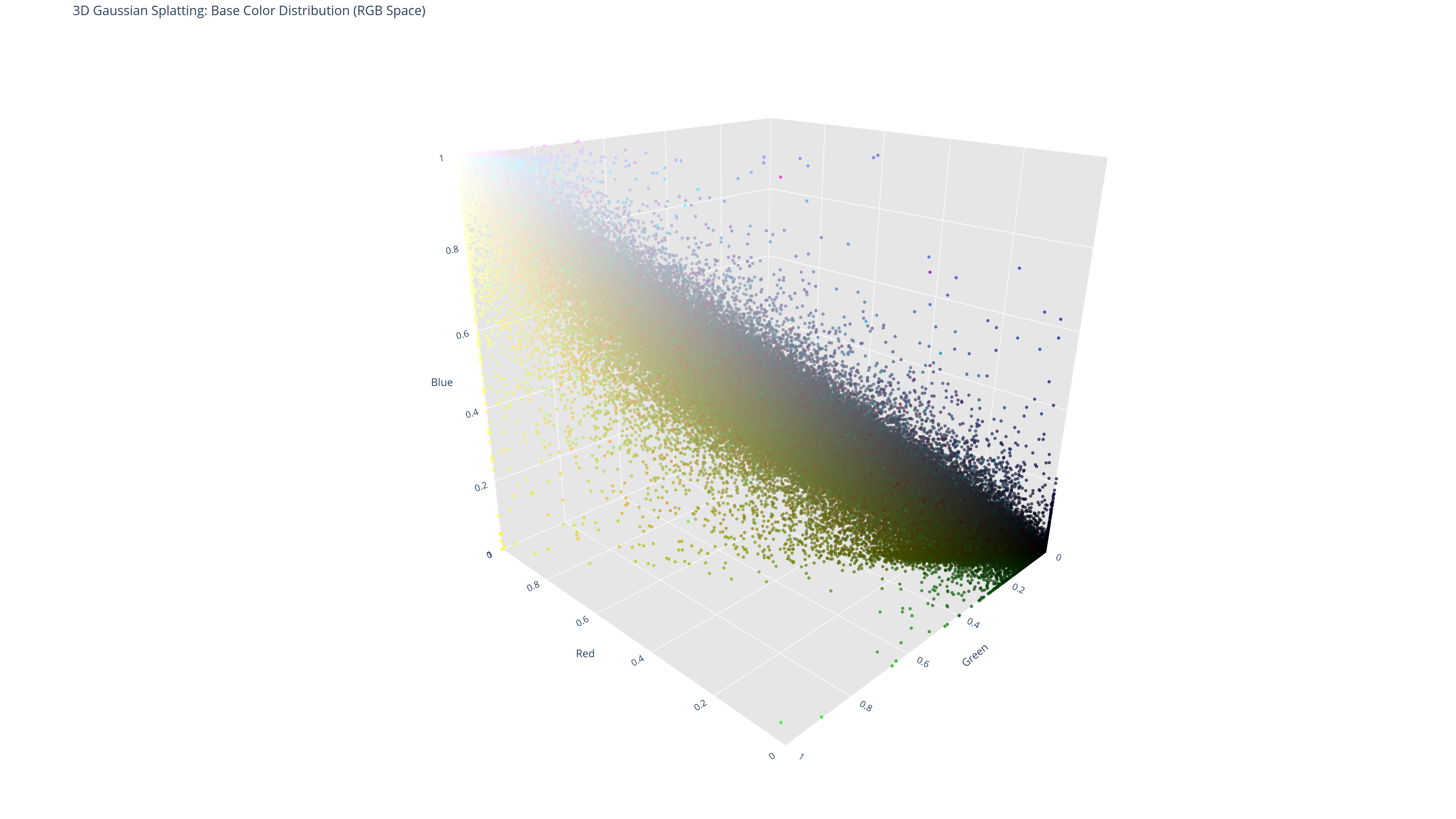}
        \caption{$\lvar=0$}
        \label{fig:color_bicycle_0}
    \end{subfigure}
    \hfill
    \begin{subfigure}[b]{0.48\linewidth}
        \centering
        \includegraphics[trim=32cm 2cm 32cm 10cm, clip, width=\textwidth]{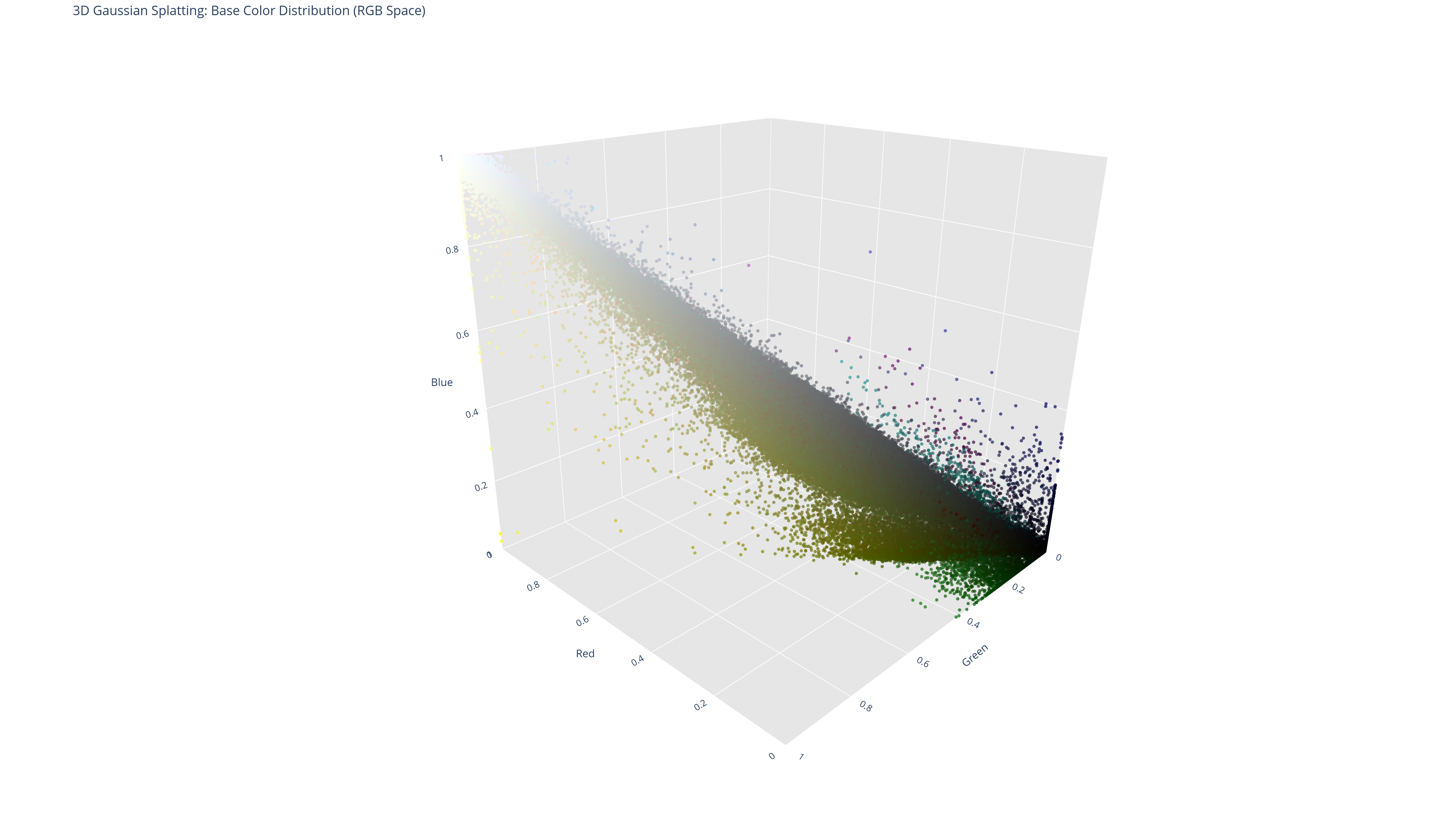}
        \caption{$\lvar=1$}
        \label{fig:color_bicycle_1}
    \end{subfigure}
    
    \vspace{1.5em}

    \textbf{Kitchen}\par\medskip
    \begin{subfigure}[b]{0.48\linewidth}
        \centering
        \includegraphics[trim=32cm 2cm 32cm 10cm, clip, width=\textwidth]{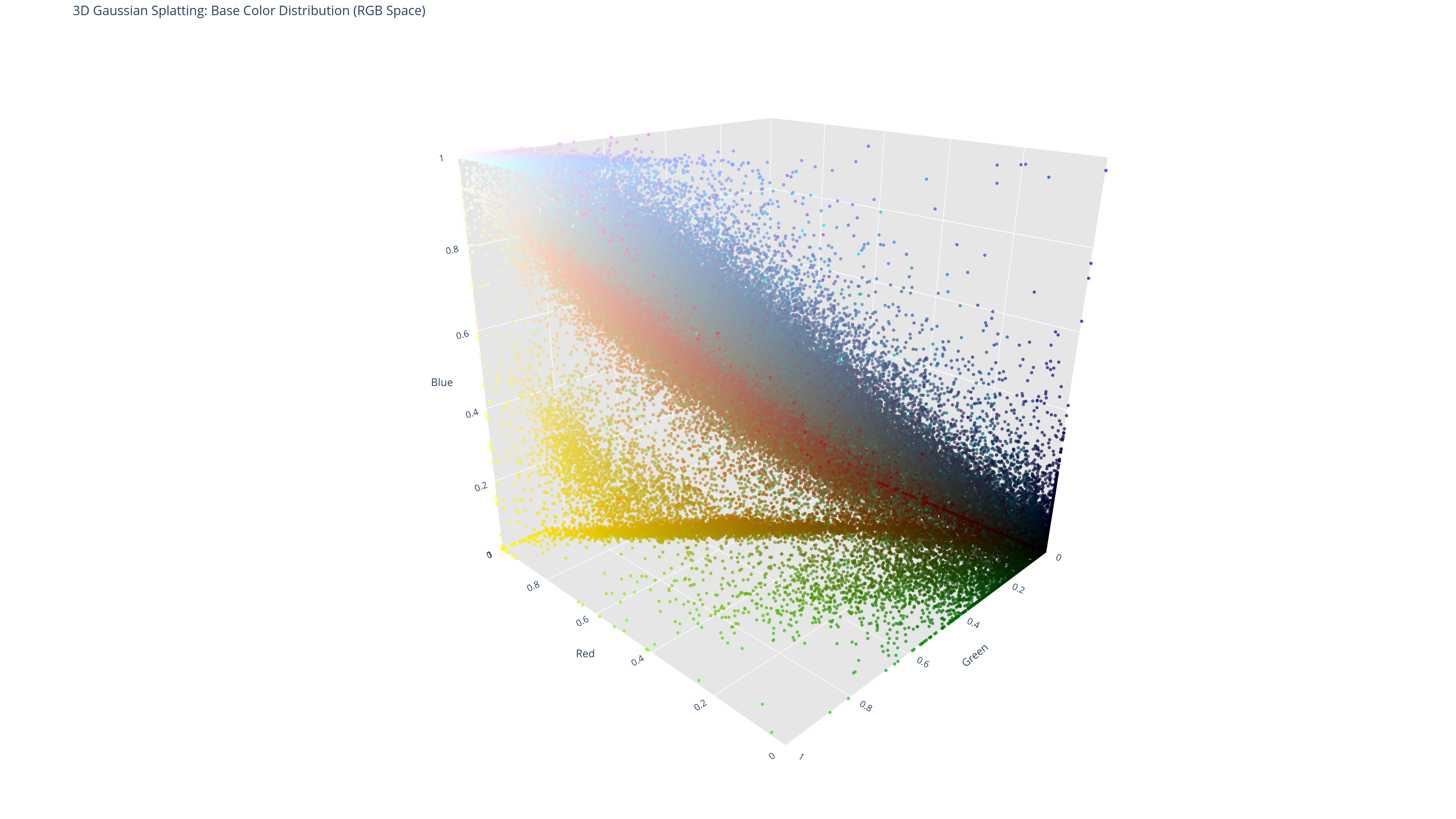}
        \caption{$\lvar=0$}
        \label{fig:color_kitchen_0}
    \end{subfigure}
    \hfill
    \begin{subfigure}[b]{0.48\linewidth}
        \centering
        \includegraphics[trim=32cm 2cm 32cm 10cm, clip, width=\textwidth]{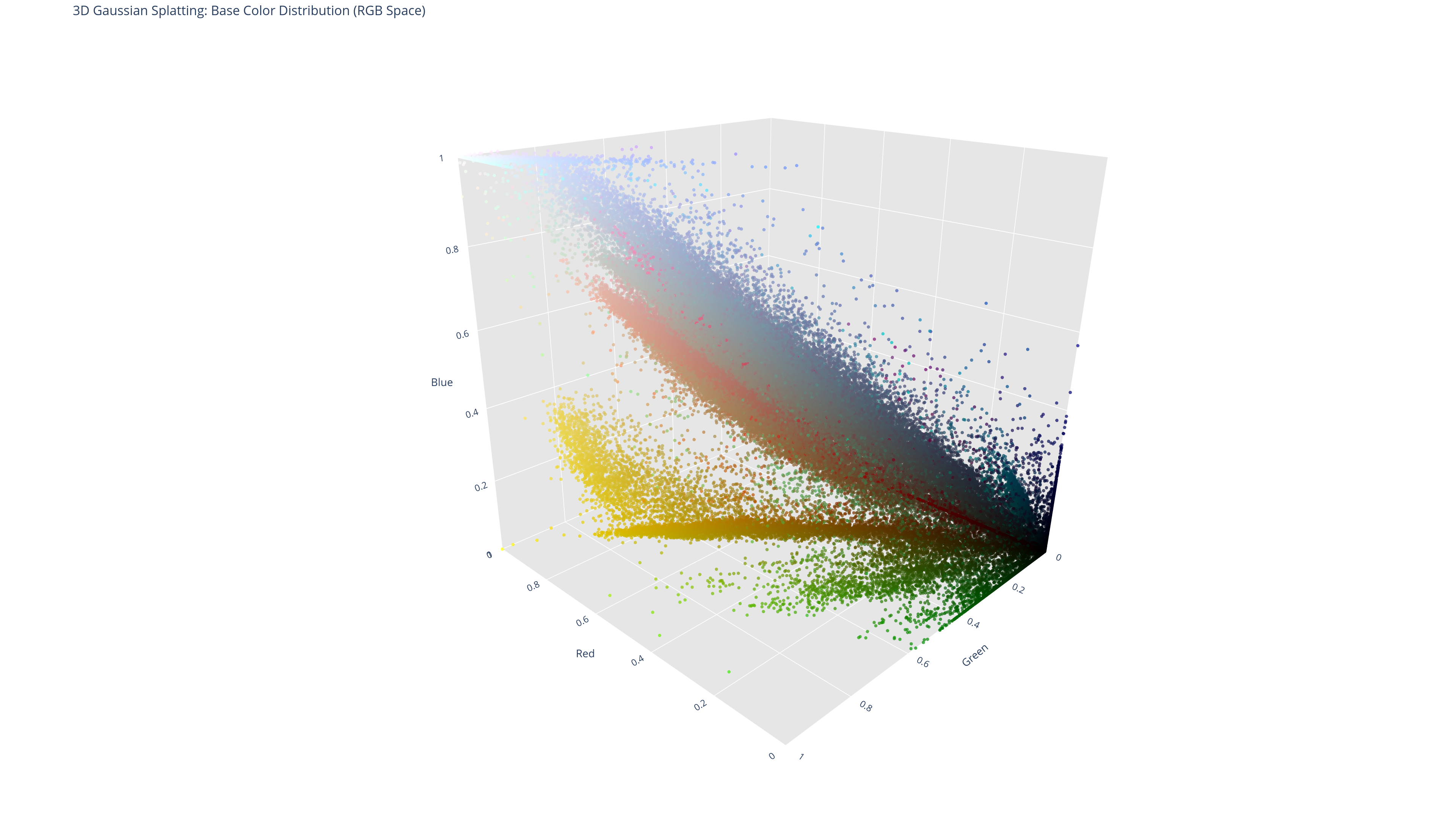}
        \caption{$\lvar=1$}
        \label{fig:color_kitchen_1}
    \end{subfigure}

    \vspace{0.5em}
    \caption{Distribution of primitive base colors in RGB space for scenes of Mip-NeRF 360 trained with and without variance loss.}
    \label{fig:colors}
\end{figure}

\begin{figure*}[t]
\centering
\begin{subfigure}[t]{0.3\textwidth}
    \centering
    \includegraphics[width=\linewidth]{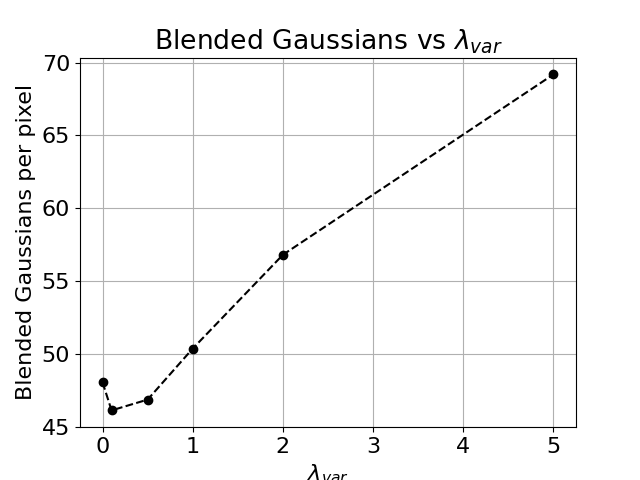}
    \caption{Number of Gaussians blended per pixel before early exit.}
\end{subfigure}
\hfill
\begin{subfigure}[t]{0.3\textwidth}
    \centering
    \includegraphics[width=\linewidth]{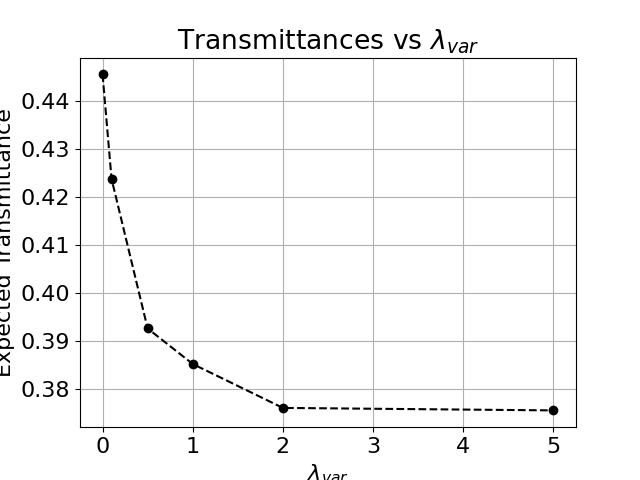}
    \caption{Expected transmittance of sampled Gaussian.}
    \label{fig:blended_sub}
\end{subfigure}
\hfill
\begin{subfigure}[t]{0.3\textwidth}
    \centering
    \includegraphics[width=\linewidth]{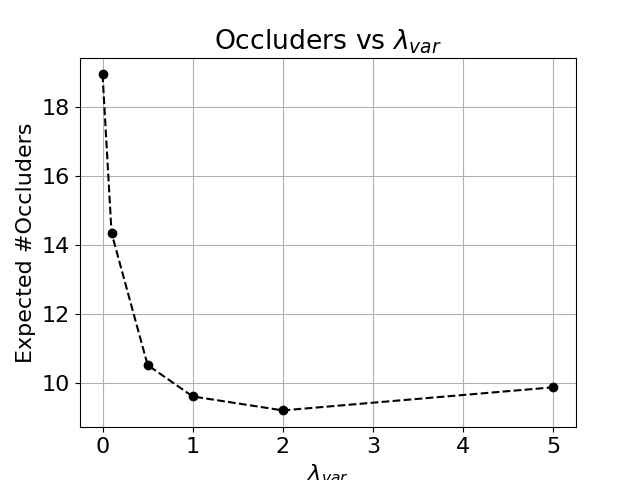}
    \caption{Expected number of occluding primitives for a sampled Gaussian.}
    \label{fig:occluders}
\end{subfigure}
\caption{Effect of $\Lvar$ on alpha-blending.}
\label{fig:blending}
\end{figure*}

\subsection{Additional Quality Metrics}
\Cref{fig:VarVSPSNR_continued,fig:ChoiceOfK_continued,fig:FreeLunch_continued} contain additional quality metrics corresponding to the evaluations from \cref{fig:VarVSPSNR,fig:ChoiceOfK,fig:FreeLunch}, respectively.

\begin{figure}[h]
    \centering
    \includegraphics[width=0.8\linewidth]{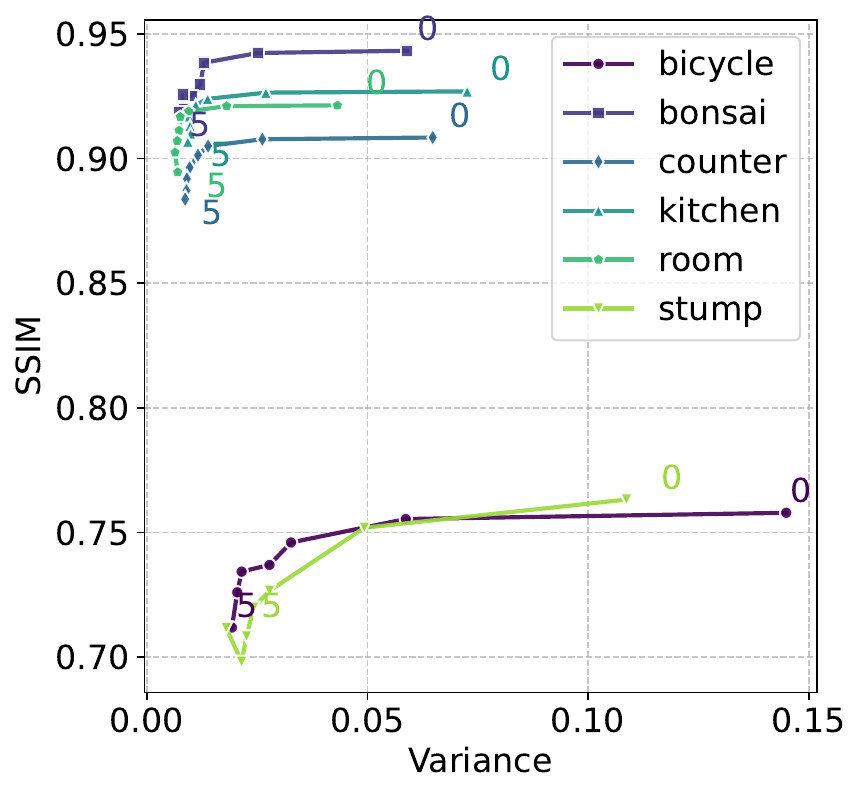}
    \includegraphics[width=0.8\linewidth]{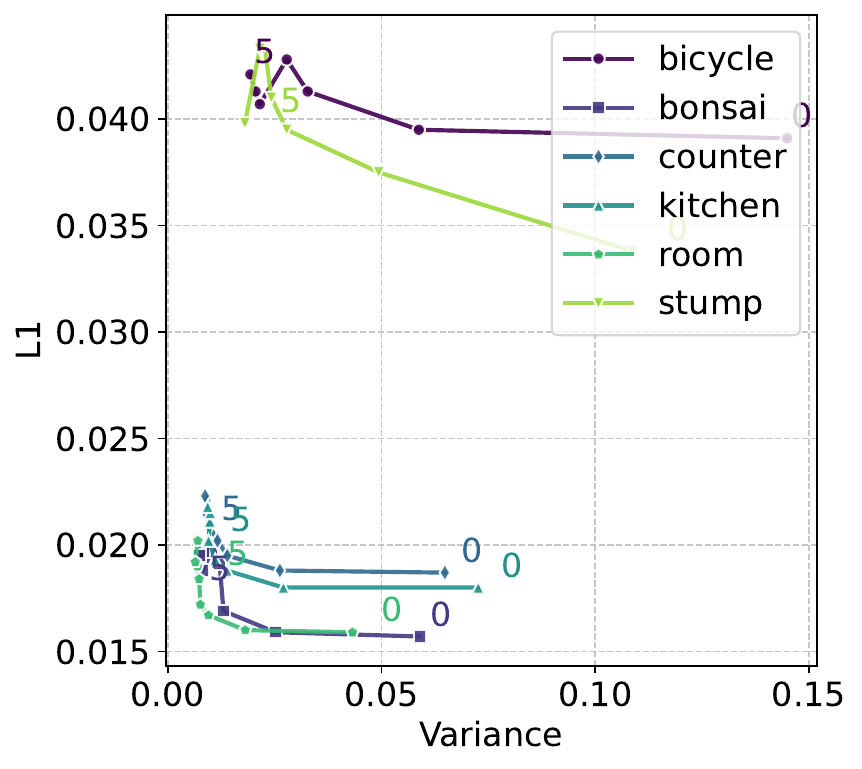}
    \caption{Comparison of quality metrics and variance for trained models with varying $\lvar$ on scenes from the Mip-NeRF 360 dataset.}
    \label{fig:VarVSPSNR_continued}
\end{figure}

\begin{figure}[h]
    \centering
    \includegraphics[width=0.99\linewidth]{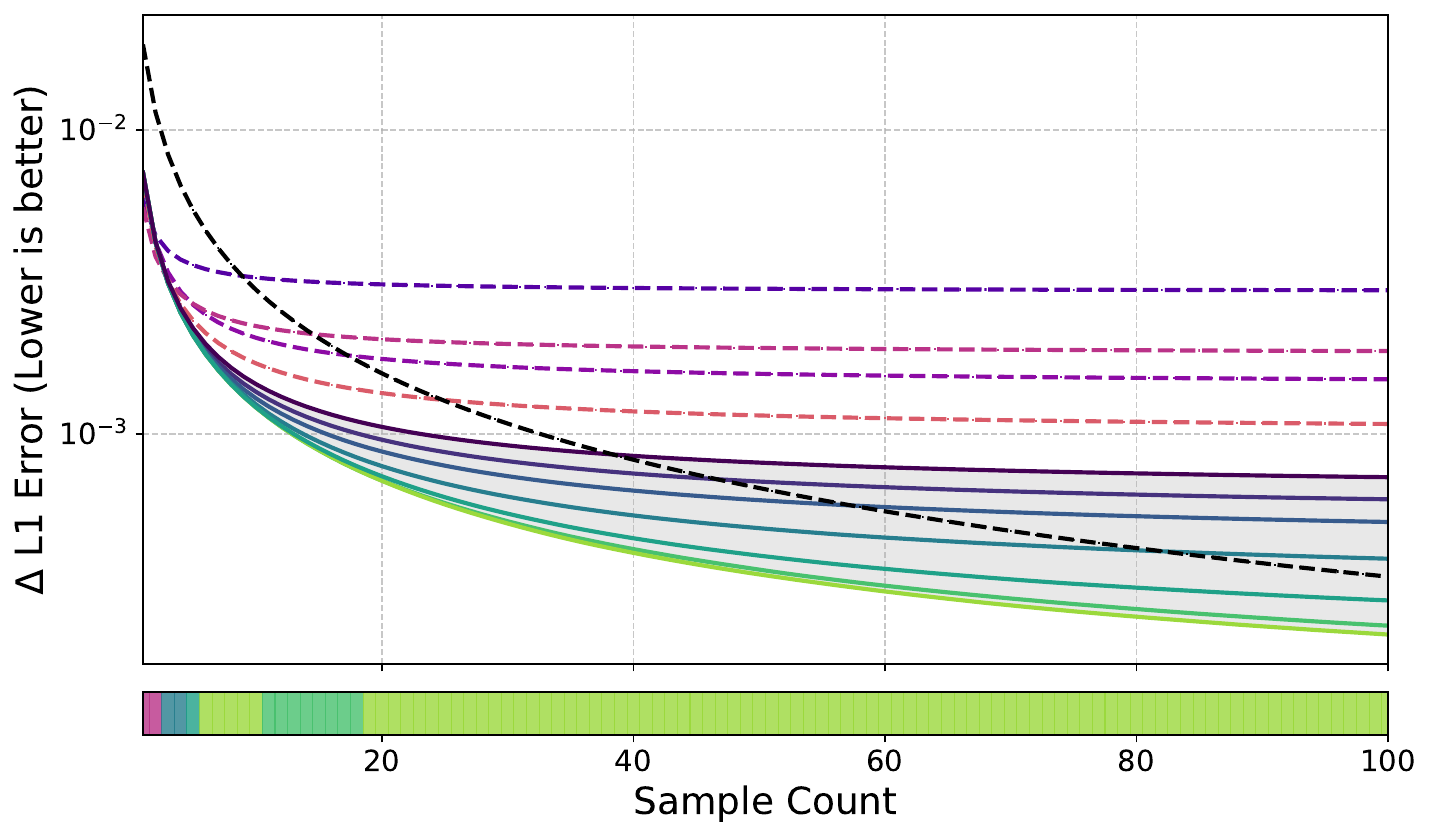}
    \includegraphics[width=0.99\linewidth]{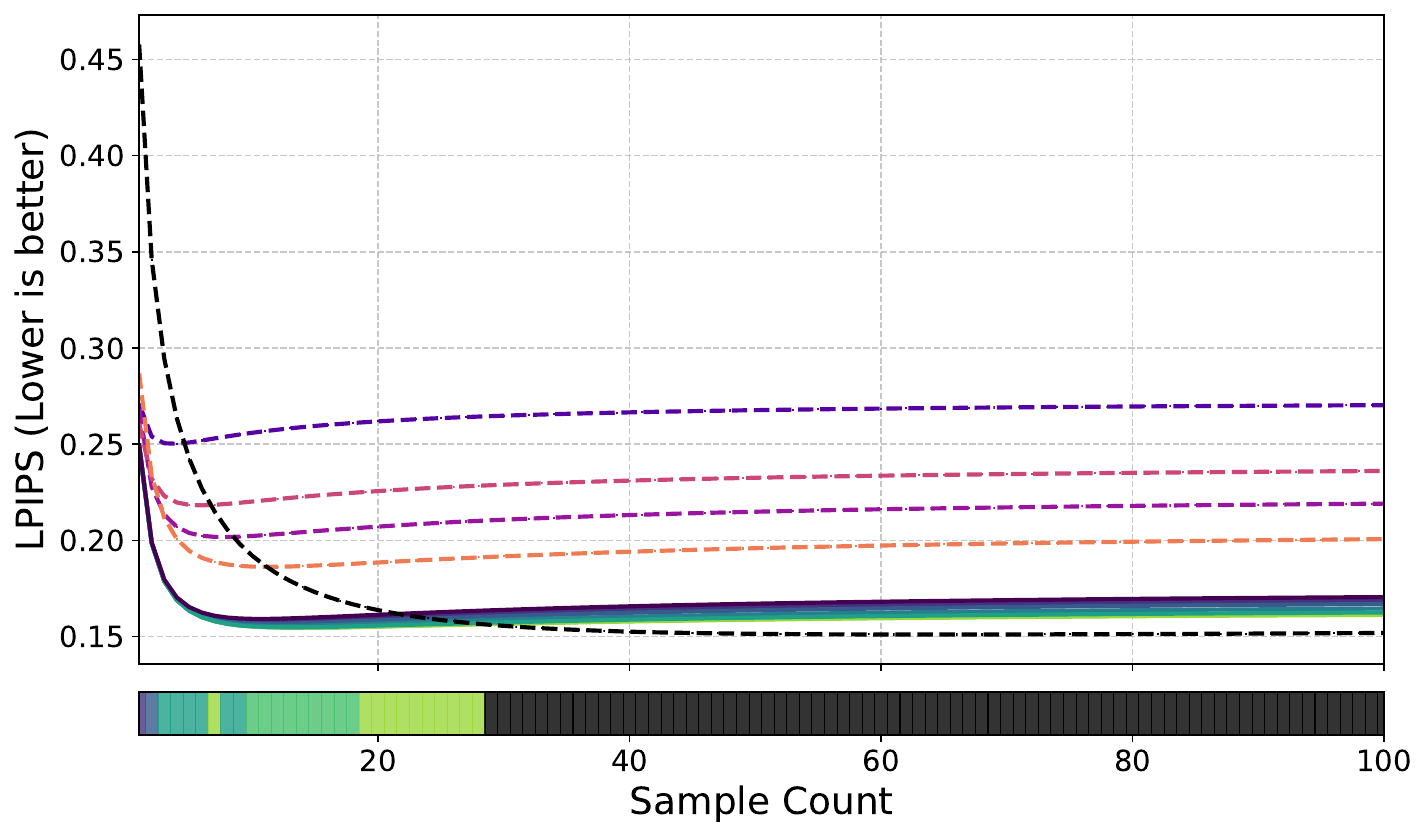}
    \includegraphics[width=0.99\linewidth]{Figures/Reuse_Quality/Legend.png}
    \includegraphics[trim={0 0 0 0.75cm},clip,width=0.99\linewidth]{Figures/Reuse_Quality/averaged_metrics_plot_colormap.pdf}
    \caption{Average residual of L1 score and values for LPIPS on the Mip-NeRF 360 dataset trained with $\lvar=1$, comparing static blur kernels against spatial reuse weights with varying $K$. Log-scale is used for L1 to visualize small differences.}
    \label{fig:ChoiceOfK_continued}
\end{figure}

\begin{figure}[h]
    \centering
    \includegraphics[width=0.6\linewidth]{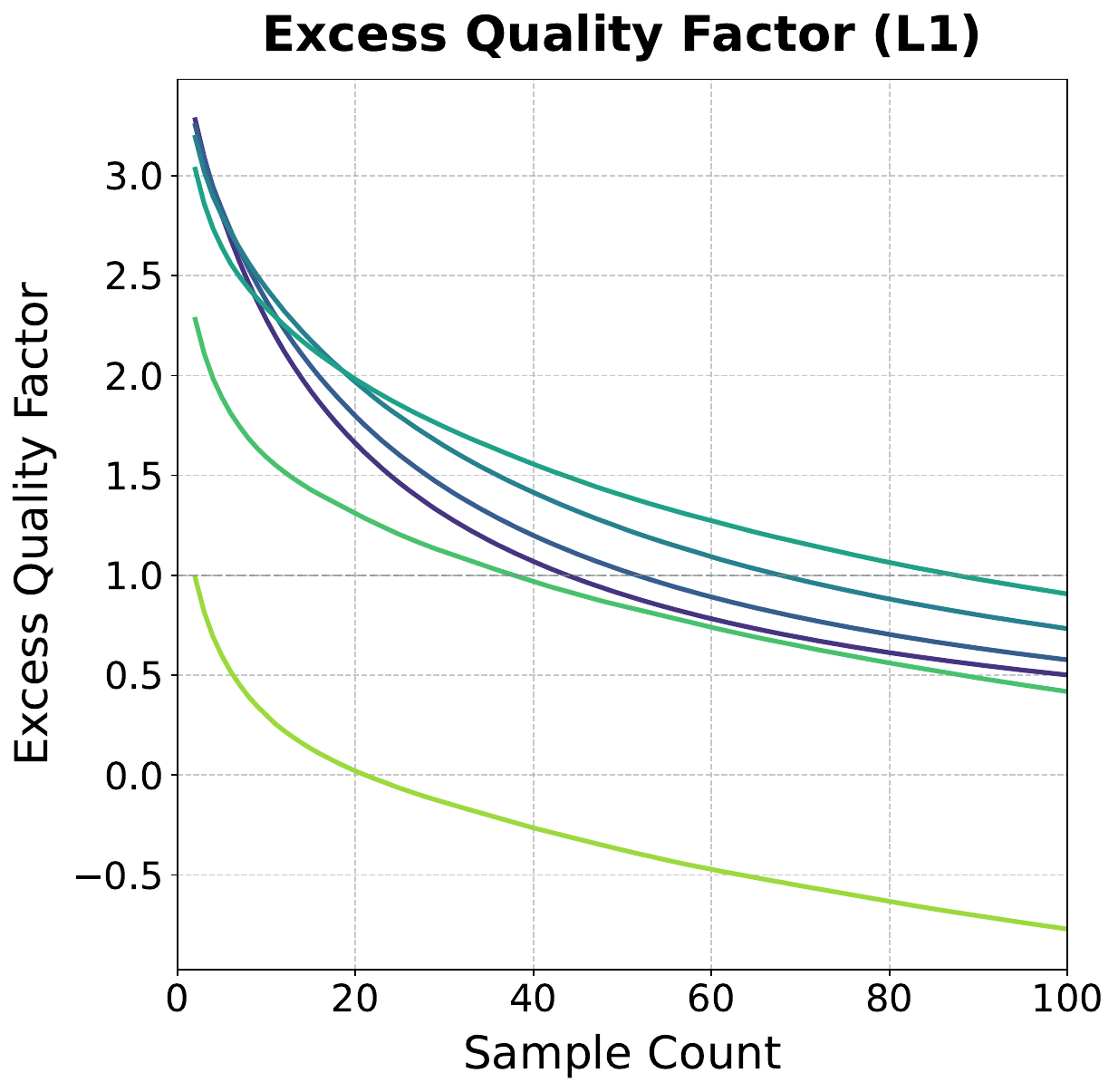}
    \includegraphics[width=0.6\linewidth]{Figures/Reuse_Quality/averaged_metrics_plot_colormap.pdf}
    \caption{Excess Quality Factor on the test views of the Mip-NeRF 360 dataset for the L1 quality metric.}
    \label{fig:FreeLunch_continued}
\end{figure}

\subsection{Pareto Analysis}

\begin{figure*}[t]
    \centering
    \begin{subfigure}[b]{0.46\linewidth}
        \centering
        \includegraphics[width=0.99\linewidth]{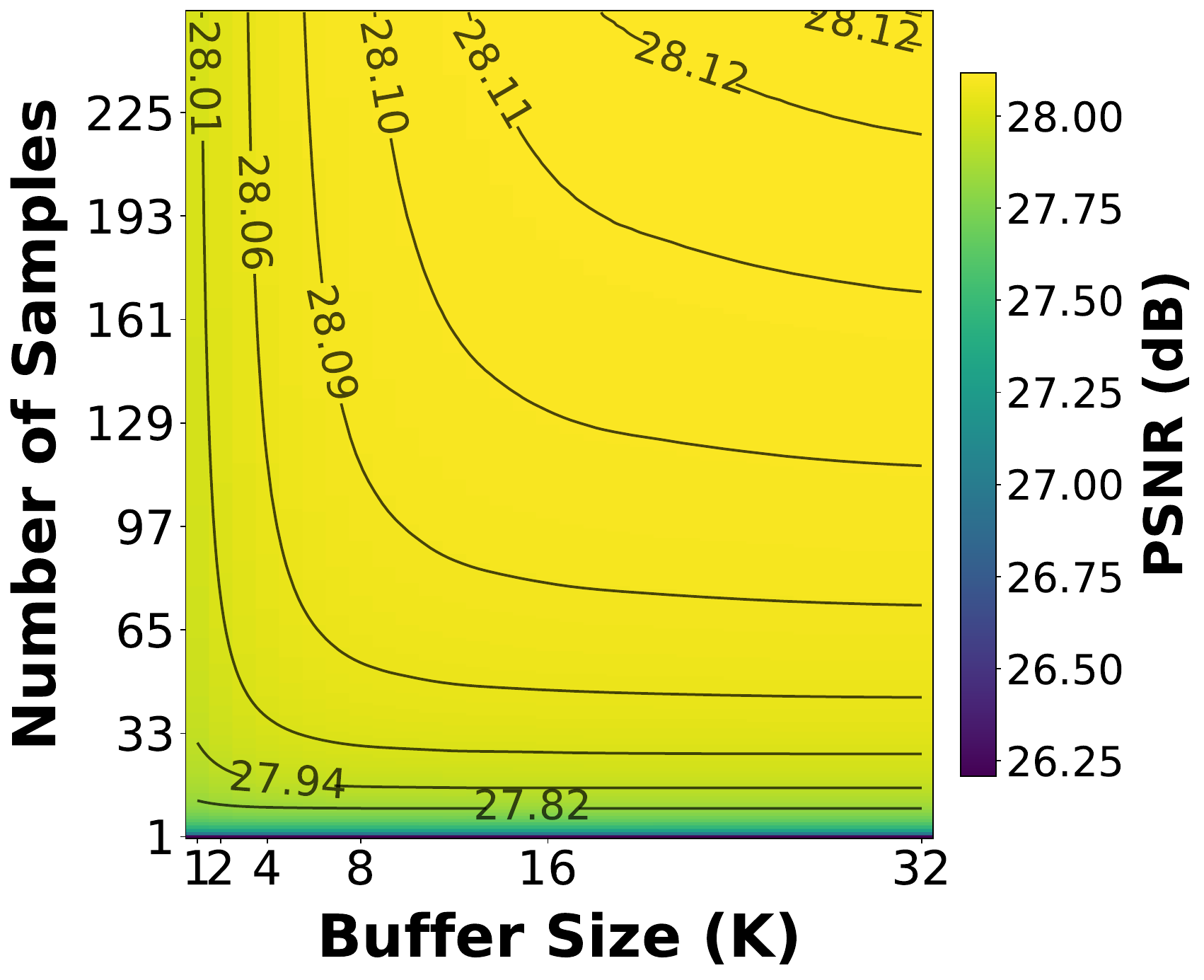}
        \includegraphics[width=0.99\linewidth]{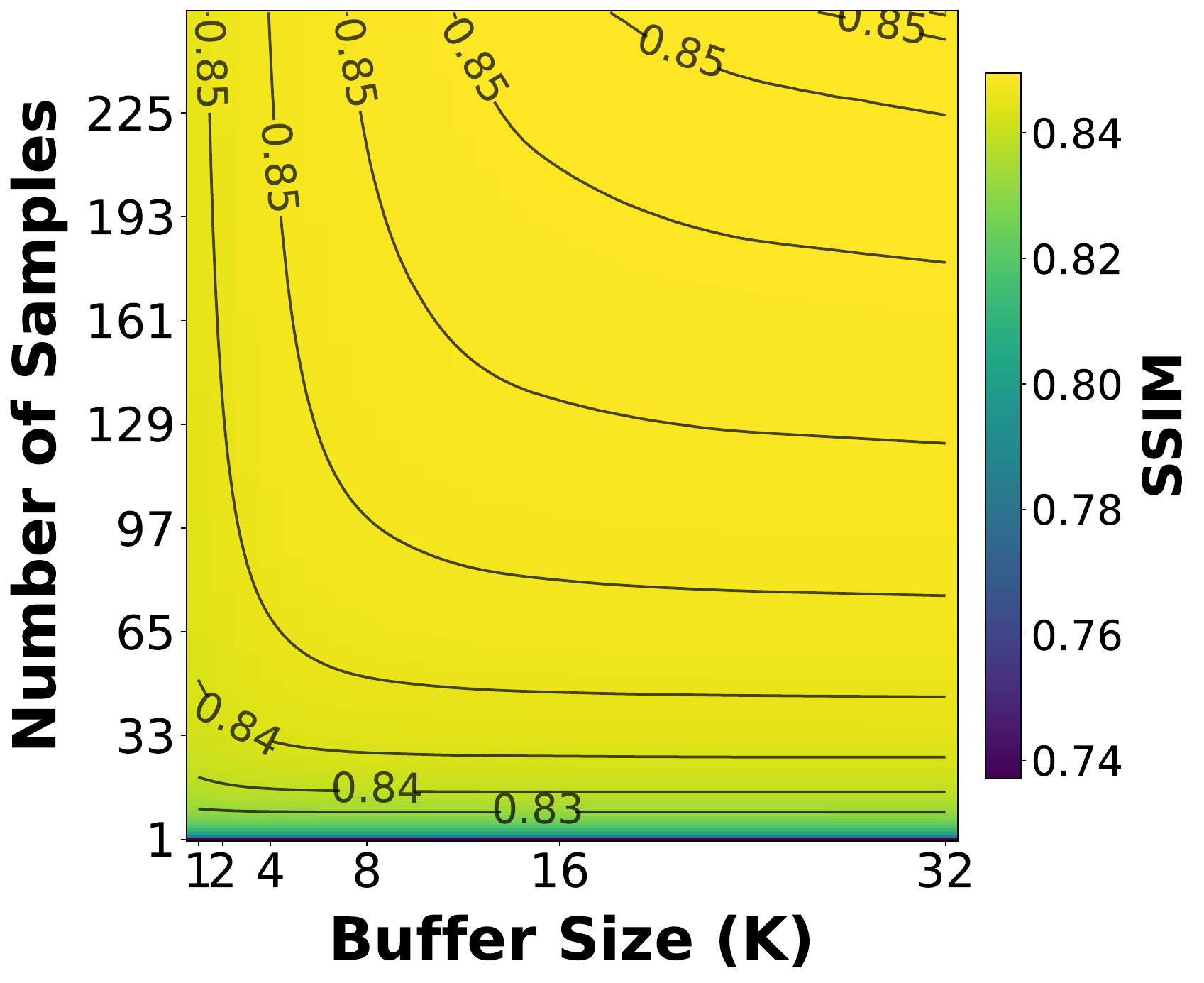}
        \caption{$|\mathcal{N}|=4$}
    \end{subfigure}
    \hfill
    \begin{subfigure}[b]{0.46\linewidth}
        \centering
        \includegraphics[width=0.99\linewidth]{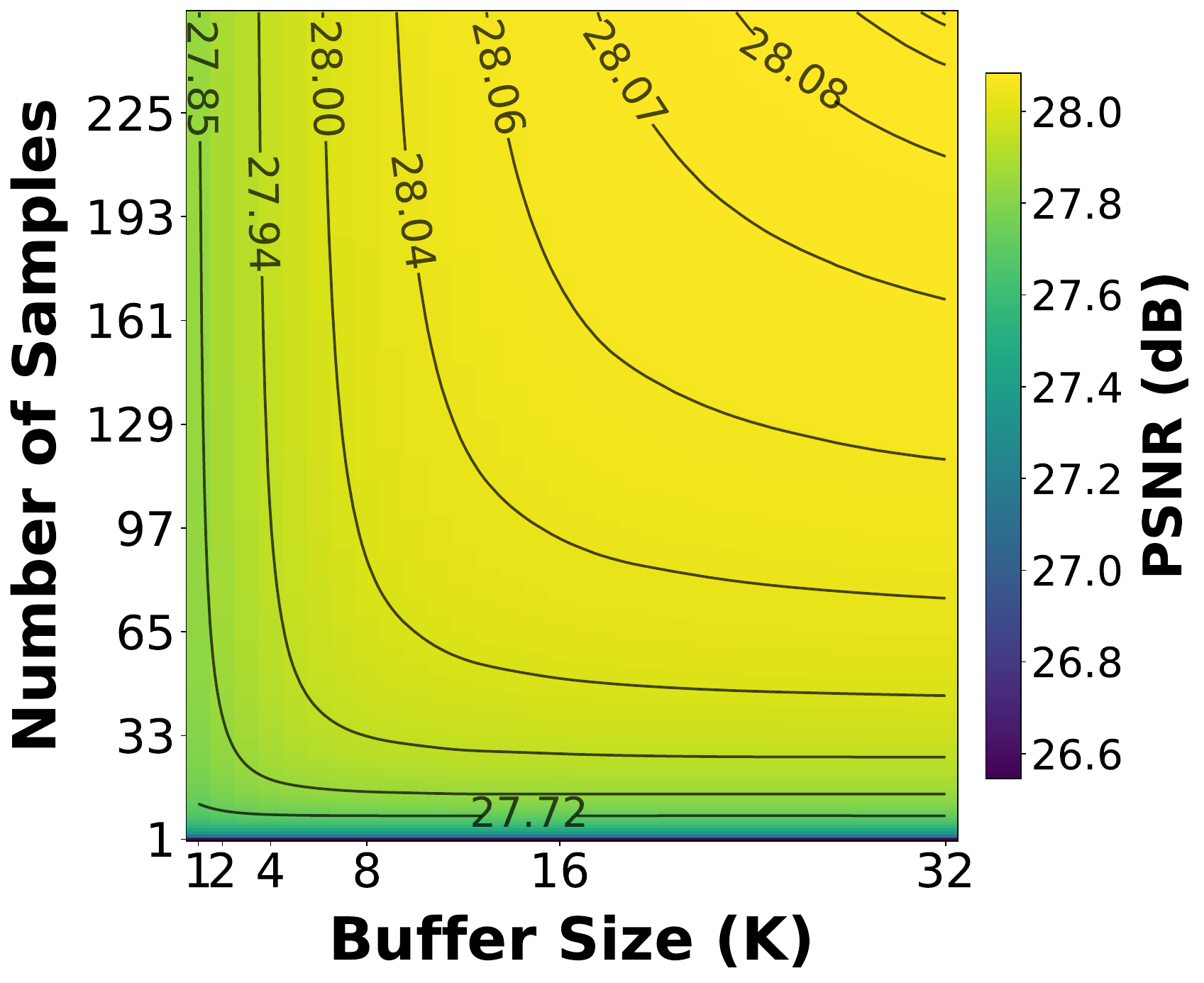}
        \includegraphics[width=0.99\linewidth]{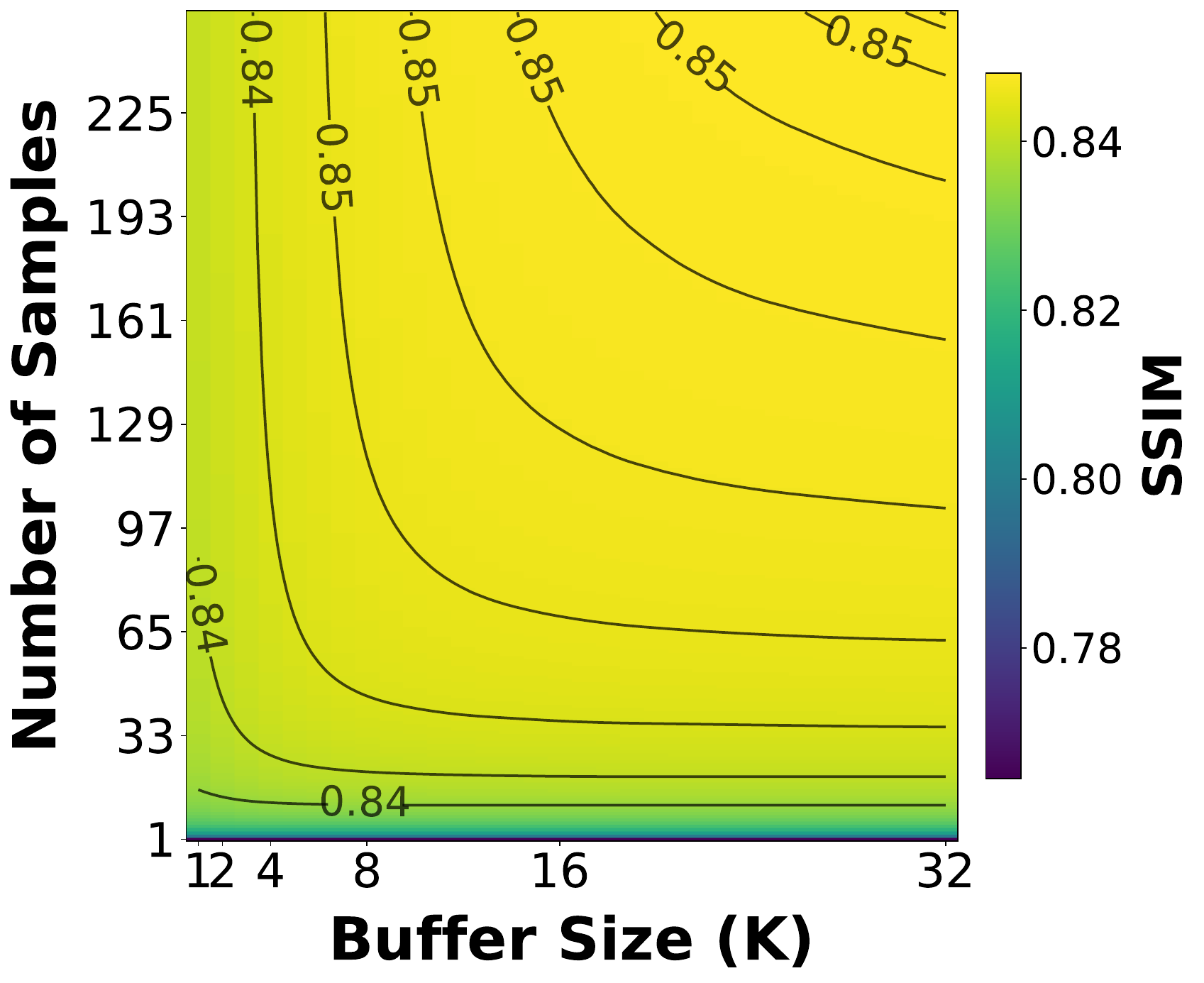}
        \caption{$|\mathcal{N}|=8$}
    \end{subfigure}
    \caption{Heatmaps of quality metrics over sample counts and buffer sizes on Mip-NeRF 360.}
    \label{fig:heatmaps}
\end{figure*}

The parameter $K$ in our spatial reuse component represents specific trade-offs between fast and exact convergence. Each combination of sample count and $K$ yields a specific image quality in a certain amount of sampling time. To evaluate the trade-off, we find the Pareto-optimal combinations of $K$ and sample count, i.e., those combinations that produce a specific image quality (measured across different metrics) in the shortest rendering time. \Cref{fig:pareto} compares the quality/time trade-off against standard \SSplats (our Vulkan reimplementation) with and without a blur kernel. \SSplats has the fastest rendering time, but is immediately overtaken in quality by the noise reduction enabled by using a Gaussian blur. Initially, the blur kernel wins out due to its low overhead, but all quality metrics agree that after 5 samples its quality is superseded by spatial reuse with a small buffer. At this point, the more accurate convergence of the spatial reuse weights overcomes the small performance overhead to achieve optimal quality. With more time to render, increasingly large buffer sizes become optimal, which provide better convergence at higher performance cost, until the unbiasedness of pure stochastic rendering catches up in quality, which happens for L1 at 136 samples and for LPIPS at 33. In PSNR and SSIM, a large reuse buffer remains dominant for 256+ samples. In summary, the optimal value for buffer size scales with the available amount of time for the image to converge. For practical rendering, a buffer size of $K\in[4,8]$ is a reasonable choice, representing the optimal configuration between $\sim 5$--$50$ samples.

The heatmaps in \cref{fig:heatmaps} show a more detailed view of the relationship between buffer size, sample count, and quality: as the number of samples increases, the isolines become increasingly curved, meaning that small sample buffers require rapidly increasing amounts of samples to match the quality of larger buffer sizes. For instance, with $|\mathcal{N}|=4$, 256 samples are required with $K=1$ to achieve the same PSNR quality as just 31 samples with $K=16$. We also find that except for very low sample counts, a smaller kernel size achieves better quality overall, validating our design decision to choose $|\mathcal{N}|=4$.

\begin{figure}[h]
    \centering
    \includegraphics[width=0.49\linewidth]{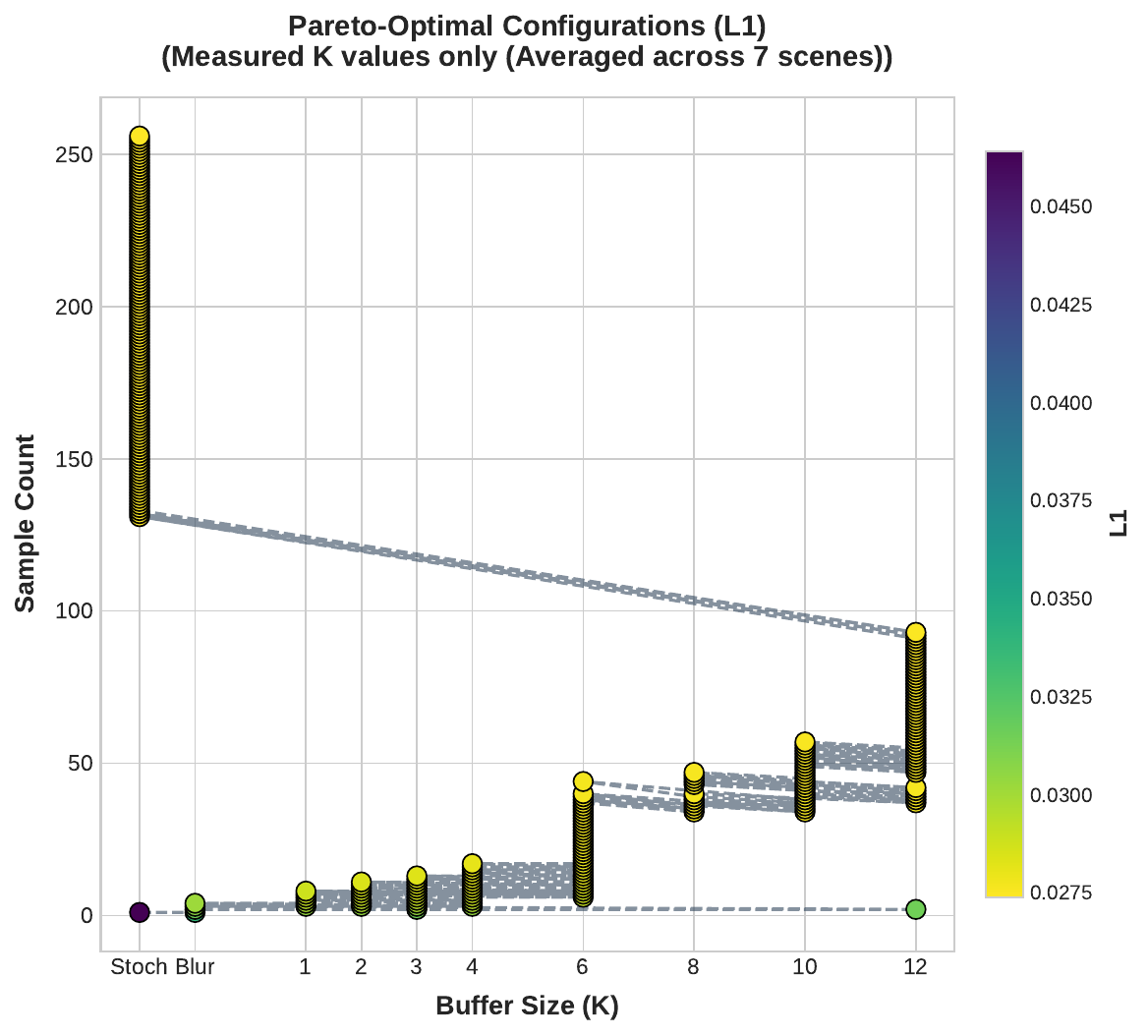}
    \includegraphics[width=0.49\linewidth]{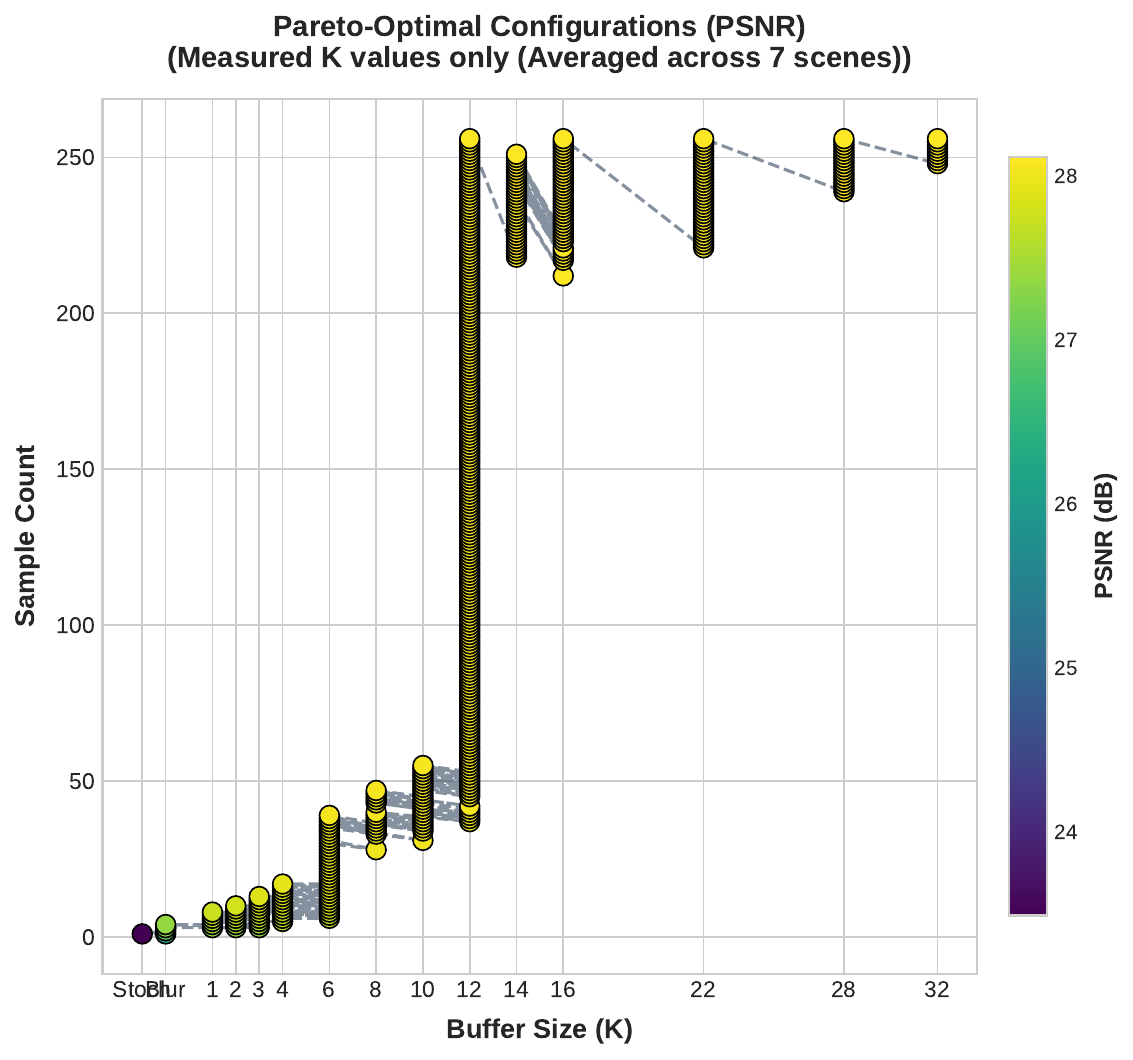}
    \includegraphics[width=0.49\linewidth]{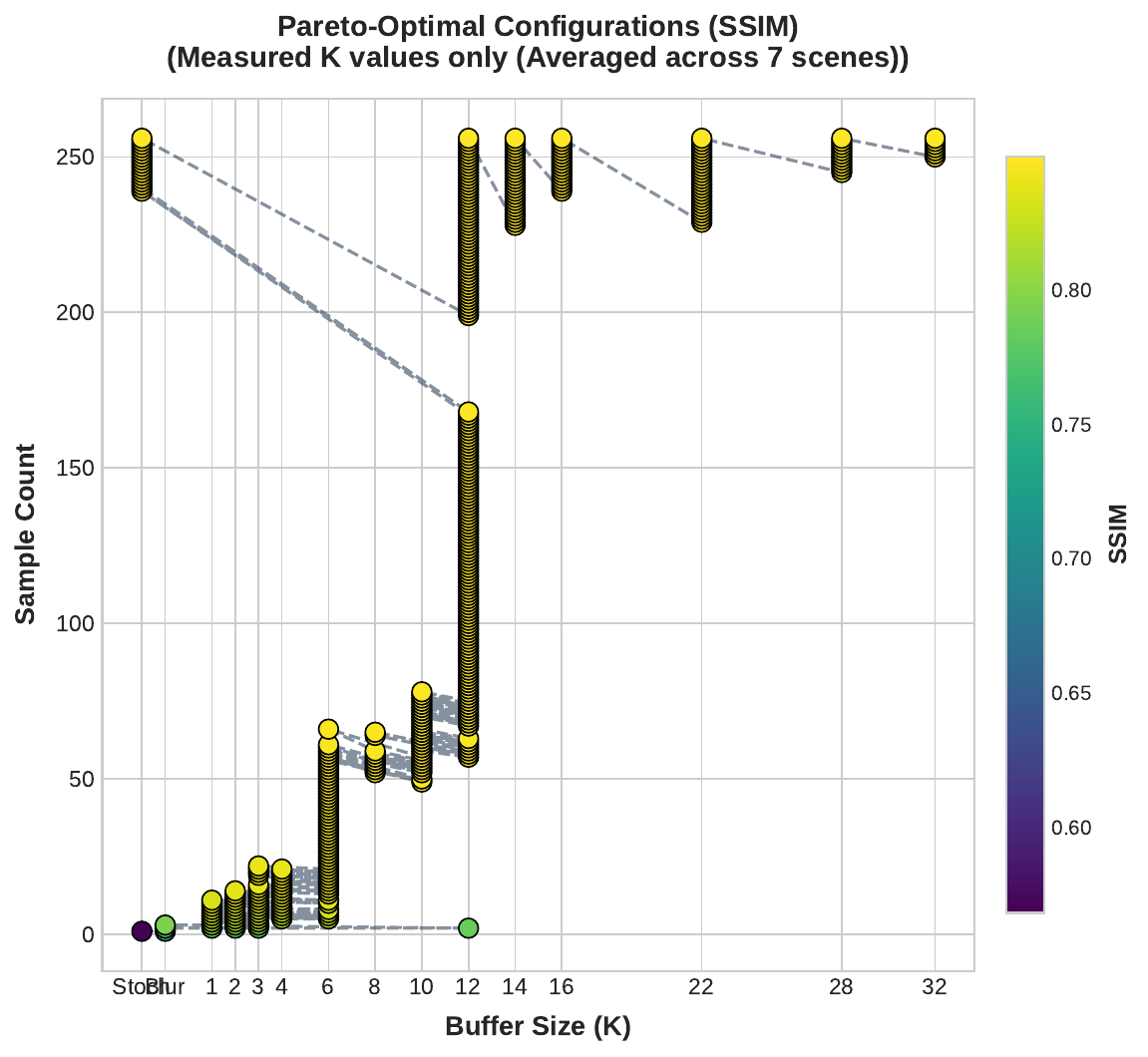}
    \includegraphics[width=0.49\linewidth]{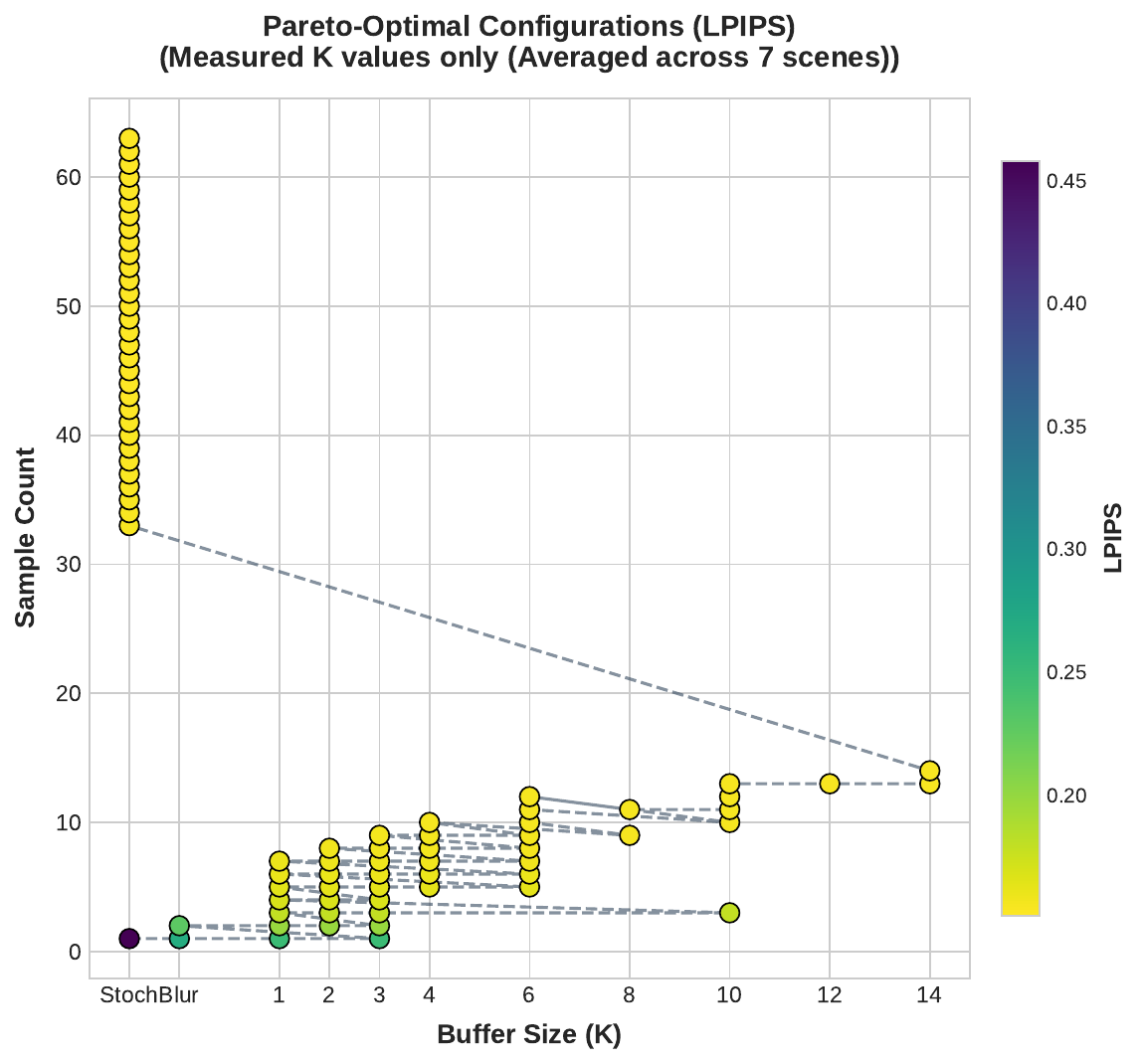}
    \caption{Pareto-optimal combinations of sample count and $K$ for L1, PSNR, SSIM, and LPIPS. \textit{Stoch} denotes stochastic rendering without spatial reuse and \textit{blur} denotes stochastic rendering with a Gaussian cross-blur kernel.}
    \label{fig:pareto}
\end{figure}

\subsection{8-Neighborhood Resampling}
\label{sec:eight}
Spatial resampling weights may be used on neighborhoods of arbitrary size. In \cref{fig:K8}, we compare quality metrics between using a cross kernel ($|\mathcal{N}| = 4$) and a box kernel ($|\mathcal{N}| = 8$) for spatial reuse. While the larger kernel achieves faster convergence for the first 10 samples because the larger sample pool is more effective at reducing variance, it also introduces additional bias, which leads to suboptimal convergence. As larger kernels are also associated with increased performance overhead, we find increasing the kernel size beyond a four-neighborhood to be an overall poor trade-off.

\begin{figure}[h]
    \centering
    \includegraphics[width=0.99\linewidth]{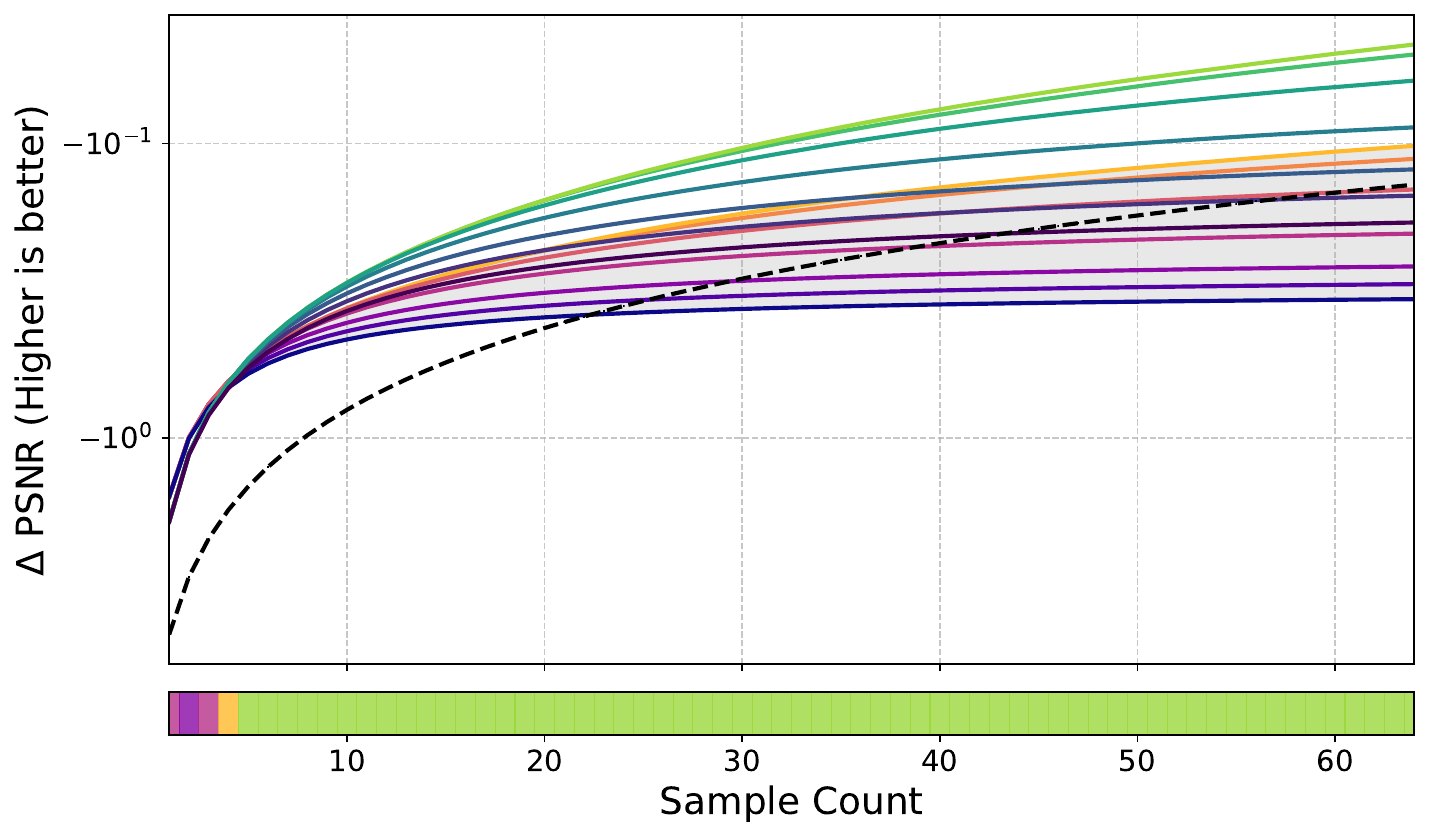}
    \includegraphics[width=0.99\linewidth]{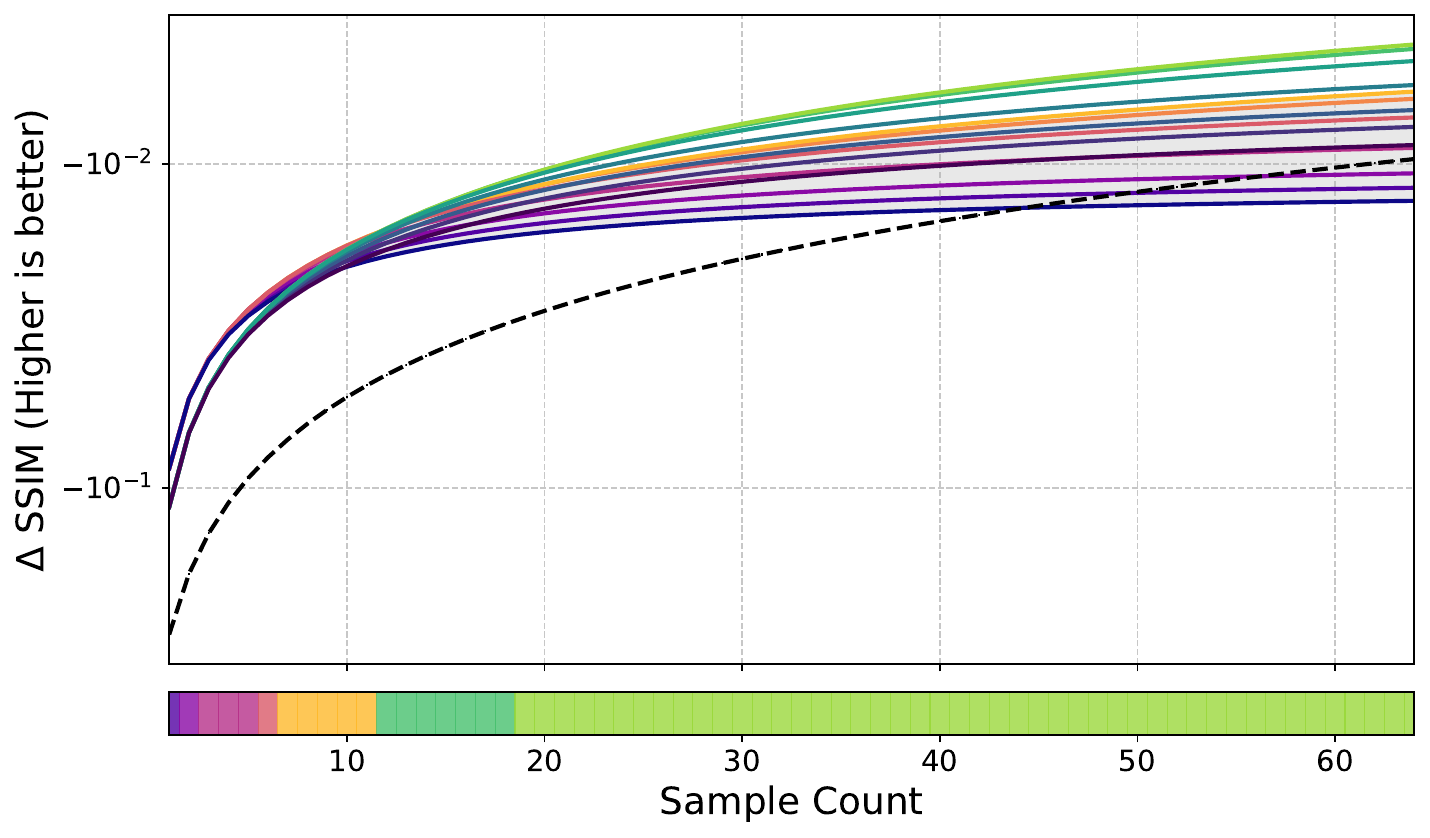}
    \includegraphics[width=0.5\linewidth]{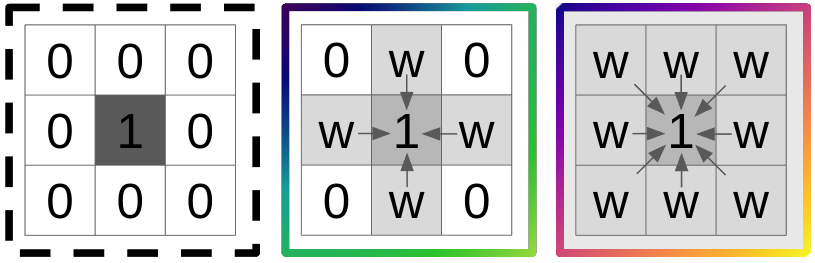}
    \includegraphics[width=0.99\linewidth]{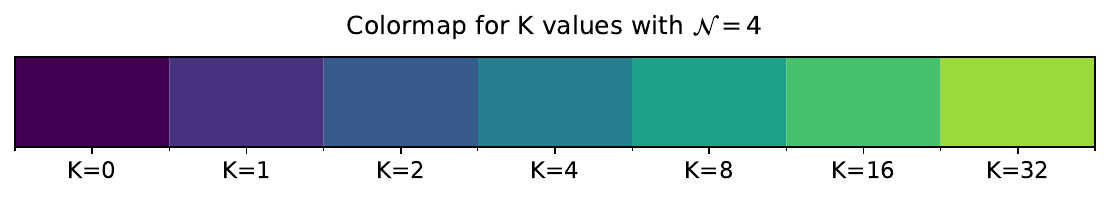}
    \includegraphics[width=0.99\linewidth]{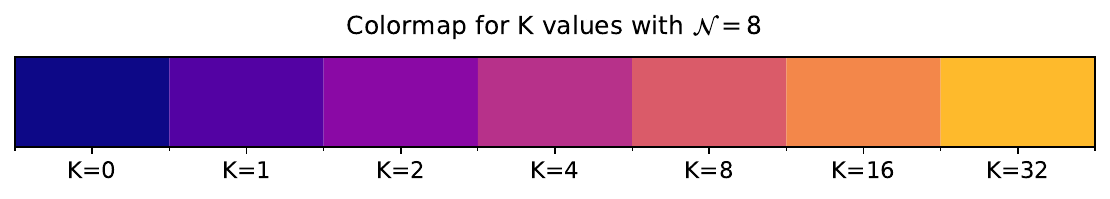}
    \caption{Average residual of test quality metrics on the Mip-NeRF 360 dataset trained with $\lvar=1$, comparing spatial reuse of 4- and 8-neighborhood kernels with varying $K$. Log-scale is used to visualize small differences.}
    \label{fig:K8}
\end{figure}

\section{Bias}
\label{app:bias}
In this section, we examine the sources of bias that stochastic sampling and spatial resampling introduce in comparison to standard 3DGS.

\paragraph{Self-Normalized Importance Sampling.}
Using $\frac{1}{\sum_{n \in \mathcal{N}(x,y)} \wgt{n}{x,y} (r_n)}$ as a normalization factor introduces bias for finite samples, because the denominator is a random variable. This bias decreases as $O(1/N)$ with the number of samples, so the estimator is consistent. As the evaluation demonstrates, the achieved reduction in variance far outweighs the impact of the introduced bias.

\paragraph{Tile-Based Splatting.}
\label{par:tile-based}
In \cref{eq:coefficient_equality}, we make the assumption that $\mathcal{G}(x,y) = \mathcal{G}(n)$ for all $n\in \mathcal{N}(x,y)$. In tile-based rasterization, this holds as long as $n$ is in the same $16\times16$ tile as $(x,y)$. Otherwise, it is possible for a Gaussian to be sampled on a border pixel and then propagated to a neighboring tile, which would not use this Gaussian in deterministic 3DGS rendering. However, this also implies that the maximal $\alpha$ value of that Gaussian in the neighboring tile is less than $\frac{1}{256}$, so that its sampling probability there is $P_{x,y}(r_n) \le \alpha_{r_n}(x,y) < \frac{1}{256}$, which bounds its expected contribution. Because these cases are rare and are guaranteed to introduce only negligible bias, it is not worth the overhead to explicitly check for them.

\paragraph{Early Exit.}
In standard 3DGS, the blending procedure terminates once transmittance falls below a certain threshold (typically $0.0001$) and all subsequent Gaussians are skipped. While this is not the case in \SSplats or \GPS (GPS), the maximal difference caused is strictly bounded by the threshold value and is thus negligible.

\paragraph{Finite History Buffer.}
\label{par:finite-buffer}
\Cref{eq:weight} requires the occlusion product over \emph{all} Gaussians in front of the sample $r_n$ at the target pixel, whereas \cref{alg:KBuffer} evaluates it only over the at most $K$ deduplicated Gaussians stored in the buffer $\mathcal{K}_n$ of the source pixel. An occluder that has not been sampled at the pixel within the buffer's lifetime is missing from the product; omitting its factor $\occ{k}{n}{x,y}$ amounts to assuming that its alpha value $\alpha_k$ is identical at $(x,y)$ and at $n$. The resulting weight error is therefore largest for neighbor samples whose occluders differ strongly between the two pixels, i.e., at depth discontinuities and along the silhouettes of thin structures, and it decreases with $K$ (\cref{fig:ChoiceOfK}); as $K \rightarrow \infty$ the stored set contains every occluder with probability approaching one and the weights become exact (\cref{sec:spatial}). Because the buffer is deduplicated, frequently sampled foreground primitives do not evict rare occluders, so the missing factors are predominantly those of low-opacity Gaussians with $\occ{k}{n}{x,y} \approx 1$.

\section{Gaussian Point Splatting Evaluation} 
\label{GPS}
The stochastic rendering approach in \GPS \cite{rijsdijk2026gaussian} replaces traditional alpha-blended rasterization with a massively parallel stochastic point-sampling technique. Instead of sorting and sequentially evaluating semi-transparent Gaussians, the renderer samples opaque, pixel-sized points and splats them directly into a 64-bit atomic depth buffer. Because threads sample points independently, multiple points from the same Gaussian can collide on the same pixel, which would normally reduce the effective opacity. To counteract this, the method treats the sampling as a Poisson point process and dynamically sets the number of points to sample, $n_g$, using a Poisson distribution with a rate parameter $\Lambda$. The authors derive a corrected probability density function, $f(q)=-\frac{1}{\Lambda A}\ln(1-\alpha(q))$, which concentrates more points toward the center of the Gaussian. This mathematical correction ensures that the probability of splatting exactly zero points into a subpixel of area $A$ is precisely $1-\alpha(q)$, perfectly preserving the original Gaussian's intended opacity while achieving high scalability.

The rendering procedure of \GPS produces stochastic OIT samples almost identical to \SSplats and achieves very similar base quality. As such, all of our components can be used with GPS as a ``sampling backbone'', while also benefiting from its scalability. \Cref{fig:GPS_Eval} shows a comparison between \SSplats, GPS, and our spatial reuse applied to GPS samples. Just as with \SSplats, sample reuse leads to significantly faster convergence.

\begin{figure}[h]
    \centering
    \includegraphics[width=0.8\linewidth]{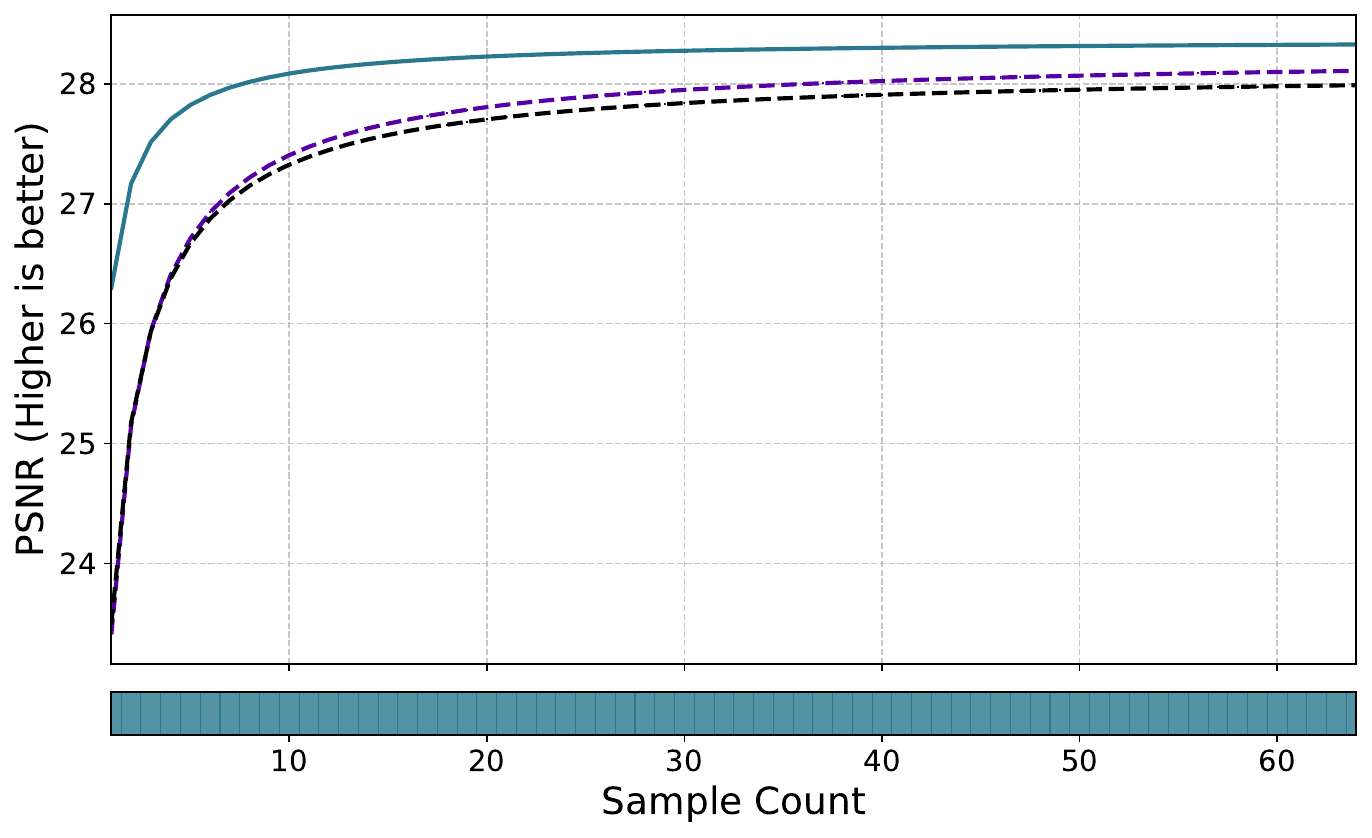}\\
    \includegraphics[width=0.8\linewidth]{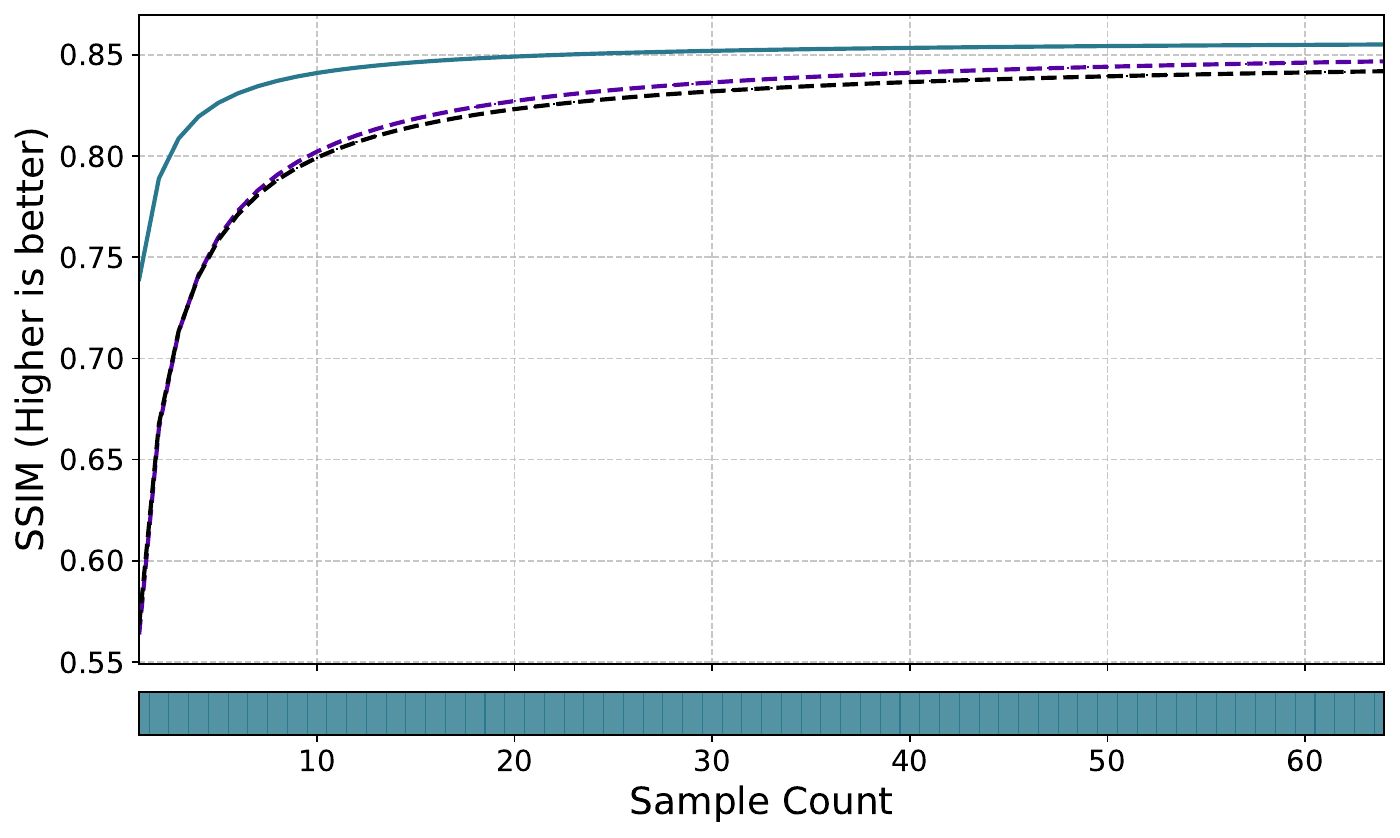}
    \includegraphics[width=0.75\linewidth]{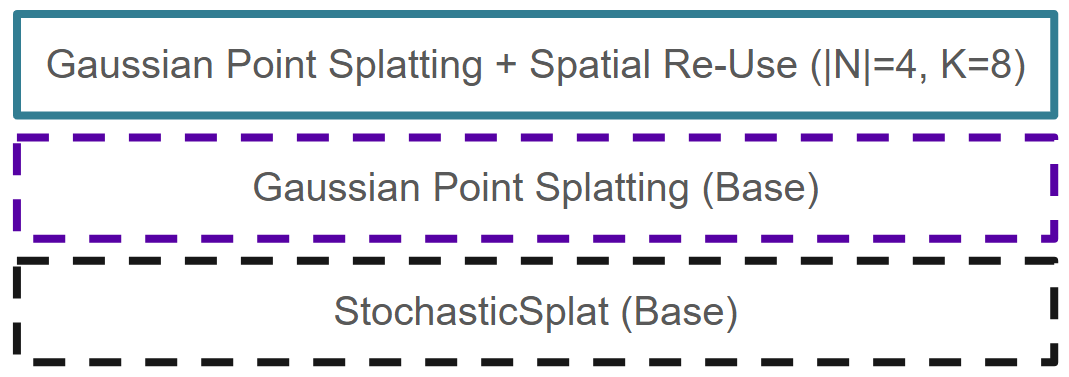}
    \caption{Average quality comparison of \SSplats and Gaussian Point Splatting with and without spatial reuse on the Mip-NeRF 360 dataset. Models were trained with $\lvar=1$.}
    \label{fig:GPS_Eval}
\end{figure}

\section{Spatio-Temporal Blue Noise}
\Cref{fig:uniform_vs_stbn} compares white-noise and spatio-temporal blue-noise sampling of the fragment thresholds $u_i$ (\cref{sec:spatiotemporal_resampling}).
\begin{figure}[h]
    \centering
    \begin{subfigure}[b]{0.4\textwidth}
        \centering
        \includegraphics[width=\textwidth]{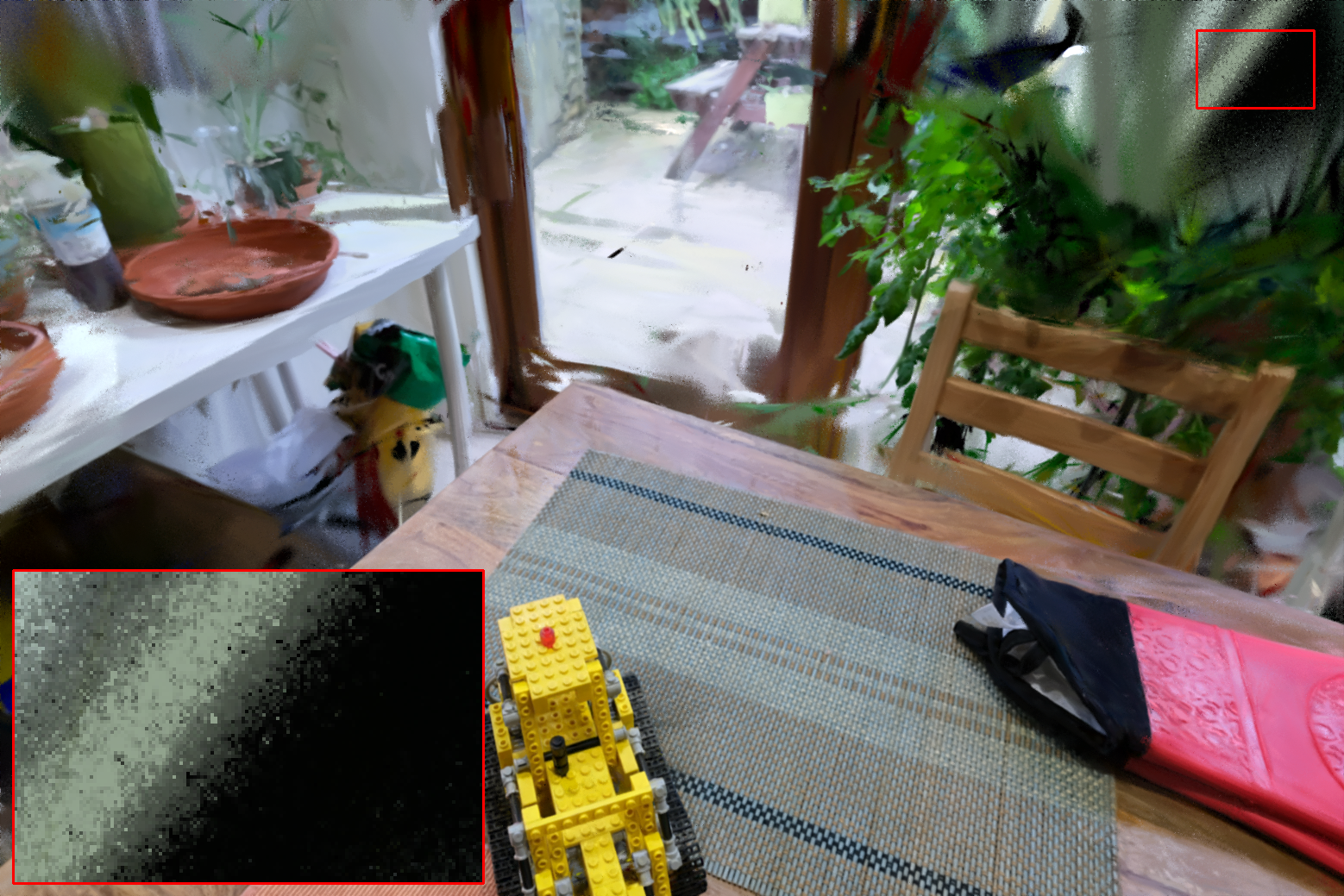}
        \caption{Uniform random noise (PCG32)}
        \label{fig:uniform_noise}
    \end{subfigure}
    \hfill
    \begin{subfigure}[b]{0.4\textwidth}
        \centering
        \includegraphics[width=\textwidth]{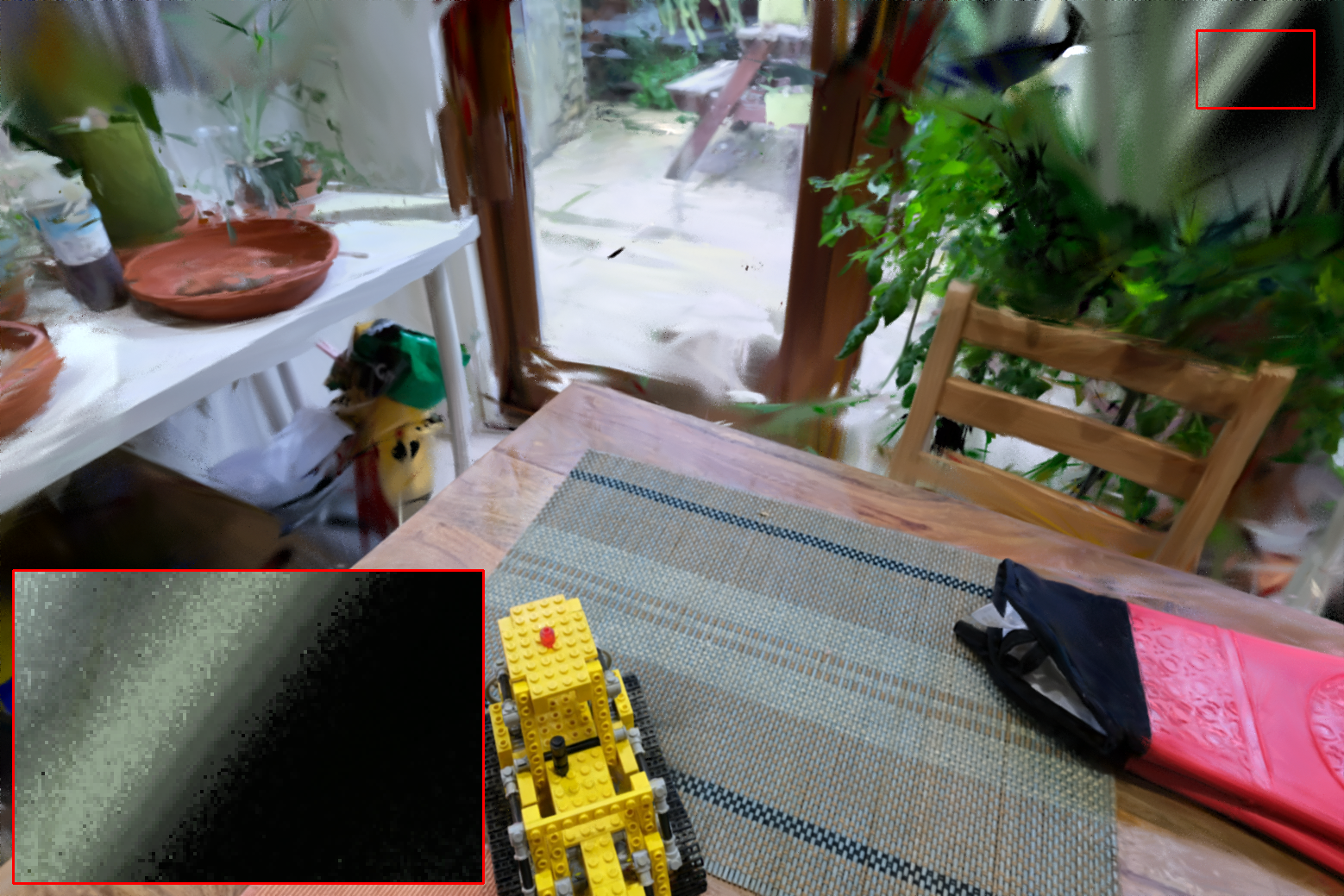}
        \caption{Spatio-temporal Blue Noise ($128\times 128\times 64$)}
        \label{fig:stbn_noise}
    \end{subfigure}
    \caption{Comparison of uniform random noise sampling (PCG32) against per-primitive spatio-temporal blue noise sampling.}
    \label{fig:uniform_vs_stbn}
\end{figure}

\section{Gradients of the Color Variance Loss}
\label{app:Math}

The color variance loss $\Lvar$ represents the average weighted RGB variance for each pixel:
\begin{equation}
    \Lvar = \frac{1}{H\cdot W}\sum_{x=1}^{W} \sum_{y=1}^{H} \sum_{i \in \mathcal{G}(x,y)} P_{x,y}(i) \cdot \lVert c_i - C^{\mathrm{gt}}_{x,y} \rVert^2 \, .
\end{equation}
We derive the gradients of the per-pixel term $\mathcal{L}_{x,y} = \sum_{i \in \mathcal{G}(x,y)} P_{x,y}(i) \, \lVert c_i - C^{\mathrm{gt}}_{x,y} \rVert^2$; the gradient of $\Lvar$ is their average over all pixels.
The gradients with respect to view-dependent color can be computed using the chain rule (similar to the derivation in~\cite{radl2026confidence}):
\begin{equation}
\frac{\partial \mathcal{L}_{x,y}}{\partial c_i} = 2 P_{x,y}(i) (c_i - C^{\mathrm{gt}}_{x,y}) \, .
\end{equation}

We can compute the gradient with regard to $\alpha_i(x,y)$ as follows, with Gaussians indexed in depth order as in deterministic training:
\begin{align}
    \frac{\partial \mathcal{L}_{x,y}}{\partial P_{x,y}(i)} &= \lVert c_i - C^{\mathrm{gt}}_{x,y} \rVert^2 \, , \\
    \frac{\partial P_{x,y}(i)}{\partial \alpha_i(x,y)} &= \prod_{j=1}^{i-1} (1 - \alpha_j(x,y)) = T_i(x,y) \, , \\
    \begin{split}
    \frac{\partial \mathcal{L}_{x,y}}{\partial \alpha_i(x,y)} &= \frac{\partial \mathcal{L}_{x,y}}{\partial P_{x,y}(i)} \cdot \frac{\partial P_{x,y}(i)}{\partial \alpha_i(x,y)} \\
    &= \lVert c_i - C^{\mathrm{gt}}_{x,y} \rVert^2 \, T_i(x,y) \, .
    \end{split}
\end{align}

This gradient does not account for the effect of $\alpha_i(x,y)$ on subsequent Gaussians $j > i$, which include $(1-\alpha_i(x,y))$ in their weight $P_{x,y}(j)$. Because $\alpha_i(x,y) < 1$ by definition, $\frac{\partial P_{x,y}(j)}{\partial \alpha_i(x,y)} = -\frac{P_{x,y}(j)}{1 - \alpha_i(x,y)}$ holds, giving, with $j$ ranging over all Gaussians in $\mathcal{G}(x,y)$ behind $i$:
\begin{equation}
\begin{split}
    &\sum_{j > i} \lVert c_j - C^{\mathrm{gt}}_{x,y} \rVert^2 \left(-\frac{P_{x,y}(j)}{1 - \alpha_i(x,y)}\right) \\
    &\quad = -\frac{1}{1 - \alpha_i(x,y)} \sum_{j > i} P_{x,y}(j) \, \lVert c_j - C^{\mathrm{gt}}_{x,y} \rVert^2 \, .
\end{split}
\end{equation}

The final gradient for $\alpha_i(x,y)$ is then:
\begin{equation}
\begin{split}
    \frac{\partial \mathcal{L}_{x,y}}{\partial \alpha_i(x,y)} = {} & \lVert c_i - C^{\mathrm{gt}}_{x,y} \rVert^2 \, T_i(x,y) \\
    & - \frac{1}{1 - \alpha_i(x,y)} \sum_{j > i} P_{x,y}(j) \, \lVert c_j - C^{\mathrm{gt}}_{x,y} \rVert^2 \, .
\end{split}
\end{equation}

\section{Spatio-Temporal Resampling}
\label{app:spatiotemporal}

\subsection{Proofs}
\label{sec:proofs}

Here we include the full proof for our main theoretical result.

\emaReduction*

\begin{proof}
For brevity, write $\mathcal{T}^t$ for the set of frames contributing at time $t$, so that $\mathcal{T}^t = \mathcal{T}^{t-1} \cup \{t\}$, write $c^{\tilde{t}}_n = c_{r^{\tilde{t}}_n}$, and omit the sample arguments $(r^{\tilde{t}}_n)$ of the weights. Define the discounted color mass:
\begin{equation*}
S^t_{x,y} = \sum_{n \in \mathcal{N}(x,y)} \sum_{\tilde{t} \in \mathcal{T}^t} \Big(\prod_{u=\tilde{t}+1}^{t} \delta^u_{x,y}\Big) \wgt{n}{x,y}^{\tilde{t}} c^{\tilde{t}}_{n} \, ,
\end{equation*}
so that, with $D^t_{x,y}$ as defined, \cref{eq:discount} yields $\hat{C}^t_{x,y} = S^t_{x,y} / D^t_{x,y}$.
Splitting off the term $\tilde{t} = t$ (whose product is empty and equals one) and factoring $\delta^t_{x,y}$ out of every remaining product yields the recursions:
\begin{align*}
S^t_{x,y} &= \delta^t_{x,y} S^{t-1}_{x,y} + \sum_{n \in \mathcal{N}(x,y)} \wgt{n}{x,y}^t c^t_n \, , \\
D^t_{x,y} &= \delta^t_{x,y} D^{t-1}_{x,y} + \sum_{n \in \mathcal{N}(x,y)} \wgt{n}{x,y}^t \, .
\end{align*}
Dividing the first equation by $D^t_{x,y}$ and using $S^{t-1}_{x,y} = D^{t-1}_{x,y} \hat{C}^{t-1}_{x,y}$:
\begin{align*}
\hat{C}^t_{x,y}
&= \frac{\delta^t_{x,y} D^{t-1}_{x,y}}{D^t_{x,y}} \hat{C}^{t-1}_{x,y} \\
&\quad + \frac{\sum_{n} \wgt{n}{x,y}^t}{D^t_{x,y}} \cdot \frac{\sum_{n} \wgt{n}{x,y}^t c^t_n}{\sum_{n} \wgt{n}{x,y}^t} \\
&= \frac{\delta^t_{x,y} D^{t-1}_{x,y}}{D^t_{x,y}} \hat{C}^{t-1}_{x,y} + \frac{\sum_{n} \wgt{n}{x,y}^t}{D^t_{x,y}} \Cspat[t]{x,y} \, ,
\end{align*}
where the last factor is \cref{eq:SpatialReuse} evaluated in frame $t$. The two coefficients are non-negative and, by the recursion for $D^t_{x,y}$, sum to $\big(\delta^t_{x,y} D^{t-1}_{x,y} + \sum_n \wgt{n}{x,y}^t\big) / D^t_{x,y} = 1$, proving the combination is convex. For a moving camera, the same recursion is applied along the motion trajectory (\cref{par:camera_motion}).
\end{proof}

\subsection{Additional Ablations}

\begin{table*}[h]
\centering
\small
\setlength{\tabcolsep}{6.5pt}
\caption{\textbf{Per-scene temporal consistency tPSNR (dB).} Evaluated on 7 Mip-NeRF 360 scenes (3 trajectories each).}
\label{tab:per_scene_factorial_tpsnr}
\begin{tabular}{lcccccccc}
\toprule
\textbf{Configuration} & \textbf{Bicycle} & \textbf{Bonsai} & \textbf{Counter} & \textbf{Garden} & \textbf{Kitchen} & \textbf{Room} & \textbf{Stump} & \textbf{Overall Mean} \\
\midrule
\multicolumn{9}{l}{\textit{Standard 3DGS} ($\lvar = 0$)} \\
\quad No Reuse             & 13.230 & 16.230 & 15.780 & 13.343 & 15.610 & 18.183 & 14.267 & 15.235 \\
\quad Spatial Reuse         & 19.140 & 23.077 & 22.643 & 19.080 & 22.377 & 25.083 & 20.640 & 21.720 \\
\quad Temporal Reuse        & 19.767 & 23.020 & 22.427 & 20.390 & 22.857 & 25.757 & 21.210 & 22.204 \\
\quad Spatio-Temporal Reuse & 23.757 & 28.140 & 27.963 & 24.650 & 28.377 & 31.513 & 25.913 & 27.188 \\
\midrule
\multicolumn{9}{l}{\textit{Variance Regularized} ($\lvar = 1$)} \\
\quad No Reuse             & 20.767 & 23.770 & 23.210 & 21.587 & 22.580 & 24.433 & 20.617 & 22.423 \\
\quad Spatial Reuse         & 26.007 & 30.070 & \second{29.573} & 26.507 & 28.983 & 31.077 & 26.657 & 28.410 \\
\quad Temporal Reuse        & \second{26.473} & \second{30.263} & 29.563 & \second{28.063} & \second{29.643} & \second{31.933} & \second{26.917} & \second{28.980} \\
\quad Spatio-Temporal Reuse & \first{28.540} & \first{34.067} & \first{33.687} & \first{30.070} & \first{33.830} & \first{36.533} & \first{30.417} & \first{32.449} \\
\bottomrule
\end{tabular}
\end{table*}

\vspace{1em}

\begin{table*}[h]
\centering
\small
\setlength{\tabcolsep}{6.5pt}
\caption{\textbf{Per-scene PSNR (dB).} Evaluated on 7 Mip-NeRF 360 scenes (3 trajectories each).}
\label{tab:per_scene_factorial_psnr}
\begin{tabular}{lcccccccc}
\toprule
\textbf{Configuration} & \textbf{Bicycle} & \textbf{Bonsai} & \textbf{Counter} & \textbf{Garden} & \textbf{Kitchen} & \textbf{Room} & \textbf{Stump} & \textbf{Overall Mean} \\
\midrule
\multicolumn{9}{l}{\textit{Standard 3DGS} ($\lvar = 0$)} \\
\quad No Reuse             & 16.340 & 19.307 & 18.880 & 16.397 & 18.750 & 21.313 & 17.367 & 18.336 \\
\quad Spatial Reuse         & 22.217 & 26.120 & 25.697 & 22.063 & 25.490 & 28.157 & 23.673 & 24.774 \\
\quad Temporal Reuse        & 22.727 & 25.980 & 25.590 & 23.410 & 25.957 & 28.803 & 24.123 & 25.227 \\
\quad Spatio-Temporal Reuse & 26.453 & 30.837 & 30.867 & 27.440 & 31.287 & 34.273 & 28.463 & 29.946 \\
\midrule
\multicolumn{9}{l}{\textit{Variance Regularized} ($\lvar = 1$)} \\
\quad No Reuse             & 23.863 & 26.877 & 26.333 & 24.637 & 25.767 & 27.730 & 23.720 & 25.561 \\
\quad Spatial Reuse         & 29.043 & 33.070 & 32.613 & 29.443 & 32.043 & 34.187 & 29.657 & 31.437 \\
\quad Temporal Reuse        & \second{29.337} & \second{33.220} & \second{32.690} & \second{31.013} & \second{32.717} & \second{34.993} & \second{29.703} & \second{31.953} \\
\quad Spatio-Temporal Reuse & \first{31.293} & \first{36.573} & \first{36.450} & \first{32.787} & \first{36.557} & \first{39.063} & \first{32.873} & \first{35.085} \\
\bottomrule
\end{tabular}
\end{table*}

\vspace{1em}

\begin{table*}[h]
\centering
\small
\setlength{\tabcolsep}{6.5pt}
\caption{\textbf{Per-scene SSIM $\uparrow$.} Evaluated on 7 Mip-NeRF 360 scenes (3 trajectories each).}
\label{tab:per_scene_factorial_ssim}
\begin{tabular}{lcccccccc}
\toprule
\textbf{Configuration} & \textbf{Bicycle} & \textbf{Bonsai} & \textbf{Counter} & \textbf{Garden} & \textbf{Kitchen} & \textbf{Room} & \textbf{Stump} & \textbf{Overall Mean} \\
\midrule
\multicolumn{9}{l}{\textit{Standard 3DGS} ($\lvar = 0$)} \\
\quad No Reuse             & 0.272 & 0.293 & 0.301 & 0.276 & 0.297 & 0.344 & 0.265 & 0.293 \\
\quad Spatial Reuse         & 0.538 & 0.612 & 0.607 & 0.538 & 0.601 & 0.646 & 0.559 & 0.586 \\
\quad Temporal Reuse        & 0.579 & 0.625 & 0.614 & 0.609 & 0.640 & 0.684 & 0.582 & 0.619 \\
\quad Spatio-Temporal Reuse & 0.767 & 0.832 & 0.826 & 0.791 & 0.843 & 0.872 & 0.785 & 0.817 \\
\midrule
\multicolumn{9}{l}{\textit{Variance Regularized} ($\lvar = 1$)} \\
\quad No Reuse             & 0.598 & 0.664 & 0.628 & 0.640 & 0.635 & 0.615 & 0.513 & 0.613 \\
\quad Spatial Reuse         & 0.825 & 0.887 & 0.864 & 0.838 & 0.856 & 0.855 & 0.792 & 0.845 \\
\quad Temporal Reuse        & \second{0.849} & \second{0.896} & \second{0.865} & \second{0.880} & \second{0.879} & \second{0.874} & \second{0.799} & \second{0.863} \\
\quad Spatio-Temporal Reuse & \first{0.915} & \first{0.961} & \first{0.948} & \first{0.926} & \first{0.951} & \first{0.952} & \first{0.904} & \first{0.937} \\
\bottomrule
\end{tabular}
\end{table*}

\vspace{1em}

\begin{table*}[h]
\centering
\small
\setlength{\tabcolsep}{6.5pt}
\caption{\textbf{Per-scene LPIPS $\downarrow$.} Evaluated on 7 Mip-NeRF 360 scenes (3 trajectories each).}
\label{tab:per_scene_factorial_lpips}
\begin{tabular}{lcccccccc}
\toprule
\textbf{Configuration} & \textbf{Bicycle} & \textbf{Bonsai} & \textbf{Counter} & \textbf{Garden} & \textbf{Kitchen} & \textbf{Room} & \textbf{Stump} & \textbf{Overall Mean} \\
\midrule
\multicolumn{9}{l}{\textit{Standard 3DGS} ($\lvar = 0$)} \\
\quad No Reuse             & 0.636 & 0.693 & 0.690 & 0.584 & 0.677 & 0.687 & 0.701 & 0.667 \\
\quad Spatial Reuse         & 0.478 & 0.531 & 0.518 & 0.417 & 0.499 & 0.523 & 0.522 & 0.498 \\
\quad Temporal Reuse        & 0.504 & 0.569 & 0.542 & 0.421 & 0.514 & 0.538 & 0.564 & 0.522 \\
\quad Spatio-Temporal Reuse & 0.363 & 0.420 & 0.380 & 0.282 & 0.331 & \second{0.382} & \second{0.398} & 0.365 \\
\midrule
\multicolumn{9}{l}{\textit{Variance Regularized} ($\lvar = 1$)} \\
\quad No Reuse             & 0.498 & 0.552 & 0.534 & 0.424 & 0.513 & 0.562 & 0.591 & 0.525 \\
\quad Spatial Reuse         & \second{0.322} & \second{0.372} & \second{0.355} & 0.252 & 0.320 & 0.398 & 0.400 & \second{0.345} \\
\quad Temporal Reuse        & 0.330 & 0.392 & 0.373 & \second{0.245} & \second{0.319} & 0.403 & 0.433 & 0.357 \\
\quad Spatio-Temporal Reuse & \first{0.204} & \first{0.209} & \first{0.208} & \first{0.147} & \first{0.178} & \first{0.243} & \first{0.270} & \first{0.209} \\
\bottomrule
\end{tabular}
\end{table*}

\end{document}